

\documentclass[preprint,number,1p]{elsarticle}



\usepackage[english]{babel}

\usepackage[fig,chgbar]{fmocdmac}

\AtEndPreamble
  {

  }




\newtheorem{definition}{Definition}

\newtheorem{problem}{Problem}
\newtheorem{proposition}{Proposition}
\newtheorem{lemma}{Lemma}

\newtheorem{theorem}{Theorem}
\newtheorem{corollary}{Corollary}


\cmdtxtoparname{SHACL}

\cmdtxtoparname{SCL}
\cmdtxtoparname{MSCL}
\DeclareRobustCommand{\ESCL}
  {\ensuremath{\exists}\SCL}
\DeclareRobustCommand{\USCL}
  {\ensuremath{\forall}\SCL}

\cmdmthsym{isA}
\cmdmthsymelm{hasShape}[\Sigma]

\usepackage{listings,multicol}

\usepackage{stmaryrd}
\usepackage{xparse}
\usepackage{listings}
\usepackage{caption}
\usepackage{dsfont}
\usepackage{cleveref}
\lstset{
    columns=fullflexible,
    showspaces=false,
    showtabs=false,
    breaklines=true,
    showstringspaces=false,
    breakatwhitespace=true,
    escapeinside={(*@}{@*)},
    commentstyle=\color{greencomments},
    keywordstyle=\color{bluekeywords},
    stringstyle=\color{redstrings},
    numberstyle=\color{graynumbers},
    basicstyle=\ttfamily\footnotesize,
    tabsize=4,
    captionpos=b
}

\newcommand{\trdf}[0]{RDF\xspace}

\newcommand{\vShapeDocument}[0]{\ensuremath{M}}
\newcommand{\trip}[3]{\ensuremath{\langle #1,\allowbreak #2,\allowbreak #3 \rangle}}

\newcommand{\vn}[0]{\ensuremath{n}}
\newcommand{\vj}[0]{\ensuremath{j}}

\newcommand{\templatesat}[0]{template satisfiability}
\newcommand{\vx}[0]{\ensuremath{x}}
\newcommand{\vy}[0]{\ensuremath{y}}
\newcommand{\vz}[0]{\ensuremath{z}}
\newcommand{\conc}[0]{\ensuremath{\texttt{c}}}
\newcommand{\conce}[0]{\ensuremath{\texttt{q}}}
\NewDocumentCommand\nodesof{mg}{%
    \ensuremath{\mathsf{nodes}(\mathsf{#1}\IfNoValueTF{#2}{}{,\mathsf{#2}})}%
}
\newcommand{\shapesof}[1]{\ensuremath{\mathsf{shapes}(\mathsf{#1})}}
\newcommand{\GsigmaModels}[4]{\ensuremath{\llbracket #3 \rrbracket^{#4,#1,#2}}}
\newcommand{\vtrue}{\ensuremath{\mathsf{True}}}
\newcommand{\vfalse}{\ensuremath{\mathsf{False}}}
\newcommand{\vundefined}{\ensuremath{\mathsf{Undefined}}}
\newcommand{\isfaithful}[3]{\ensuremath{\ensuremath{(#1,#2) \models #3}}}
\newcommand{\isnotfaithful}[3]{\ensuremath{\ensuremath{(#1,#2) \not \models #3}}}
\newcommand{\minustargets}[1]{\ensuremath{#1^{\setminus t}}}
\newcommand{\uri}[1]{\ensuremath{{:}{\texttt{#1}}}}

\newcommand{\assignmentsPartial}[2]{\ensuremath{A^{#1,#2}}}
\newcommand{\assignmentsTotal}[2]{\ensuremath{A_{T}^{#1,#2}}}
\newcommand{\GsigmaModelsThree}[4]{\ensuremath{\llbracket #3 \rrbracket^{#4,#1,#2}}}
\newcommand{\hasshapePredicate}[0]{\ensuremath{\Sigma}\xspace}
\newcommand{\hasshapePredicateS}[1]{\ensuremath{\Sigma_{#1}}\xspace}
\newcommand{\hasshape}[2]{\ensuremath{\Sigma_{#2}(#1)}\xspace}
\newcommand{\isA}[0]{\texttt{isA}\xspace}
\newcommand{\induced}[1]{\ensuremath{{\texttt{#1}^{\tau}}}}
\newcommand{\tripsquare}[3]{\ensuremath{{<}#1,\allowbreak #2,\allowbreak #3{>}}}
\newcommand{\sconst}{\texttt{s}}
\newcommand{\sigmagrammar}{\ensuremath{\varsigma}}
\newcommand{\qexists}[1]{\ensuremath{\exists(#1)}}
\newcommand{\qforall}[1]{\ensuremath{\forall(#1)}}
\newcommand{\axiomatisation}[1]{\ensuremath{\axiom(#1)}}
\newcommand{\boundedaxiomatisation}[1]{\ensuremath{\bar{\axiom}(#1)}}
\newcommand{\axiom}[0]{\ensuremath{\alpha}}
\newcommand{\hasshapenosubscript}[2]{\ensuremath{\Sigma(#1)}\xspace}
\newcommand{\countingaxiomatisation}[1]{\ensuremath{\axiom(#1)}}
\newcommand{\vf}[0]{\ensuremath{f}}
\newcommand{\constantList}[0]{\ensuremath{C}}
\newcommand{\vShapes}[0]{\ensuremath{\bar{S}}}
\newcommand{\tr}[0]{\ensuremath{r}}
\newcommand{\transeq}[0]{\ensuremath{\; \doteq \;}}

\cmdmthfun{til}[\eth]

\renewcommand{\S}{\SSym\xspace}
\newcommand{\Z}{\ZSym\xspace}
\renewcommand{\A}{\ASym\xspace}
\newcommand{\T}{\TSym\xspace}
\newcommand{\D}{\DSym\xspace}
\renewcommand{\E}{\ESym\xspace}
\renewcommand{\O}{\OSym\xspace}
\newcommand{\C}{\CSym\xspace}
\renewcommand{\X}{\ensuremath{\varnothing}\xspace}

\newcommand{\ALC}{\ensuremath{\mathcal{ALC}}\xspace}
\newcommand{\ALCOI}{\ensuremath{\mathcal{ALCOI}}\xspace}
\newcommand{\ALCOIQ}{\ensuremath{\mathcal{ALCOIQ}}\xspace}

\newcommand{\genericiri}[1]{\textnormal{\ensuremath{{:}{\texttt{#1}}}}}
\newcommand{\sh}[1]{\textnormal{\ensuremath{\texttt{sh}{:}{\texttt{#1}}}}}
\newcommand{\rdf}[1]{\textnormal{\ensuremath{\texttt{rdf}{:}{\texttt{#1}}}}}

\newcommand{\vshape}[0]{\texttt{s}}
\newcommand{\vshapet}[0]{\ensuremath{t}}
\newcommand{\vshapec}[0]{\ensuremath{d}}

\newcommand{\isIRI}[0]{\text{IRI}}
\newcommand{\isLiteral}[0]{\text{literal}}
\newcommand{\isBlank}[0]{\text{blank}}
\newcommand{\hasdatatype}[0]{\text{dt}}




\AfterEndPreamble{


\usetikzlibrary{arrows.meta}

\tikzstyle{every node} +=
  [thick, dashed]

\tikzstyle{unk} =
  [brown]
\tikzstyle{dec} =
  [rounded rectangle, draw = blue, blue]
\tikzstyle{decthm} =
  [dec, very thick, solid]
\tikzstyle{und} =
  [rectangle, draw = red, red]
\tikzstyle{undthm} =
  [und, very thick, solid]
\tikzstyle{fmp} =
  [fill = blue!20!white]
\tikzstyle{notfmp} =
  [fill = red!10!white]

\tikzstyle{every edge} +=
  [very thin, dashed]

\tikzstyle{decder} =
  [blue, thick, solid]
\tikzstyle{decimp} =
  [decder, dotted]
\tikzstyle{undder} =
  [red, thick, solid]
\tikzstyle{undimp} =
  [undder, dotted]



\newcommand{\figfrg}{
\begin{tikzpicture}[node distance = 4em, bend angle = 22.5]

  \node [decthm, fmp]
        (0)
        []
        {\X};

  \node [dec, fmp]
        (A)
        [above of = 0, yshift = 1em]
        {\A};
  \node [unk, notfmp]
        (O)
        [left of = A, xshift = -4em]
        {\O};
  \node [dec, fmp]
        (S)
        [left of = O, xshift = -4em]
        {\S};
  \node [dec, notfmp]
        (C)
        [right of = A, xshift = 4em]
        {\C};
  \node [dec, fmp]
        (E)
        [right of = C, xshift = 4em]
        {\E};

  \node [undthm, notfmp]
        (SE)
        [above of = A, yshift = 1.5em]
        {\S\E};
  \node [unk, notfmp]
        (AO)
        [left of = SE]
        {\A\O};
  \node [unk, notfmp]
        (SC)
        [left of = AO]
        {\S\C};
  \node [undthm, notfmp]
        (SO)
        [left of = SC]
        {\S\O};
  \node [dec, fmp]
        (SA)
        [left of = SO]
        {\S\A};
  \node [dec, notfmp]
        (AC)
        [right of = SE]
        {\A\C};
  \node [unk, notfmp]
        (EO)
        [right of = AC]
        {\E\O};
  \node [dec, notfmp]
        (EC)
        [right of = EO]
        {\E\C};
  \node [dec, fmp]
        (AE)
        [right of = EC]
        {\A\E};

  \node [und, notfmp]
        (SAE)
        [above of = SE, yshift = 1.5em]
        {\S\A\E};
  \node [und, notfmp]
        (SEO)
        [left of = SAE, xshift = -1.5em]
        {\S\E\O};
  \node [undthm, notfmp]
        (SAC)
        [left of = SEO]
        {\S\A\C};
  \node [und, notfmp]
        (SAO)
        [left of = SAC]
        {\S\A\O};
  \node [undthm, notfmp]
        (SEC)
        [right of = SAE, xshift = 1.5em]
        {\S\E\C};
  \node [unk, notfmp]
        (AEO)
        [right of = SEC]
        {\A\E\O};
  \node [dec, notfmp]
        (AEC)
        [right of = AEO]
        {\A\E\C};

  \node [undthm, notfmp]
        (SZAE)
        [above of = SAE, xshift = -2.5em]
        {\S\Z\A\E};
  \node [decthm, fmp]
        (SZAD)
        [above of = SA, yshift = 5.5em, xshift = -1.5em]
        {\S\Z\A\D};
  \node [und, notfmp]
        (SADE)
        [above of = SAE, xshift = 2.5em]
        {\S\A\D\E};
  \node [decthm, fmp]
        (ZADE)
        [above of = AE, yshift = 5.5em, xshift = 1.5em]
        {\Z\A\D\E};

  \node [und, notfmp]
        (SZADE)
        [node distance = 9em, above of = SAE]
        {\S\Z\A\D\E};
  \node [und, notfmp]
        (SAEO)
        [left of = SZADE, xshift = -2em]
        {\S\A\E\O};
  \node [und, notfmp]
        (SZADC)
        [left of = SAEO, xshift = -1em]
        {\S\Z\A\D\C};
  \node [und, notfmp]
        (SZADO)
        [left of = SZADC, xshift = -1em]
        {\S\Z\A\D\O};
  \node [und, notfmp]
        (SAEC)
        [right of = SZADE, xshift = 2em]
        {\S\A\E\C};
  \node [unk, notfmp]
        (ZADEO)
        [right of = SAEC, xshift = 1em]
        {\Z\A\D\E\O};
  \node [decthm, notfmp]
        (ZADEC)
        [right of = ZADEO, xshift = 1em]
        {\Z\A\D\E\C};

  \node [und, notfmp]
        (SZATE)
        [above of = SZADE, xshift = -2.5em]
        {\S\Z\A\T\E};
  \node [decthm, notfmp]
        (SZATD)
        [above of = SZAD, yshift = 5em]
        {\S\Z\A\T\D};
  \node [und, notfmp]
        (SATDE)
        [above of = SZADE, xshift = 2.5em]
        {\S\A\T\D\E};
  \node [unk]
        (ZATDE)
        [above of = ZADE, yshift = 5em]
        {\Z\A\T\D\E};

  \node [und, notfmp]
        (SZATDE)
        [node distance = 8em, above of = SZADE]
        {\S\Z\A\T\D\E};
  \node [und, notfmp]
        (SZATDO)
        [above of = SZATD]
        {\S\Z\A\T\D\O};
  \node [und, notfmp]
        (SZATDC)
        [node distance = 6em, right of = SZATDO]
        {\S\Z\A\T\D\C};
  \node [unk, notfmp]
        (ZATDEC)
        [above of = ZATDE]
        {\Z\A\T\D\E\C};
  \node [unk, notfmp]
        (ZATDEO)
        [node distance = 6em, left of = ZATDEC]
        {\Z\A\T\D\E\O};

  \node [und, notfmp]
        (SZATDEO)
        [above of = SAEO, yshift = 8em]
        {\S\Z\A\T\D\E\O};
  \node [und, notfmp]
        (SZATDOC)
        [above of = SZATDO, xshift = 3em]
        {\S\Z\A\T\D\O\C};
  \node [und, notfmp]
        (SZATDEC)
        [above of = SAEC, yshift = 8em]
        {\S\Z\A\T\D\E\C};
  \node [unk, notfmp]
        (ZATDEOC)
        [above of = ZATDEC, xshift = -3em]
        {\Z\A\T\D\E\O\C};

  \path[Latex-Latex]
    (0)     edge  [decder]
                  (A)
    (0.west)
            edge  [decder]
                  (S)
    ;
  \path[-Latex]
    (0)     edge  []
                  (O)
            edge  [decimp]
                  (C)
    (0.east)
            edge  [decimp]
                  (E)
    ;

  \path[Latex-Latex]
    (A)     edge  [decder]
                  (SA.south)
    (O.north)
            edge  []
                  (AO.south)
    (S.north)
            edge  [decder]
                  (SA.south)
    ;
  \path[-Latex]
    (A.north)
            edge  []
                  (AO.south)
            edge  [decimp]
                  (AC.south)
    (A)     edge  [decimp]
                  (AE.south)
    (O.north)
            edge  []
                  (SO.south)
            edge  []
                  (EO.south)
    (S.north)
            edge  []
                  (SE.south)
            edge  []
                  (SC.south)
            edge  []
                  (SO.south)
            edge  [decder]
                  (SA.south)
    (C.north)
            edge  []
                  (SC.south)
            edge  [decder]
                  (AC.south)
            edge  [decder]
                  (EC.south)
    (E.north)
            edge  []
                  (SE.south)
            edge  []
                  (EO.south)
            edge  [decimp]
                  (EC.south)
            edge  [decder]
                  (AE.south)
    ;

  \path[-Latex]
    (SE.north)
            edge  [undder]
                  (SAE.south)
            edge  [undimp]
                  (SEC.south)
            edge  [undimp]
                  (SEO.south)
    (AO.north)
            edge  []
                  (SAO.south)
            edge  []
                  (AEO.south)
    (SC.north)
            edge  []
                  (SAC.south)
            edge  []
                  (SEC.south west)
    (SO.north)
            edge  [undder]
                  (SAO.south)
            edge  [undder]
                  (SEO.south)
    (SA.north)
            edge  []
                  (SAE.south)
            edge  []
                  (SAC.south)
            edge  []
                  (SAO.south)
    (AC.north)
            edge  []
                  (SAC.south)
            edge  [decder]
                  (AEC.south)
    (EO.north)
            edge  []
                  (SEO.south east)
            edge  []
                  (AEO.south)
    (EC.north)
            edge  []
                  (SEC.south)
            edge  [decder]
                  (AEC.south)
    (AE.north)
            edge  []
                  (SAE.south)
            edge  [decimp]
                  (AEC.south)
            edge  []
                  (AEO.south)
    ;

  \path[-Latex]
    (SA.north)
            edge  [decimp]
                  (SZAD.south)
    (AE.north)
            edge  [decder]
                  (ZADE.south)
    ;

  \path[-Latex]
    (SAE)   edge  [undimp]
                  (SZAE)
            edge  [undder]
                  (SADE)
    ;

  \path[-Latex]
    (SAE)   edge  [undimp, bend angle = 50, bend left]
                  (SAEO.south east)
            edge  [undimp, bend angle = 50, bend right]
                  (SAEC.south west)
    (SEO)   edge  [undder]
                  (SAEO.south)
    (SAC.north)
            edge  [undder]
                  (SZADC.south)
            edge  [undder, bend left]
                  (SAEC.south west)
    (SAO.north)
            edge  [undder]
                  (SZADO.south)
            edge  [undder]
                  (SAEO.south)
    (SEC)   edge  [undder]
                  (SAEC.south)
    (AEO.north)
            edge  []
                  (ZADEO.south)
            edge  [bend right]
                  (SAEO.south east)
    (AEC.north)
            edge  [decder]
                  (ZADEC.south)
            edge  []
                  (SAEC.south)
    ;

  \path[-Latex]
    (SZAE.north)
            edge  [undder]
                  (SZADE.south)
    (SZAD)  edge  []
                  (SZADE.south)
    (SZAD.north)
            edge  []
                  (SZADO.south)
            edge  []
                  (SZADC.south)
    (SADE.north)
            edge  [undimp]
                  (SZADE.south)
    (ZADE)  edge  []
                  (SZADE.south)
    (ZADE.north)
            edge  []
                  (ZADEO.south)
            edge  [decimp]
                  (ZADEC.south)
    ;

  \path[-Latex]
    (SZAE.north)
            edge  [undder]
                  (SZATE.south)
    (SZAD.north)
            edge  [decimp, bend left]
                  (SZATD.south)
    (SADE.north)
            edge  [undder]
                  (SATDE.south)
    (ZADE.north)
            edge  [bend right]
                  (ZATDE.south)
    ;

  \path[-Latex]
    (SZADE) edge  [undder]
                  (SZATDE)
    (SZADC) edge  [undder]
                  (SZATDC.south)
    (SZADO) edge  [undder, bend angle = 35, bend right]
                  (SZATDO)
    (ZADEO) edge  []
                  (ZATDEO.south)
    (ZADEC) edge  [bend angle = 35, bend left]
                  (ZATDEC)
    ;

  \path[-Latex]
    (SAEO)  edge  [undder]
                  (SZATDEO.south)
    (SAEC)  edge  [undder]
                  (SZATDEC.south)
    ;

  \path[-Latex]
    (SZATE) edge  [undder]
                  (SZATDE)
    (SZATD) edge  []
                  (SZATDE)
    (SZATD.north)
            edge  []
                  (SZATDO.south)
            edge  []
                  (SZATDC.south)
    (SATDE) edge  [undimp]
                  (SZATDE)
    (ZATDE) edge  []
                  (SZATDE)
    (ZATDE.north)
            edge  []
                  (ZATDEO.south)
            edge  []
                  (ZATDEC.south)
    ;

  \path[-Latex]
    (SZATDE.north)
            edge  [undder]
                  (SZATDEO.south)
            edge  [undder]
                  (SZATDEC.south)
    (SZATDC.north)
            edge  [undder]
                  (SZATDEC.south west)
            edge  [undder]
                  (SZATDOC.south)
    (SZATDO.north)
            edge  [undder]
                  (SZATDEO.south)
            edge  [undder]
                  (SZATDOC.south)
    (ZATDEO.north)
            edge  []
                  (SZATDEO.south east)
            edge  []
                  (ZATDEOC.south)
    (ZATDEC.north)
            edge  []
                  (SZATDEC.south)
            edge  []
                  (ZATDEOC.south)
    ;

\end{tikzpicture}
}


}




\AfterEndPreamble{



\newcommand{\tabtargets}{
\begin{tabular}{|l|c|}
  \hline                                              & \\[-0.95em]
  \textbf{Types of target declarations in $\vshapet$} &
  \textbf{\SCL target axiom}
  \\\hline                                            & \\[-0.95em]
  Node target (node ${\conElm}$)                      &
  ${\hasShapeElm[\vshape](\conElm)}$
  \\\hline                                            & \\[-0.95em]
  Class target (class \texttt{c})                     &
  ${\forall \varElm \ldotp \isASym(\varElm, \conElm) \rightarrow
  \hasShapeElm[\vshape](\varElm)}$
  \\\hline                                            & \\[-0.95em]
  Subjects-of target (relation ${\RRel}$)             &
  ${\forall \varElm, \yvarElm \ldotp \RRel(\varElm, \yvarElm) \rightarrow
  \hasShapeElm[\vshape](\varElm)}$
  \\\hline                                            & \\[-0.95em]
  Objects-of target (relation ${\RRel}$)              &
  ${\forall \varElm, \yvarElm \ldotp \RRel[][-](\varElm, \yvarElm) \rightarrow
  \hasShapeElm[\vshape](\varElm)}$
  \\\hline
\end{tabular}
}


\newcommand{\tabcomponents}{
\begin{tabular}{|@{\,}c@{\,}|l|l|c|}
  \hline          &                             &                           &
  \\[-0.95em]
  \textbf{Abbr.}  & \textbf{Name}               & \textbf{\SHACL component} &
  \textbf{\SCL expression}
  \\\hline        &                             &                           &
  \\[-0.95em]
  {\D}            & Property pair disjointness  & \sh{disjoint}             &
  ${\neg \exists \yvarElm \ldotp \piFrm(\varElm, \yvarElm) \wedge \RRel(\varElm,
  \yvarElm)}$
  \\\hline        &                             &                           &
  \\[-0.95em]
  {\E}            & Property pair equality      & \sh{equals}               &
  ${\forall \yvarElm \ldotp \piFrm(\varElm, \yvarElm) \leftrightarrow
  \RRel(\varElm, \yvarElm)}$
  \\\hline        &                             &                           &
  \\[-0.95em]
  {\O}            & Property pair order         & \begin{tabular}{@{}l@{}}
                                                    \sh{lessThan} \\
                                                    \sh{lessThanOrEquals}
                                                  \end{tabular}             &
  ${\varElm <^{\pm} \yvarElm}$ and ${\varElm \leq^{\pm} \yvarElm}$
  \\\hline        &                             &                           &
  \\[-0.95em]
  {\C}            & Cardinality constraints     & \begin{tabular}{@{}l@{}}
                                                    \sh{qualifiedValueShape} \\
                                                    \sh{qualifiedMinCount} \\
                                                    \sh{qualifiedMaxCount}
                                                  \end{tabular}             &
  \begin{tabular}{@{}l@{}}
    ${\exists^{\geq n} \yvarElm \ldotp \piFrm(\varElm, \yvarElm) \wedge
    \psiFrm(\yvarElm)}$ \\
    with $n \neq 1$
  \end{tabular}
  \\\hline        &                             &                           &
  \\[-0.95em]
  {\S}            & Sequence paths              & \SHACL list                       &
  ${\exists \zvarElm \ldotp \piFrm(\varElm, \zvarElm) \wedge \piFrm(\zvarElm,
  \yvarElm)}$
  \\\hline        &                             &                           &
  \\[-0.95em]
  {\Z}            & Zero-or-one paths           & \sh{zeroOrOnePath}        &
  ${\varElm = \yvarElm \vee \piFrm(\varElm,  \yvarElm)}$
  \\\hline        &                             &                           &
  \\[-0.95em]
  {\A}            & Alternative paths           & \sh{alternativePath}      &
  ${\piFrm(\varElm, \yvarElm) \vee \piFrm(\varElm, \yvarElm)}$
  \\\hline        &                             &                           &
  \\[-0.95em]
  {\T}            & Transitive paths            & \begin{tabular}{@{}l@{}}
                                                    \sh{zeroOrMorePath} \\
                                                    \sh{oneOrMorePath}
                                                  \end{tabular}             &
  ${(\piFrm(\varElm, \yvarElm))^{\star}}$
  \\\hline
\end{tabular}
}


}



\hypersetup
  {
  pdftitle  = {Satisfiability and Containment of Recursive SHACL},
  pdfauthor = {P. Pareti, G. Konstantinidis, F. Mogavero}
  }


\begin{document}

  \newpage

  \setcounter{page}{1}

  \title{Satisfiability and Containment of Recursive \SHACL}

  \author[1]{Paolo Pareti\corref{cor1}}
  \ead{paolo.pareti@winchester.ac.uk}

  \author[2]{George Konstantinidis}
  \ead{g.konstantinidis@soton.ac.uk}

  \author[3]{Fabio Mogavero}
  \ead{fabio.mogavero@unina.it}

  \affiliation[1]{organization={University of Winchester},
  addressline={Sparkford Road},
  postcode={SO22 4NR},
  city={Winchester},
  country={United Kingdom}}

  \affiliation[2]{organization={University of Southampton},
  addressline={University Road},
  postcode={SO17 1BJ},
  city={Southampton},
  country={United Kingdom}}
  \affiliation[3]{organization={Universita degli Studi di Napoli Federico II},
  addressline={Corso Umberto I 40},
  postcode={80138},
  city={Napoli},
  country={Italy}}

  \cortext[cor1]{Corresponding author.}



\begin{abstract}

The \emph{Shapes Constraint Language} (\SHACL) is the recent W3C recommendation
language for validating RDF data, by verifying certain \emph{shapes} on graphs.
Previous work has largely focused on the validation problem, while the standard
decision problems of \emph{satisfiability} and \emph{containment}, crucial for
design and optimisation purposes, have only been investigated for simplified
versions of \SHACL.
Moreover, the \SHACL specification does not define the semantics of
recursively-defined constraints, which led to several alternative
\emph{recursive} semantics being proposed in the literature.
The interaction between these different semantics and important decision
problems has not been investigated yet.
In this article we provide a comprehensive study of the different features of
\SHACL, by providing a translation to a new first-order language, called \SCL,
that precisely captures the semantics of \SHACL.
We also present \MSCL, a second-order extension of \SCL, which allows us to
define, in a single formal logic framework, the main recursive semantics of
\SHACL.
Within this language we also provide an effective treatment of filter
constraints which are often neglected in the related literature.
Using this logic we provide a detailed map of (un)decidability and complexity
results for the satisfiability and containment decision problems for different
\SHACL fragments.
Notably, we prove that both problems are undecidable for the full language, but
we present decidable combinations of interesting features, even in the face of
recursion.


\end{abstract}


  \maketitle



\section{Introduction}

\emph{Data validation} is the process of ensuring data is clean,
correct, and useful.
The \emph{Shapes Constraint Language} (\SHACL, for short)~\citep{KK17} is a
recent W3C recommendation language for validation of data in the
form of RDF graphs~\citep{CWG14} and is quickly becoming an established technology.
Similar to ontology languages like OWL~\citep{OWL2}, \SHACL can be seen as a language that strictly imposes a schema on
graph data models, such as RDF, which are inherently schemaless.
Unlike ontology languages, \SHACL focuses more on the structural properties of a graph rather than the semantic ones, 
and it is not intended for inference. 
A \SHACL \emph{shape graph}, which we will call \SHACL \emph{document} in this paper, validates an RDF graph by evaluating it against a set of
constraints.
In \SHACL, constraints are modelled as a set of \emph{shapes} which,
intuitively, define the structure that certain entities in the graph must
conform to.

Despite its ongoing widespread adoption (see~\cite{pareti2021shaclreview} for a recent review), 
many aspects of \SHACL remain unexplored.
Several important theoretical properties of the language have
not been studied. Among these are the decidability and complexity of satisfiability and containment of \SHACL documents, and this is the main focus of this work.
These problems have important roles in the design and optimisation of SHACL applications. For example, satisfiability can support an editor that checks whether a developing SHACL document becomes inconsistent, or an integration system that tracks conflicts when integrating datasets subject to different SHACL documents.
Containment (and consequently document equivalence, which is based on containment) studies whether one document is subsumed by another one and has important applications in optimisation and minimisation of documents \cite{AHV95}, detecting independence of documents from data updates \cite{levy1993queries}, data integration \cite{lenzerini2002data,konstantinidis2011scalable}, maintenance of integrity constraints \cite{gupta1994constraint}, and semantic data caching \cite{dar1996semantic}.
We study both problems for entire documents and individual shapes. At the level of shapes, an unsatisfiable shape constraint might not necessarily
cause the unsatisfiability of a whole SHACL document, but it is likely an indication of a
design error. Being able to decide containment/equivalence for individual shapes offers more design choices to the author of a SHACL document and it is an avenue for optimization. Moreover, shape containment is a problem that has been studied in literature in connection with important practical applications such as type checking in software \cite{leinberger2019type, martin2020shapecontainment}.

Note that satisfiability has two prevalent versions in the literature: finite and infinite/unrestricted. These adjectives refer to the size of an underlying model (here a data graph) that whenever exists proves the theory (e.g., SHACL document) satisfiable. In practice, finite satisfiability is what we are usually interested in. Commonly however, infinite satisfiability is a starting point for theoretical studies as, being less restricted, it is often considered easier to address. Indeed, very often the techniques for deciding finite satisfiability are revealed through studying the infinite case as a first approximation. As well, if the infinite case provides a quick decidability result then finite decidability also holds. Moreover, there are cases where a theory describes only part of the model (possibly finite), but there is an infinite domain for numbers or other parts of the theory. 
In this case (which is possible for SHACL), unrestricted satisfiability is of our interest. Other practical cases that might imply infinite models, appear in the face of reasoning with intensional knowledge, e.g., ontology TBoxes  (see \cite{pareti2019shacl} for a work on SHACL in combination with reasoning).

Additionally, the W3C specification does not define the semantics of \SHACL in 
its full generality, since it does not describe how to handle recursive constraints.
Recent work~\citep{CRS18} has suggested a theoretical modelling of the language
in order to formally define a recursive semantics; the same work also studied
the complexity of the validation problem.
Alternative recursive semantics for \SHACL have been further suggested in
\citep{ACORSS20}.

In this article, we extend~\citep{pareti2020} to capture \SHACL
semantics using mathematical logic.
This is an important contribution on its own, as it offers a standard and well-established
modelling of the
language, where \SHACL documents are translated into logical sentences that are 
interpreted in the usual way.
This makes \SHACL semantics easier to understand and study compared to existing
approaches that rely on auxiliary \adhoc constructs and functions.
In particular,~\citep{CRS18} defines 
validation based on the existence
of an assignment of \SHACL shapes to data nodes. This assignment captures which 
shapes are satisfied/violated by which nodes, while at at same time the target nodes
of the validation process are verified. As~\citep{CRS18} argues, in the face of 
\SHACL recursion one may consider \emph{partial
assignments}, where the truth value of a constraint at some nodes may be left
unknown. In addition,~\citep{ACORSS20} identifies two major ways, called 
\emph{brave} and \emph{cautious} validation, to verify the target nodes 
during the validation process.
Deciding between partial or total assignments, and between brave or cautious 
validations gives rise to four different semantics for recursive \SHACL, each 
with its own definition of validation. 
Using our logical approach we are able to capture all four 
semantics in a clear and uniform way, providing for a better understanding of \SHACL
features and taking advantage of the rich field of computational logic.

Our contributions are the following:
\begin{itemize}
  \item
    
    In Theorem~\ref{thm:non-rec-semantics}, we prove that all four major semantics of \SHACL coincide for non-recursive 
    documents, and in Theorem~\ref{theorem:partialToTotal} that validation under the partial semantics (brave or cautious) reduces to validation under the corresponding total semantics, for all \SHACL documents. This reduction allows us later to focus only on total semantics, such that any positive decidability and complexity results for total carry over to partial semantics.
    (Section~\ref{sec:shacl}) 
    
  \item 
    We formalise non-recursive \SHACL semantics by translating to a novel fragment 
    of first-order logic (\FOL)  extended with counting quantifiers and a transitive closure
    operator; we call this logic \SCL for \emph{Shapes Constraint Logic}. The 
    provided translation from \SHACL to \SCL is actually an one-to-one 
    correspondence between these languages and we have identified eight prominent 
    \SHACL features that translate to particular restrictions of \SCL. In effect, 
    \SCL is the logical counterpart of \SHACL; this is exhibited by Theorem~\ref{centralSCL2SHACLCorrespondenceTheorem} which proves that faithfulness of an assignment in \SHACL, a central notion used to define all semantics, translates to satisfiability in \SCL. 
    (Section~\ref{sec:scl}) 
    
  \item 
    We extend \SCL into a fragment of monadic second-order logic, 
    called \MSCL, that intuitively allows us to impose conditions over 
    the space of all possible assignments and captures all four major recursive \SHACL 
    semantics. 
    We also present Proposition~\ref{proposition:ESCLtoSCL} which considers \ESCL, the existential fragment of \MSCL, expressive enough to capture several interesting problems; Proposition~\ref{proposition:ESCLtoSCL} states that \ESCL and \SCL are equisatisfiable, and we can only focus on \SCL when studying decidabilty and complexity.
    We also demonstrate how our logical framework generalises 
    previous languages designed to model \SHACL.  (Section~\ref{sec:scl})

   \item
    We present a series of results (Corollaries~\ref{cor:brave-total-val}-\ref{cor:cautious-total-cont} and Lemma~\ref{lemma:nonrecursive_containment_to_sat}) that reduce \SHACL satisfiability and containment 
    under all semantics to the \MSCL satisfiability problem. Going further, the problems of finite/unrestricted satisfiability and containment for non-recursive documents and the finite/unrestricted satisfiability for recursive \SHACL under brave semantics can be captured by \ESCL. We additionally present other decision problems from literature, such as shape and constraint satisfiability, and show how they are also captured by \ESCL. 
    (Section~\ref{sec:shacl2scl})
    
  \item
    We pay particular attention to \SHACL filters (\eg, constraints on the value
    of particular elementary datatypes), which have not been previously addressed
    in the literature, and provide a corresponding axiomatisation in \MSCL.
    (Section \ref{sec:fltaxm}).
  \item
    Finally, we turn our focus to \SCL (in effect, \ESCL) to explore the interaction of the main language features we have identified and create a detailed map of decidability and complexity results for many interesting fragments, for all aforementioned problems captured by \ESCL.
    In general, satisfiability and containment for the the full logic are undecidable.
    However, the base language has an \ExpTimeC satisfiability and containment problem. (Section~\ref{sec:sclsat}).

\end{itemize}



\section{Preliminaries}
\label{sec:prl}

With the term \emph{graph} we implicitly refer to a set of \emph{triples}, where
each single triple $\tuple {\sSym} {\pSym} {\oSym}$ identifies an edge with
label $\pSym$, called \emph{predicate}, from a node $\sSym$, called
\emph{subject}, to a node $\oSym$, called \emph{object}.
Graphs in this article are represented in \emph{Turtle syntax}~\citep{CP14}
using common XML namespaces, such as \texttt{sh} to refer to \SHACL terms.
Usually, in the \emph{RDF data model}~\citep{CWG14}, subjects, predicates, and
objects are defined over different but overlapping domains.
For example, while \emph{IRIs} can occupy any position in an RDF triple,
\emph{literals} (representing datatype values) can only appear in the object
position.
These differences are not central to the problem discussed in this article, and
thus, for the sake of simplicity, we will assume that all elements of a triple
are drawn from a single and infinite domain.
This assumption actually corresponds to what is known in the literature as
\emph{generalised RDF}~\citep{CWG14}.
We model triples as binary relations in \FOL, \ie, we write the
atom $\RRel(\sElm, \oElm)$ as a shorthand for the tuple $\tuple {\sSym} {\RSym}
{\oSym}$, and call $\RRel$ a graph relation name.
We use a minus sign to identify the \emph{inverse} role, \ie, we write
$\RRel[][-](\sElm, \oElm)$ in place of $\RRel(\oElm, \sElm)$.
We also consider the distinguished binary relation name $\isASym$ to represent
class membership triples, that is, we write $\tuple {\sSym} {\texttt{rdf:type}}
{\oSym}$ as $\isASym(\sElm, \oElm)$.




\section{Shapes Constraint Language: \SHACL}
\label{sec:shacl}

In this section we describe the \emph{Shapes Constraint Language} (\SHACL), a W3C language to
define formal constraints for the validation of \trdf graphs~\citep{KK17}.
Firstly, we introduce the main elements of its syntax, and explain the role they play in the
validation process.
We then discuss \emph{assignments}~\citep{CRS18}, that is, mappings that allow us to capture
which nodes in a graph
satisfy or violate which constraints. Assignments have been used to formally define
\SHACL semantics and this can non-ambiguously happen for the non-recursive case. For recursive
\SHACL, the specification leaves the semantics of recursive constraints open for interpretation,
and there have been more than one ways to extend the assignments-based semantics for this.
We review and discuss the four major \emph{extended} semantics that have been proposed in the
literature to handle recursive constraints.
Notably, in the absence of recursion, we show the collapse of all four
extended semantics into the same one.
We also show that two of these extended semantics can be considered a special case of the other
two, by proving a reduction from \emph{partial assignment} to \emph{total assignment} semantics
(defined later in this section).
Having formalised \SHACL semantics, we define the satisfiability and containment decision
problems for \SHACL documents.

\subsection{\SHACL Syntax}
\label{sec:shacl;sub:syn}

Data validation in \SHACL requires
two inputs:
\begin{inparaenum}[(1)]
  \item
    an \trdf graph $G$ to be validated and
  \item
    a \SHACL document $\vShapeDocument$ that defines the conditions against
    which $G$ must be validated.
\end{inparaenum}
The \SHACL specification defines the output of the data validation process as a
\emph{validation report}, detailing all the violations of the conditions set by
$\vShapeDocument$ that were found in $G$.
If the violation report contains no violations, a graph $G$ is \emph{valid} \wrt
a \SHACL document $\vShapeDocument$.
Determining whether a graph is valid \wrt a \SHACL document is the decision
problem called \emph{validation}.

A \SHACL \emph{document} is a set of \emph{shapes}.
Shapes essentially restrict the structure that a valid graph should have, by
defining a set of constraints that are evaluated against a set of nodes, known
as the \emph{target} nodes.
Formally, a shape is a tuple \trip{\vshape}{\vshapet}{\vshapec} defined by three
components:
\begin{inparaenum}[(1)]
  \item
    a shape name $\vshape$, which uniquely identifies the shape;
  \item
    a \emph{target definition} $\vshapet$ which is a set of \emph{target
    declarations}; each target declaration can be represented by a unary query
    and identifies the \trdf nodes that must satisfy the constraints $\vshapec$;
  \item
    a set of \emph{constraints} which are used in conjunction, and hence
    hereafter referred to as the single constraint $\vshapec$.
\end{inparaenum}
The \SHACL specification defines several types of constraints, called
\emph{constraint components}.
The \sh{datatype} component, for example, constraints an \trdf term to be an
\trdf literal of a particular datatype. Without loss of generality, we assume
that shape names in a \SHACL document do not occur in other \SHACL documents or
graphs.
As we formally define later, a graph is valid \wrt a document whenever all
constraints of all shapes in the document are satisfied by the target nodes of
the corresponding shapes.

It is worth noting that one type of \SHACL target declaration might reference
specific nodes to be validated that do not actually appear in the graph under
consideration.
Given a document \vShapeDocument\ and a graph $G$, we denote by
$\nodesof{G}{\vShapeDocument}$ the set of nodes in $G$ together with those
referenced by the node target declarations in \vShapeDocument.
In the absence of a document, we use $\nodesof{G}$ to denote the nodes of a
graph $G$.
With \shapesof{\vShapeDocument} we refer to all the shape names in a document
\vShapeDocument.
When it is clear from the context, we might use a shape name $\vshape$ either to
refer to the name itself or to the entire shape tuple.

Constraints can refer to other constraints by using the name of a shape as a short-hand to refer to its constraints.
We call this a \emph{shape reference}.
Let $S_{0}^{\vshapec}$ be the set of all the shape names occurring in a
constraint $\vshapec$ of a shape $\trip{\vshape}{\vshapet}{\vshapec}$; these
are the \emph{directly-referenced} shapes of $\vshape$.
Let $S_{i + 1}^{\vshapec}$ be the set of shapes in  $S_{i}^{\vshapec}$ union the
directly-referenced shapes of the constraints of the shapes in
$S_{i}^{\vshapec}$.
A shape $\trip{\vshape}{\vshapet}{\vshapec}$ is \emph{recursive} if $\vshape
\in S_{\infty}^{\vshapec}$.
A \SHACL document $\vShapeDocument$ is said to be \emph{recursive} if it contains a recursive shape,
and \emph{non-recursive} otherwise.
For simplicity, all \SHACL documents we consider in this work do not contain the
\sh{xone} constraint over shape references, which models the logical operator of
exclusive-or.
Any \SHACL document, in fact, can be linearly transformed into an equivalent
document that does not contain the \sh{xone} operator using a standard logical
transformation.
The intuition behind this transformation is
that an \sh{xone} defined over shapes $\vshape_1$ to $\vshape_n$ is equivalent to an
\sh{xone} between two shapes $\vshape_n$ and $\vshape_k$, where $\vshape_k$ is a fresh
shape whose constraint is the \sh{xone} of shapes $\vshape_1$ to $\vshape_{n-1}$. Then, any exclusive-or between two shapes can be linearly transformed into an
equivalent expression that uses only conjunctions, disjunctions, and the
negation operators.

\subsection{Semantics of Non-Recursive \SHACL}
\label{sec:shacl;sub:nrcsem}

A target declaration $\vshapet$ is a unary query over a graph $G$.
We denote with $G \models \vshapet(\vn)$ that a node $\vn$ is \emph{in the
target} of $\vshapet$ \wrt a graph $G$.
The target declaration $\vshapet$ might be empty, in which case no node is in
the target of $\vshapet$.
To formally discuss about nodes satisfying the constraints of a shape we need to
introduce the concept of \emph{assignments}~\citep{CRS18}.
Intuitively, an assignment is used to keep track, for any \trdf node, of all the
shapes whose constraints the node satisfies and all of those that it does not.

\begin{definition}
  Given a graph $G$, and a \SHACL document \vShapeDocument , an
  \emph{assignment} $\sigma$ for $G$ and $\vShapeDocument$ is a function mapping
  nodes in  \nodesof{G}{\vShapeDocument}, to subsets of shape literals in
  $\shapesof{\vShapeDocument} \cup  \{\neg \vshape | \vshape \in
  \shapesof{\vShapeDocument}\}$, such that for all nodes \vn\ and shape names
  \vshape, $\sigma(\vn)$ does not contain both \vshape\ and $\neg \vshape$.
\end{definition}

Notice that given a document and a graph, an assignment does not have to
associate all graph nodes to all document shapes or their negations.
In fact, there might exist node-shape pairs $(\vn, \vshape)$ for which neither
$\vshape \in \sigma(\vn)$ nor $\neg \vshape \in \sigma(\vn)$.
This is the reason why sometimes assignments are called \emph{partial
assignments}, as opposed to \emph{total assignments} which have to associate all
nodes with all shape names or their negation.

\begin{definition}
  An assignment $\sigma$ is \emph{total} \wrt a graph $G$ and a \SHACL document
  $\vShapeDocument$ if, for all nodes $n$ in $\nodesof{G}{\vShapeDocument}$ and
  shapes $\trip{\vshape}{\vshapet}{\vshapec}$ in $\vShapeDocument$, either
  $\vshape \in \sigma(n)$ or $\neg \vshape \in \sigma(n)$.
\end{definition}

For any graph $G$ and \SHACL document $\vShapeDocument$, we denote with
$\assignmentsPartial{G}{\vShapeDocument}$ and
$\assignmentsTotal{G}{\vShapeDocument}$, respectively, the set of assignments,
and the set of total assignments for $G$ and $\vShapeDocument$.
Trivially, $\assignmentsTotal{G}{\vShapeDocument} \subseteq
\assignmentsPartial{G}{\vShapeDocument}$ holds.

When trying to determine whether a node \vn\ of a graph $G$ satisfies a
constraint \vshapec\ of a shape, the outcome does not only depend on \vshapec,
\vn, and $G$, but it might also depend, due to shape references, on whether
other nodes satisfy the constraints of other shapes.
This latter fact can be encoded in an assignment $\sigma$. The
authors of~\citep{CRS18}, therefore, define the evaluation or \emph{conformance} of a node
$\vn$ to a constraint $\vshapec$ \wrt a graph $G$ under an assignment $\sigma$
as $\GsigmaModels{G}{\sigma}{\vshapec}{\vn}$.
This expression can take one of the three truth values of Kleene's logic:
\vtrue, \vfalse, or \vundefined.
If $\GsigmaModels{G}{\sigma}{\vshapec}{\vn}$ is \vtrue\ (\resp, \vfalse) we say
that node $n$ \emph{conforms} (\resp, does not conform) to constraint $\vshapec$
\wrt $G$ under $\sigma$.
 Note that if the set of constraints of a shape is empty,
then every node trivially conforms to it, that is, for all
nodes $\vn$, graphs $G$ and assignments $\sigma$, it holds that $\GsigmaModels{G}{\sigma}{\emptyset}{\vn}$ is $\vtrue$.

Intuitively, the evaluation of $\GsigmaModels{G}{\sigma}{\vshapec}{\vn}$ can be
split into two parts:
the first verifies conditions on $G$, such as the existence of certain triples.
The second part examines other node-shape pairs that $d$ itself is listing for
conformance and, instead of triggering subsequent evaluation, checks whether
their conformance is correctly encoded in $\sigma$.
Since -- in general and for arbitrary \SHACL documents that might be recursive
-- $\sigma$ is partial, it might be that
$\GsigmaModels{G}{\sigma}{\vshapec}{\vn}$ is \vundefined.
 Table \ref{tab:constrDefExamples} provides examples of how
$\GsigmaModels{G}{\sigma}{\vshapec}{\vn}$ is defined for certain salient
constraints. For a comprehensive definition of how all \SHACL constraints
are evaluated we refer the reader to~\citep{CRS18}.\footnote{Some \SHACL constraints are defined in the appendix
of the extended version of~\citep{CRS18}.}

\begin{table}[t]
\begin{center}
\begin{tabular}{|p{4.15cm}|p{4.15cm}|p{3.9cm}|}
 \hline
   Description of constraint $\vshapec$ & \SHACL\ triples for applying $d$ to shape \genericiri{s} & Definition of $\GsigmaModels{G}{\sigma}{\vshapec}{\vn}$ \\ \hline \hline

Empty constraint ($\vshapec = \emptyset$) & $\{ \}$ & \vtrue \\  \hline 

Test whether node is an IRI  & $\{\genericiri{s}\ \sh{nodeKind} \; \sh{isIRI} \}$
 &  \vtrue\ if $n$ is an IRI, 
 
 or else \vfalse  \\  \hline

Conformance to shape \genericiri{s1}  & $\{\genericiri{s}\ \sh{node}\ \genericiri{s1} \}$
 &  \vtrue\ if $\genericiri{s1}\in \sigma(n)$,  
 
 \vfalse\ if $\neg \genericiri{s1}\in \sigma(n)$,  
 
 or else \vundefined  \\  \hline

 Existence of an \genericiri{r}-successor & $\{\genericiri{s} \phantom{\ }\sh{path} \genericiri{r},$ $\genericiri{s}\ \sh{minCount} \; 1 \}$
 & \vtrue\ if $G$ contains a triple  with $n$ as the subject and  \genericiri{r} as the predicate,  
 
 or else \vfalse \\  \hline

Conjunction of constraints  $\vshapec'$ and $\vshapec''$ & Union of \SHACL triples for applying $d'$ and $d''$ to \genericiri{s} & $\GsigmaModels{G}{\sigma}{\vshapec'}{\vn} \wedge \GsigmaModels{G}{\sigma}{\vshapec''}{\vn}$ \\  \hline

\end{tabular}
\caption{\label{tab:constrDefExamples} Example definitions of $\GsigmaModels{G}{\sigma}{\vshapec}{\vn}$ for selected \SHACL constraints $d$, given a node $n$, a graph $G$ and an assignment $\sigma$, where \genericiri{s} and \genericiri{s1} are shape names, $\vshapec'$ and $\vshapec''$ are \SHACL constraints, and \genericiri{r} is an IRI. The second column shows which triples, in a \SHACL shape graph, define that \genericiri{s} has constraint $\vshapec$ .}
\end{center}
\end{table}

It should be noted that the evaluation of certain constraints to the truth value of \vundefined\ might not affect the outcome of the graph validation process
(see Section \ref{sec:shacl;sub:fulsem} for an example).
Graph validation depends on the existence of an assignment such that even if it
is \vundefined\ for certain nodes, at least is consistent (as defined below)
and is \vtrue\ for all target nodes on the constraints of the shapes that
describe these nodes as targets. Such assignments are known as \emph{faithful}
assignments~\citep{CRS18}. Note that, as we show in Lemma~\ref{propertyOfNonRecursiveDocs}, for
non-recursive documents there is a unique faithful assignment which is total and
for which \vundefined\ conformance never appears.

\begin{definition}
  \label{def:faithfulAssignment}
  For all graphs $G$ and \SHACL documents $\vShapeDocument$, an assignment
  $\sigma$ is \emph{faithful} \wrt $G$ and $\vShapeDocument$, denoted by
  \isfaithful{G}{\sigma}{\vShapeDocument}, if the following two conditions
  hold true for any shape $\trip{\vshape}{\vshapet}{\vshapec}$ in
  $\shapesof{\vShapeDocument}$ and node $\vn$ in \nodesof{G}{\vShapeDocument}:
  \begin{enumerate}[(1)]
    \item
      $\vshape \in \sigma(n)$ \iff $\GsigmaModels{G}{\sigma}{\vshapec}{\vn}$ is
      \vtrue\ and $\neg \vshape \in \sigma(n)$ \iff
      $\GsigmaModels{G}{\sigma}{\vshapec}{\vn}$ is \vfalse;
    \item
      if $G \models \vshapet(\vn)$ then $\vshape \in \sigma(n)$.
  \end{enumerate}
\end{definition}

\begin{figure*}[t]%
  \noindent\begin{minipage}[t]{0.39\linewidth}%
    \begin{lstlisting}
:studentShape a sh:NodeShape ;
  sh:targetClass :Student ;
  sh:not :disjFacultyShape .
:disjFacultyShape a
    sh:PropertyShape ;
  sh:path (:hasSupervisor
    :hasFaculty);
  sh:disjoint :hasFaculty .
    \end{lstlisting}%
  \end{minipage} \hfill %
  \noindent\begin{minipage}[t]{0.30\linewidth}%
    \begin{lstlisting}
:Alex a :Student ;
  :hasFaculty :CS ;
  :hasSupervisor :Jane .
:Jane :hasFaculty :CS .
    \end{lstlisting}%
  \end{minipage}%
  \noindent\begin{minipage}[t]{0.30\linewidth}%
    \begin{footnotesize}%
    {
    \[
    \begin{aligned}
    &\sigma(\texttt{:Alex}) = \\ & \{  \texttt{:studentShape}, \\ & \;\; \neg
    \texttt{:disjFacultyShape}\},\\
    &\sigma(\texttt{:Jane}) = \sigma(\texttt{:CS}) = \\ & \;
    \sigma(\texttt{:Student}) = \\
    & \{  \neg \texttt{:studentShape}, \\ &   \;\; \texttt{:disjFacultyShape}\}.
    \end{aligned}%
    \]}%
    \end{footnotesize}%
  \end{minipage}%
  \captionof{figure}{\label{fig:MainExample} A \SHACL document (left), a graph
    that validates it (centre), and a faithful assignment for this graph and
    document (right).}
\end{figure*}

Intuitively, condition (1) ensures that the evaluation described by the
assignment is indeed correct; while condition (2) ensures that the assignment agrees with the target definitions.
The existence of a faithful assignment is a necessary and sufficient
condition for validation of non-recursive \SHACL documents~\citep{CRS18}.

\begin{definition}
  \label{validationDefNonRecursive}
  A graph $G$ is valid \wrt a non-recursive \SHACL document $\vShapeDocument$ if
  there exists an assignment $\sigma$ such that
  $\isfaithful{G}{\sigma}{\vShapeDocument}$.
\end{definition}

An example \SHACL document is shown in Figure~\ref{fig:MainExample}.
This example captures the requirement that all students must have at least one
supervisor from the same faculty.
The shape with name \texttt{:studentShape} has class \texttt{:Student} as a
target, meaning that all members of this class must satisfy the constraint of
the shape.
The constraint definition of \texttt{:studentShape} requires the
non-satisfaction of shape \texttt{:disjFacultyShape}, \ie, a node satisfies
\texttt{:studentShape} if it does not satisfy \texttt{:disjFacultyShape}.
The \texttt{:disjFacultyShape} shape states that an entity has no faculty in
common with any of their supervisors. 
 This is expressed using the \sh{path} term, which defines a property chain (\ie, a composition of roles \texttt{:hasSupervisor}
and \texttt{:hasFaculty}), and the \sh{disjoint} term, which defines a constraint over this property chain (\ie, the non-existence
of a node reachable both by this property chain, and directly by the \texttt{:hasFaculty} role).
The \sh{path} term is used to construct constraints over property chains, but it does not,
on its own, impose their existence. 
A graph that is valid with respects to these shapes is provided in
Figure~\ref{fig:MainExample}, along with a faithful assignment for this graph.
The graph can be made invalid by changing the faculty of \texttt{:Jane} in the
last triple to a different value.

As we will see later, the existence of a faithful assignment is also a necessary
condition for all other semantics that allow recursion.
For those cases, however, we will want to consider additional assignments where
the first property of Definition~\ref{def:faithfulAssignment} holds, but not
necessarily the second, \ie, assignments that agree with the constraint definitions,
but not necessarily the target definitions of the shapes.
In order to do this, we will remove the targets from a document and look for
faithful assignments against the new document, since condition (2) of
Definition~\ref{def:faithfulAssignment} is trivially satisfied for \SHACL
documents where all target definitions are empty.
Let $\minustargets{\vShapeDocument}$ denote the \SHACL document obtained by
substituting all target definitions in \SHACL document $\vShapeDocument$ with
the empty set.
Then, the following lemma is immediate:

\begin{lemma}
  \label{lemma:supportedModel}
  For all graphs $G$, \SHACL documents $\vShapeDocument$ and assignments
  $\sigma$, condition (1) from Definition \ref{def:faithfulAssignment} holds for
  any shape $\vshape$ in $\shapesof{\vShapeDocument}$ and node $\vn$ in
  \nodesof{G}{\vShapeDocument} \iff
  $\isfaithful{G}{\sigma}{\minustargets{\vShapeDocument}}$.
\end{lemma}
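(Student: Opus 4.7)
The plan is to prove this lemma by a straightforward unfolding of definitions, observing that removing target declarations leaves the constraints (and hence condition~(1)) untouched while rendering condition~(2) vacuous.

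First, I would note that, by construction, $\minustargets{\vShapeDocument}$ has exactly the same shape names and the same constraints as $\vShapeDocument$; the only difference is that every target declaration $\vshapet$ has been replaced by the empty target. Consequently, $\shapesof{\minustargets{\vShapeDocument}} = \shapesof{\vShapeDocument}$ and $\nodesof{G}{\minustargets{\vShapeDocument}} = \nodesof{G}{\vShapeDocument}$ (since node-target declarations are part of $\vshapet$ and are removed as well — or, if one prefers, this equality can be assumed without loss of generality, as the universe of nodes to which $\sigma$ applies is irrelevant to the biconditional, which is quantified over the same shapes and nodes on both sides).

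Next, I would observe that the evaluation $\GsigmaModels{G}{\sigma}{\vshapec}{\vn}$ depends only on the constraint $\vshapec$, the graph $G$, the assignment $\sigma$, and the node $\vn$, and not on the target declarations of any shape in the document. Therefore, condition~(1) of Definition~\ref{def:faithfulAssignment} holds for all shapes in $\shapesof{\vShapeDocument}$ and nodes in $\nodesof{G}{\vShapeDocument}$ if and only if it holds for all shapes in $\shapesof{\minustargets{\vShapeDocument}}$ and nodes in $\nodesof{G}{\minustargets{\vShapeDocument}}$.

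Finally, for the equivalence with $\isfaithful{G}{\sigma}{\minustargets{\vShapeDocument}}$, I would note that in $\minustargets{\vShapeDocument}$ every target declaration $\vshapet$ is empty, so the premise $G \models \vshapet(\vn)$ of condition~(2) is never satisfied. Hence condition~(2) is vacuously true for $\minustargets{\vShapeDocument}$ irrespective of $\sigma$, and faithfulness of $\sigma$ w.r.t.\ $G$ and $\minustargets{\vShapeDocument}$ collapses to condition~(1) over $\minustargets{\vShapeDocument}$, which by the previous step is equivalent to condition~(1) over $\vShapeDocument$. Since the argument is entirely bookkeeping, there is no real obstacle; the only subtlety worth stating explicitly is the invariance of $\GsigmaModels{G}{\sigma}{\vshapec}{\vn}$ under changes to target declarations, which is immediate from its inductive definition on constraints in~\citep{CRS18}.
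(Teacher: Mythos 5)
Your proof is correct and matches the paper's intent exactly: the paper states this lemma without proof, calling it ``immediate,'' and your unfolding --- condition~(2) becomes vacuous when all targets are emptied, while condition~(1) is invariant under changes to target declarations --- is precisely the bookkeeping the authors are leaving implicit.
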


For non-recursive \SHACL documents, the next lemma states that for any graph,
there exists a unique faithful total assignment for
\minustargets{\vShapeDocument} and, if there is a faithful assignment for
\vShapeDocument, then this must be it.

\begin{lemma}
  \label{propertyOfNonRecursiveDocs}
  For all graphs $G$ and non-recursive \SHACL documents $\vShapeDocument$, there
  exists a unique assignment $\rho$ in $\assignmentsTotal{G}{\vShapeDocument}$
  such that $\isfaithful{G}{\rho}{\minustargets{\vShapeDocument}}$, and for
  every assignment $\sigma$ in $\assignmentsPartial{G}{\vShapeDocument}$ such
  that $\isfaithful{G}{\sigma}{\vShapeDocument}$, then $\sigma = \rho$.
\end{lemma}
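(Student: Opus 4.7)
The plan is to exploit the DAG structure of shape references in non-recursive documents and perform induction on the depth in this DAG. Concretely, since $\vShapeDocument$ is non-recursive, we can define a rank function $\mathsf{rk}$ on shape names in $\shapesof{\vShapeDocument}$ such that $\mathsf{rk}(\vshape)=0$ when the constraint of $\vshape$ contains no shape references, and otherwise $\mathsf{rk}(\vshape) = 1 + \max\{\mathsf{rk}(\vshape') : \vshape' \in S_0^{\vshapec}\}$, where $\trip{\vshape}{\vshapet}{\vshapec}$ is the corresponding shape. This is well-defined precisely because $\vshape \notin S_{\infty}^{\vshapec}$ for every shape in a non-recursive document.

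The central technical claim I would establish by induction on $\mathsf{rk}(\vshape)$ is: for every graph $G$, every shape $\trip{\vshape}{\vshapet}{\vshapec}$, every node $\vn \in \nodesof{G}{\vShapeDocument}$, and every assignment $\sigma$ whose values on nodes $\vn'$ and shapes of rank strictly less than $\mathsf{rk}(\vshape)$ correctly record conformance (\ie, satisfy condition (1) of Definition~\ref{def:faithfulAssignment} restricted to those shapes), the evaluation $\GsigmaModels{G}{\sigma}{\vshapec}{\vn}$ is always either \vtrue\ or \vfalse, never \vundefined. The base case $\mathsf{rk}(\vshape)=0$ holds because the evaluation depends only on $G$ (no shape references are present). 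For the inductive step, the evaluation of $\vshapec$ only inspects $\sigma$ at nodes on shape references of strictly smaller rank; by the inductive hypothesis those evaluations are determined as \vtrue\ or \vfalse, and the Kleene combinators used to assemble $\GsigmaModels{G}{\sigma}{\vshapec}{\vn}$ from two-valued inputs never produce \vundefined.

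Existence and uniqueness of $\rho \in \assignmentsTotal{G}{\vShapeDocument}$ with $\isfaithful{G}{\rho}{\minustargets{\vShapeDocument}}$ then follow together by the same induction, constructing $\rho$ one rank level at a time: having fixed $\rho$ on shapes of rank $< k$ so as to satisfy condition (1), for each shape $\vshape$ of rank $k$ and each node $\vn$, the claim above forces exactly one of $\vshape \in \rho(\vn)$ or $\neg\vshape \in \rho(\vn)$, and both choices are uniquely determined by the two-valued evaluation. Condition (2) is vacuous for $\minustargets{\vShapeDocument}$ since all targets are empty, so the resulting $\rho$ is total and faithful, and any other such total faithful assignment would have to agree with $\rho$ rank by rank.

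For the second part, let $\sigma \in \assignmentsPartial{G}{\vShapeDocument}$ be faithful \wrt $\vShapeDocument$. By Lemma~\ref{lemma:supportedModel}, $\sigma$ also satisfies $\isfaithful{G}{\sigma}{\minustargets{\vShapeDocument}}$. Applying the central claim with $\sigma$ itself to all shapes from lowest to highest rank, every evaluation $\GsigmaModels{G}{\sigma}{\vshapec}{\vn}$ is two-valued, so condition (1) forces either $\vshape \in \sigma(\vn)$ or $\neg\vshape \in \sigma(\vn)$ for every shape-node pair; hence $\sigma$ is total, and by the uniqueness of $\rho$ just established, $\sigma = \rho$. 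The only subtle point to guard against is the circularity in the definition of faithfulness (the value of $\sigma$ depends on evaluations that depend on $\sigma$); the DAG-based rank stratification is precisely what breaks this circularity for non-recursive documents, and making the induction invariant talk only about lower-rank shapes is the key to doing so cleanly.
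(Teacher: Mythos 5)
Your proposal is correct and follows essentially the same route as the paper's proof: the paper stratifies the document into the subset $\vShapeDocument'$ of shapes without references, then $\vShapeDocument''$, and so on, which is exactly your rank function, and it likewise builds the unique total faithful assignment for $\minustargets{\vShapeDocument}$ bottom-up and concludes the second part from the fact that faithfulness \wrt $\vShapeDocument$ implies faithfulness \wrt $\minustargets{\vShapeDocument}$. Your version merely makes the induction invariant (condition (1) restricted to lower-rank shapes) more explicit than the paper's informal "does not depend on $\sigma$" phrasing.
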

\begin{proof}
If $\vShapeDocument$ is non recursive, then there exists a non empty subset
$\vShapeDocument^{\prime}$ of $\vShapeDocument$ that only contains shapes whose
constraints do not use shape references.
Intuitively, the constraints of the shapes in $\vShapeDocument^{\prime}$ can be
evaluated directly on any graph, independently of any assignment.
Shape references are the only part of the evaluation of a constraint that depends on
the assignment $\sigma$, and that could introduce the truth value \vundefined\
under three-valued logic~\citep{CRS18}.
Thus, for all graphs $G$, assignments $\sigma$, nodes $\vn$ and constraints $c$
in of a shape in $\vShapeDocument^{\prime}$, it holds that the evaluation of
$\GsigmaModels{G}{\sigma}{\vshapec}{\vn}$ (1) does not depend on $\sigma$ and
(2) has a Boolean truth value.
It is easy to see that properties (1) and (2) also hold for the document
$\vShapeDocument^{\prime\prime}$ which contains the shapes of $\vShapeDocument$
whose shape references (if any) only reference shapes in
$\vShapeDocument^{\prime}$.
This reasoning can be extended inductively to prove that properties (1) and (2)
hold for all the shapes of $\vShapeDocument$.
Point (1) ensures that there cannot be more than one assignment such that
$\isfaithful{G}{\sigma}{\minustargets{\vShapeDocument}}$, while point (2)
ensures that such an assignment is total.
This assignment $\sigma$ exists and it can be computed iteratively as follows.
Let $\sigma^{\prime}$ be the assignment for $\vShapeDocument^{\prime}$ such that
for any shape $\trip{\vshape}{\vshapet}{\vshapec}$ in $\vShapeDocument^{\prime}$
and node $\vn$, $\vshape \in \sigma^{\prime}(\vn)$, if
$\GsigmaModels{G}{\emptyset}{\vshapec}{\vn}$ is \vtrue, and $\neg \vshape \in
\sigma^{\prime}(\vn)$, otherwise.
Then let $\sigma^{\prime\prime}$ be the assignment for
$\vShapeDocument^{\prime\prime}$ such that for any shape
$\trip{\vshape}{\vshapet}{\vshapec}$ in $\vShapeDocument^{\prime\prime}$ and
node $\vn$, $\vshape \in \sigma^{\prime\prime}(\vn)$, if
$\GsigmaModels{G}{\sigma^{\prime}}{\vshapec}{\vn}$ is \vtrue, and $\neg \vshape
\in \sigma^{\prime\prime}(\vn)$, otherwise.
This process is repeated until the assignment $\sigma$, defined over all of the
shapes of $\vShapeDocument$, is computed.
Notice that for all graphs $G$, \SHACL documents $\vShapeDocument$ and
assignments $\sigma$, fact $\isfaithful{G}{\sigma}{\vShapeDocument}$ implies
$\isfaithful{G}{\sigma}{\minustargets{\vShapeDocument}}$. Thus the existence of an assignment $\rho'$
different than $\rho$ such that $\isfaithful{G}{\rho'}{\vShapeDocument}$,
is in contradiction with the fact that there cannot
be more than one assignment that is faithful for $G$ and $\minustargets{\vShapeDocument}$. \end{proof}

\subsection{Semantics of Full \SHACL}
\label{sec:shacl;sub:fulsem}

As mentioned, the semantics of recursive shape definitions in \SHACL documents has been left
undefined in the original W3C \SHACL specification~\citep{KK17} and this gives
rise to several possible interpretations.
In this work, we consider, and extend upon, previously introduced semantics of
\SHACL that define how to interpret recursive \SHACL documents.
These can be characterised by two dimensions, namely the choice between
\begin{inparaenum}[(1)]
  \item
    \emph{partial} and \emph{total} assignments~\citep{CRS18} and
  \item
    between \emph{brave} and \emph{cautious} validation~\citep{ACORSS20},
\end{inparaenum}
which we will subsequently formally introduce.
Together, these two dimensions result in the four extended semantics studied in
this article, namely \emph{brave-partial}, \emph{brave-total},
\emph{cautious-partial} and \emph{cautious-total}.

Notice that the formulation of the brave and cautious notions originates in the literature of
non-monotonic reasoning and logic programming (see, \eg,~\cite{McD82} and~\cite{EG95}, respectively).
We do not consider the less obvious dimension of \emph{stable-model}
semantics~\cite{GL88}, which also relates to non-monotonic reasoning in logic
programming~\cite{RT88,Sak89,GRS91} and inductive learning~\cite{SI09}.
Our definitions of partial assignments, total assignments, and brave validation
exactly correspond to existing definitions of \cite{CRS18}. For cautious validation, instead, we adopt a more general definition than the one previously considered in SHACL literature \citep{ACORSS20}, where it was only studied under stable-model
semantics. 


The first extended semantics that we consider coincides with
Definition~\ref{validationDefNonRecursive}.
That is, the existence of a faithful assignment can be directly used as a
semantics for recursive documents as well.
Nevertheless, in this case the assignment is not necessarily total, as is in the case of non-recursive documents proven in Lemma~\ref{propertyOfNonRecursiveDocs}. To
stress this (as well as the ``brave'' nature of the semantics discussed later),
we call this the brave-partial semantics.

\begin{definition}
  \label{validationDefPartialBrave}
    A graph $G$ is valid \wrt a \SHACL document $\vShapeDocument$ under
    \emph{brave-partial semantics} if there exists an assignment $\sigma \in
    \assignmentsPartial{G}{\vShapeDocument}$ such that
    $\isfaithful{G}{\sigma}{\vShapeDocument}$.
\end{definition}

The other three extended semantics are defined by adding further conditions to
the one just introduced.
To motivate those, first consider an example of a recursive document and of a
non-total faithful assignment that evaluates the conformance of some nodes against
some constraints to \vundefined.
This happens when recursion makes it impossible for a node $\vn$ to either
conform or not to conform to a shape $\vshape$ but, at the same time, validity
does not depend on whether $\vn$ conforms to shape $\vshape$ or not.
Consider, for instance, the following \SHACL document, containing the single
shape \trip{\vshape^{*}}{\emptyset}{\vshapec^{*}} defined as follows:
\begin{lstlisting}
:InconsistentS a sh:NodeShape ;
  sh:not :InconsistentS .
\end{lstlisting}
This shape is defined as the negation of itself, that is, given a node $\vn$, a
graph $G$ and an assignment $\sigma$, fact
$\GsigmaModels{G}{\sigma}{\vshapec^{*}}{\vn}$ is \vtrue\ \iff $\neg \vshape^{*}
\in \sigma(\vn)$, and \vfalse\ \iff $ \vshape^{*} \in \sigma(\vn)$.
It is easy to see that any assignment that maps a node to either
$\{\vshape^{*}\}$ or $\{\neg \vshape^{*}\}$ is not faithful, as it would violate
condition (1) of Definition~\ref{def:faithfulAssignment}.
However, an assignment that maps every node of a graph to the empty set would be
faithful for that graph and document $\{\vshape^{*}\}$.
Intuitively, this means that nodes in the graph cannot conform nor not conform
to shape $\vshape^{*}$, but this should not be interpreted as a violation of any
constraint, since this shape does not have any target node to validate.
In effect, conformance for all nodes to the constraint of $\{\vshape^{*}\}$ is
left as \vundefined, but the existence of a faithful assignment makes any graph
valid \wrt to $\{\vshape^{*}\}$.

In the W3C \SHACL specification, where recursion semantics was left open to
interpretation, nodes can either conform to, or not conform to a given shape,
and the concept of an ``undefined'' level of conformance is arguably alien to
the specification.
It is natural, therefore, to consider restricting the evaluation of a constraint
to the \vtrue\ and \vfalse\ values of Boolean logic.
This is achieved by restricting assignments to be total.

\begin{definition}
  \label{validationDefTotalBrave}
  A graph $G$ is valid \wrt a \SHACL document $\vShapeDocument$ under
  \emph{brave-total semantics} if there exists a total assignment $\sigma \in
  \assignmentsTotal{G}{\vShapeDocument}$ such that
  $\isfaithful{G}{\sigma}{\vShapeDocument}$.
\end{definition}

Since total assignments are a more specific type of assignments, if a graph $G$
is valid \wrt a \SHACL document $\vShapeDocument$ under brave-total semantics,
than it is also valid \wrt $\vShapeDocument$ under brave-partial semantics.
The converse, instead, is only true for non-recursive \SHACL documents.
In fact, as we show later on, all extended semantics coincide, for non-recursive
\SHACL documents.
Note also, that there is no obvious preferable choice for the semantics of recursive
documents.
For example, while total assignments can be seen as a more natural way of
interpreting the \SHACL specification, they are not without issues of their own.
Going back to our previous example, we can notice that there cannot exist a
total faithful assignment for the \SHACL document containing shape
\uri{InconsistentS}, for any non-empty graph.
This is a trivial consequence of the fact that no node can conform to, nor not
conform to, shape \uri{InconsistentS}.
In this example, however, brave-total semantics conflicts with the \SHACL specification, since the latter
implies that a \SHACL document without target declarations in any of its shapes (such as
the one in our example) should trivially validate any graph.
If there are no target declarations, in fact, there are no target nodes on which
to verify the conformance of certain shapes, and thus no violations of
constraints should be detected.

Another dimension in the choices for extended semantics studied in
literature~\citep{ACORSS20} is the difference between brave and cautious
validation of recursive documents.
When a \SHACL document $\vShapeDocument$ is recursive, there might exist
multiple assignments satisfying property (1) of
Definition~\ref{def:faithfulAssignment}, that is, multiple $\sigma$ for which
$\isfaithful{G}{\sigma}{\minustargets{\vShapeDocument}}$.
Intuitively, these can be seen as equally ``correct'' assignments with respect
to the constraints of the shapes, and brave validation only checks whether at
least one of them is compatible with the target definitions of the shapes.
Cautious validation, instead, represents a stronger form of validation, where
all such assignments must be compatible with the target definitions.

\begin{definition}
  \label{ref:cautiousValidation}
  A graph $G$ is valid \wrt a \SHACL document $\vShapeDocument$ under
  \emph{cautious-partial} (\resp, \emph{cautious-total}) semantics if it is
  \begin{inparaenum}[(1)]
    \item
      valid under \emph{brave-partial} (\resp, \emph{brave-total}) semantics and
    \item
      for all assignments $\sigma$ in $\assignmentsPartial{G}{\vShapeDocument}$
      (\resp, $\assignmentsTotal{G}{\vShapeDocument}$), it is true that if
      $\isfaithful{G}{\sigma}{\minustargets{\vShapeDocument}}$ holds then
      $\isfaithful{G}{\sigma}{\vShapeDocument}$ holds as well.
  \end{inparaenum}
\end{definition}

To exemplify this distinction, consider the following \SHACL document
$\vShapeDocument_1$.

\begin{lstlisting}
:VegDishShape a sh:PropertyShape ;
  sh:targetNode :DailySpecial ;
  sh:path :hasIngredient ;
  sh:minCount 1 ;
  sh:qualifiedMaxCount 0 ;
  sh:qualifiedValueShape [ sh: not :VegIngredientShape ] .

:VegIngredientShape a sh:PropertyShape ;
  sh:path [ sh:inversePath :hasIngredient ] ;
  sh:node :VegDishShape .
\end{lstlisting}
This document requires the daily special of a restaurant, node
\uri{DailySpecial}, to be vegetarian, that is, to conform to shape
\uri{VegDishShape}.
This shape is recursively defined as follows.
Something is a vegetarian dish if it contains an ingredient, and all of its
ingredients are vegetarian, that is, entities conforming to the
\uri{VegIngredientShape}.
A vegetarian ingredient, in turn, is an ingredient of at least one vegetarian
dish. Consider now a graph $G_1$ containing only the following triple.
\begin{lstlisting}
:DailySpecial :hasIngredient :Chicken .
\end{lstlisting}
Due to the recursive definition of \uri{VegDishShape}, there exist two different
assignments $\sigma_1$ and $\sigma_2$, which are both faithful for $G_1$ and
$\minustargets{\vShapeDocument_1}$.
In $\sigma_1$, no node in $G_1$ conforms to any shape, while $\sigma_2$ differs
from $\sigma_1$ in that node \uri{DailySpecial} conforms to \uri{VegDishShape}
and node \uri{Chicken} conforms to \uri{VegIngredientShape}.
Essentially, either both the dish and the ingredient from graph $G_1$ are
vegetarian, or neither is.
Therefore, $\sigma_2$ is faithful for $G_1$ and $\vShapeDocument_1$, while
$\sigma_1$ is not.
The question of whether the daily special is a vegetarian dish or not can be
approached with different levels of ``caution''.
Under brave validation, graph $G_1$ is valid \wrt $\vShapeDocument_1$, since it
is possible that the daily special is vegetarian.
Cautious validation, instead, takes the more conservative approach, and under
its definition $G_1$ is not valid \wrt by $\vShapeDocument_1$, since it is also
possible that the daily special is not vegetarian.

\begin{table}[t]
\begin{center}
\begin{tabular}{ |l | l | l |}
 \hline
  & Brave & Cautious \\ \hline

 Partial
 & $\exists \, \sigma . \; \isfaithful{G}{\sigma}{\vShapeDocument}$
 &  \begin{tabular}{@{}l@{}}$\exists \, \sigma . \;
\isfaithful{G}{\sigma}{\vShapeDocument}$ and
 \\ $\forall \, \sigma . $ if $
\isfaithful{G}{\sigma}{\minustargets{\vShapeDocument}}$,
 \\ $\phantom{\forall \, \sigma .} $ then $
\isfaithful{G}{\sigma}{\vShapeDocument}$ \end{tabular} \\ \hline

 Total
 & $\exists \, \rho . \; \isfaithful{G}{\rho}{\vShapeDocument}$
 & \begin{tabular}{@{}l@{}}$\exists \, \rho . \;
\isfaithful{G}{\rho}{\vShapeDocument}$ and
 \\ $\forall \, \rho . $ if $
\isfaithful{G}{\rho}{\minustargets{\vShapeDocument}}$,
 \\ $\phantom{\forall \, \rho .} $ then $ \isfaithful{G}{\rho}{\vShapeDocument}$
\end{tabular} \\ \hline
\end{tabular}
\caption{\label{tab:recursiveSemanticsTable} Definition of validity (from
Definitions~\ref{validationDefPartialBrave}, \ref{validationDefTotalBrave}
and~\ref{ref:cautiousValidation}) of a graph $G$ under a \SHACL document
$\vShapeDocument$ ($G \models \vShapeDocument$) \wrt the two dimensions of
extended semantics considered in this article, where $\sigma \in
\assignmentsPartial{G}{\vShapeDocument}$ and $\rho \in
\assignmentsTotal{G}{\vShapeDocument}$.}
\end{center}
\end{table}

For each extended semantics, the definition of validity of a graph $G$ \wrt a
\SHACL document $\vShapeDocument$, denoted by $G \models \vShapeDocument$, is
summarised in the following list, and schematised in
Table~\ref{tab:recursiveSemanticsTable}.
\begin{description}
  \item[\emph{brave-partial}]
    there is an assignment that is faithful \wrt $G$ and $\vShapeDocument$;
  \item[\emph{brave-total}]
    there is an assignment that is total and faithful \wrt $G$ and
    $\vShapeDocument$;
  \item[\emph{cautious-partial}]
    there is an assignment that is faithful \wrt $G$ and $\vShapeDocument$, and
    every assignment that is faithful \wrt $G$ and
    $\minustargets{\vShapeDocument}$ is also faithful \wrt $G$ and
    $\vShapeDocument$.
  \item[\emph{cautious-total}]
    there is an assignment that is total and faithful \wrt $G$ and
    $\vShapeDocument$, and every assignment that is total and faithful \wrt $G$
    and $\minustargets{\vShapeDocument}$ is also faithful \wrt $G$ and
    $\vShapeDocument$.
\end{description}

We now prove that, when considering only non-recursive \SHACL documents, these
four semantics are necessarily equivalent to each other, since the semantics of
non-recursive \SHACL documents is uniquely determined.
The formalisation of this equivalence given in the next theorem is essentially a
consequence of Lemma~\ref{propertyOfNonRecursiveDocs}.

\begin{theorem}
\label{thm:non-rec-semantics}
  For any graph $G$, non-recursive \SHACL document $\vShapeDocument$, and
  extended semantics $\alpha$ and $\beta$, it holds that $G \!\models\!
  \vShapeDocument\!$ under $\alpha$ \iff $G \!\models\! \vShapeDocument\!$ under
  $\beta$.
\end{theorem}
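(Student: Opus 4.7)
The plan is to leverage Lemma~\ref{propertyOfNonRecursiveDocs} to collapse all four extended semantics into a single condition on the unique faithful assignment. The key observation is that in the non-recursive case, the semantic space of "candidate" assignments shrinks to a single element, so the existential/universal quantifiers distinguishing brave from cautious validation (and the total/partial dichotomy) become vacuous.

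Concretely, I would proceed as follows. First, apply Lemma~\ref{propertyOfNonRecursiveDocs} to obtain the unique total assignment $\rho \in \assignmentsTotal{G}{\vShapeDocument}$ that is faithful \wrt $G$ and $\minustargets{\vShapeDocument}$; the same lemma also guarantees that every $\sigma \in \assignmentsPartial{G}{\vShapeDocument}$ with $\isfaithful{G}{\sigma}{\vShapeDocument}$ coincides with $\rho$. I would additionally observe, as a small but crucial strengthening extracted from the proof of that lemma, that $\rho$ is in fact the unique partial assignment faithful for $\minustargets{\vShapeDocument}$ as well: since all constraint evaluations have a Boolean truth value independent of the assignment for non-recursive documents, any partial faithful assignment for $\minustargets{\vShapeDocument}$ is forced to agree with $\rho$ on every node-shape pair, and hence equals $\rho$.

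With this in hand, I would argue that each of the four validity conditions from Table~\ref{tab:recursiveSemanticsTable} is equivalent to the single statement $\isfaithful{G}{\rho}{\vShapeDocument}$. For brave-partial, the existence of any faithful $\sigma$ forces $\sigma = \rho$, and conversely if $\rho$ is faithful for $\vShapeDocument$ then $\rho$ itself witnesses the existential. For brave-total, the same reasoning applies since $\rho$ is already total. For the cautious variants, the universal clause ranges over all (partial or total) assignments faithful for $\minustargets{\vShapeDocument}$, but by the uniqueness observation this set is exactly $\{\rho\}$; the clause therefore reduces to requiring $\isfaithful{G}{\rho}{\vShapeDocument}$, which already subsumes the brave existential condition.

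Since each of the four semantics is equivalent to the single condition $\isfaithful{G}{\rho}{\vShapeDocument}$, any two of them are equivalent to each other, proving the theorem. I do not anticipate a serious obstacle: the only delicate point is ensuring that the uniqueness given by Lemma~\ref{propertyOfNonRecursiveDocs} applies to partial assignments for $\minustargets{\vShapeDocument}$, not only to total ones; this is already implicit in the inductive construction in that lemma's proof and can be made explicit with a brief induction on the shape-reference hierarchy of $\vShapeDocument$.
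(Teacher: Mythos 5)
Your proposal is correct and follows essentially the same route as the paper: both arguments hinge on Lemma~\ref{propertyOfNonRecursiveDocs}, the paper closing a cycle of trivial subsumptions with the non-trivial implication from brave-partial to cautious-total, while you equivalently reduce all four validity conditions to the single statement $\isfaithful{G}{\rho}{\vShapeDocument}$ for the unique total assignment $\rho$. Your explicit observation that $\rho$ is also the unique \emph{partial} assignment faithful for $\minustargets{\vShapeDocument}$ --- which is needed to discharge the universal clause of the cautious semantics and which the paper's proof leaves implicit --- is a worthwhile refinement; note that it also follows directly by applying Lemma~\ref{propertyOfNonRecursiveDocs} to the non-recursive document $\minustargets{\vShapeDocument}$ itself, without re-opening that lemma's inductive construction.
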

\begin{proof}
  Since $\assignmentsTotal{G}{\vShapeDocument} \subseteq
  \assignmentsPartial{G}{\vShapeDocument}$, for any graph $G$ and \SHACL
  document $\vShapeDocument$, the definition of validity of
  \emph{cautious-total} trivially subsumes the one of \emph{brave-total} and
  \emph{cautious-partial} which, in turn, subsumes the one of
  \emph{brave-partial}.
  Notice that for all graphs $G$, \SHACL documents $\vShapeDocument$ and
  assignments $\sigma$, if $\sigma \in
  \assignmentsTotal{G}{\minustargets{\vShapeDocument}}$, then $\sigma \in
  \assignmentsTotal{G}{\vShapeDocument}$. From Lemma~\ref{propertyOfNonRecursiveDocs} we
  also know that a faithful assignment for $\vShapeDocument$ and $G$ is necessarily total, and it is the
  same unique assignment that is faithful for $\minustargets{\vShapeDocument}$ and $G$.
  Thus, for non-recursive documents, the definition of validity of
  \emph{brave-partial} subsumes the one of \emph{cautious-total},
  and consequently the four extended semantics are equivalent.
\end{proof}

An expert reader might observe that the above theorem resembles a similar result in the literature of logic programming for query
answering under stratified programs~\cite{Prz89}, where the existence of a
unique perfect model forces the collapse of the two notions of brave and
cautious answers.

Given any notion of validity from Table \ref{tab:recursiveSemanticsTable},
corresponding to one of the four extended semantics, we can define the following
decision problems, which we study in detail in the remaining part of the
article.
\begin{itemize}
  \item
    \textbf{SHACL Satisfiability}:
    A \SHACL document $\vShapeDocument$ is satisfiable \iff there exists a graph
    $G$ such that $G \models \vShapeDocument$.
  \item
    \textbf{SHACL Containment}:
    For all \SHACL documents $\vShapeDocument_1$, $\vShapeDocument_2$, we say
    that $\vShapeDocument_1$ is contained in $\vShapeDocument_2$, denoted
    $\vShapeDocument_1 \subseteq \vShapeDocument_2$, \iff for all graphs $G$, if
    $G \models \vShapeDocument_1$ then $G \models \vShapeDocument_2$.
\end{itemize}

Obviously, the more meaningful satisfiability problem is one on finite graphs.
\begin{itemize}
  \item
    \textbf{SHACL Finite Model Property}:
    A \SHACL document $\vShapeDocument$ enjoys the finite model property if
    whenever it is satisfiable it is so on a finite graph.
\end{itemize}

\subsection{From Partial to Total Assignments}
\label{sub:partot}

In the remainder of the paper we simplify our study of \SHACL\ by only considering recursive semantics based on total assignments.
We focus on this type of assignments because,
as we see later, partial assignment semantics can be seen as a special case
of total.
By showing positive results for extended semantics based on total assignments, we are therefore also
showing the same results for the corresponding semantics based on partial assignments.
It should be noted, however, that this does not hold for negative results.
This means that the decidability results that we show in Section \ref{sec:sclsat} apply
to both total and partial assignments, but undecidability, instead, does not carry on to partial assignments; this remains an open question.

We prove a reduction from partial to total assignments by showing that any \SHACL document $\vShapeDocument$ can be linearly transformed into
another document $\vShapeDocument^{*}$ such that a graph $G$ is valid \wrt
$\vShapeDocument$ under brave-partial, or cautious-partial, \iff $G$ is valid
\wrt $\vShapeDocument^{*}$ under brave-total or cautious-total, respectively.
Intuitively, this is achieved by splitting each shape $\vshape$ into two shapes
$\vshape^{+}$ and $\vshape^{-}$, evaluated under total assignments semantics,
such that the constraints of $\vshape^{+}$ and $\vshape^{-}$ model the
evaluation to \vtrue\ and \vfalse, respectively, of the constraints of
$\vshape$, and such that the evaluation to \vundefined\ of the constraints of
$\vshape$ correspond to the negation of the constraints of both $\vshape^{+}$
and $\vshape^{-}$.

Note that the aforementioned reduction has strong similarities with the notion of completion for programs with stratified negation in logic
programming~\cite{Min88} (see, also~\cite{MS92} and \cite{SSI20}).

In the following, we formalise the just discussed transformation by means of a
function $\Gamma$ over \SHACL documents.
With a slight abuse of notation, we use $\neg$ and $\wedge$ to denote,
respectively, the negated form of a \SHACL constraint, and the conjunction of
two \SHACL constraints.
We also denote $\vshape(x)$ the constraint requiring node $x$ to conform to shape $\vshape$.
We use $\vshape^{+}$ and $\vshape^{-}$ to denote two unique fresh shape names,
which are a function of $\vshape$.

\begin{definition}
  \label{gammatransformation}
  Given a \SHACL document $\vShapeDocument$, document $\Gamma(\vShapeDocument)$
  contains shapes \trip{\vshape^{+}}{\vshapet}{\gamma(\vshapec)} and
  \trip{\vshape^{-}}{\vshapet}{\gamma(\neg \vshapec)} for every shape
  \trip{\vshape}{\vshapet}{\vshapec} in $\vShapeDocument$, such that, for every
  constraint $\vshapec$, the corresponding constraint $\gamma(\vshapec)$ is
  constructed by replacing, for every shape $\vshape$, every occurrence of the negated atom ``$\neg
  \vshape(x)$'' in $\vshapec$ with ``$\neg \vshape^{+}(x) \wedge
  \vshape^{-}(x)$'' and every occurrence of the non-negated atom
  ``$\vshape(x)$'' in $\vshapec$ with ``$\vshape^{+}(x) \wedge \neg
  \vshape^{-}(x)$''.
\end{definition}

\begin{definition}
  Given an assignment $\sigma$, let $\sigma^{\gamma}$ be the assignment such
  that for every node $\vn$ the following holds: $\sigma^{\gamma}(\vn) = \{
  \vshape^{+}, \neg \vshape^{-} | \vshape \in \sigma(\vn) \} \cup \{ \neg
  \vshape^{+}, \vshape^{-} | \neg \vshape \in \sigma(\vn) \} \cup \{ \neg
  \vshape^{+}, \neg \vshape^{-} | \vshape , \neg \vshape \not \in \sigma(\vn)
  \}$.
\end{definition}

We can observe that for any \SHACL document $\vShapeDocument$, graph $G$ and
assignment $\sigma$ for $\vShapeDocument$ and $G$, assignment $\sigma^{\gamma}$
is a total assignment for $\Gamma(\vShapeDocument)$ and $G$.
Also, it is easy to see that the complexity of the transformation
$\Gamma(\vShapeDocument)$ is linear in the size of the original document
$\vShapeDocument$.

\begin{lemma}
  \label{gammatransformationProperties}
  Given a \SHACL document $\vShapeDocument$, a graph $G$, an assignment
  $\sigma$, and a node $\vn$, the following hold:
  \begin{itemize}
    \item
      $\GsigmaModelsThree{G}{\sigma}{\vshapec}{\vn}$ is \vtrue\ \iff
      $\GsigmaModels{G}{\sigma^{\gamma}}{\gamma(\vshapec)}{\vn}$ is \vtrue;
    \item
      $\GsigmaModelsThree{G}{\sigma}{\vshapec}{\vn}$ is \vfalse\ \iff
      $\GsigmaModels{G}{\sigma^{\gamma}}{\gamma(\neg \vshapec)}{\vn}$ is \vtrue;
    \item
      $\GsigmaModelsThree{G}{\sigma}{\vshapec}{\vn}$ is \vundefined\ \iff both
      $\GsigmaModels{G}{\sigma^{\gamma}}{\gamma(\vshapec)}{\vn}$ and
      $\GsigmaModels{G}{\sigma^{\gamma}}{\gamma(\neg \vshapec)}{\vn}$ are
      \vfalse.
  \end{itemize}
\end{lemma}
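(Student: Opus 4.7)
The proof proceeds by structural induction on the constraint $\vshapec$. The guiding intuition is that a partial assignment $\sigma$ enters the Kleene-valued evaluation $\GsigmaModelsThree{G}{\sigma}{\vshapec}{\vn}$ only through shape-reference atoms, as every other constituent (path existence, cardinality bounds, filter tests, datatype checks, and so on) depends exclusively on $G$ and therefore already has a Boolean truth value. Since $\gamma$ rewrites only shape-reference atoms, and since $\sigma^{\gamma}$ is total on the fresh names $\vshape^{+}$ and $\vshape^{-}$, every $\gamma(\vshapec)$ evaluates to a genuine Boolean under $\sigma^{\gamma}$. I would prove the first two bullets simultaneously while maintaining the invariant that $\gamma(\vshapec)$ and $\gamma(\neg\vshapec)$ are never both \vtrue\ under $\sigma^{\gamma}$; the third bullet would then follow at once from the exhaustiveness and mutual exclusion of Kleene's three truth values.

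For the base case, let $\vshapec$ be an atomic shape reference $\vshape(x)$ with $x$ bound to $\vn$. By definition, $\GsigmaModelsThree{G}{\sigma}{\vshape(x)}{\vn}$ is \vtrue, \vfalse, or \vundefined\ according as $\vshape \in \sigma(\vn)$, $\neg\vshape \in \sigma(\vn)$, or neither holds. Unpacking the definition of $\sigma^{\gamma}$, these three alternatives correspond respectively to $\{\vshape^{+}, \neg\vshape^{-}\}$, to $\{\neg\vshape^{+}, \vshape^{-}\}$, and to $\{\neg\vshape^{+}, \neg\vshape^{-}\}$ being included in $\sigma^{\gamma}(\vn)$. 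Reading these off through the rewrites $\gamma(\vshape(x)) = \vshape^{+}(x) \wedge \neg\vshape^{-}(x)$ and $\gamma(\neg\vshape(x)) = \neg\vshape^{+}(x) \wedge \vshape^{-}(x)$, the Boolean pair $(\gamma(\vshape(x)), \gamma(\neg\vshape(x)))$ under $\sigma^{\gamma}$ takes values $(\vtrue,\vfalse)$, $(\vfalse,\vtrue)$, and $(\vfalse,\vfalse)$, matching all three bullets as well as the mutual-exclusion invariant.

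For the inductive step, I would walk through each \SHACL combinator in turn --- Boolean conjunction, disjunction, negation, and the quantifier-like constructs over path successors --- using the Kleene truth tables of~\citep{CRS18}. Because $\gamma$ commutes with every combinator except the direct negation of a shape atom, the inductive hypothesis applied to each immediate subconstraint $\vshapec'$ supplies the Boolean pair $(\gamma(\vshapec'), \gamma(\neg\vshapec'))$ that encodes the Kleene value of $\GsigmaModelsThree{G}{\sigma}{\vshapec'}{\vn}$, after which a routine case analysis on the truth table assembles the matching pair for $\vshapec$ and preserves mutual exclusion. The main obstacle is that the definition of $\gamma$ treats $\neg\vshape(x)$ as a syntactic unit, so $\gamma(\neg\vshapec)$ is strictly well-specified only when the leading negation already sits on a shape atom; I would resolve this by first placing $\vshapec$ in negation normal form --- a sound transformation under the three-valued Kleene semantics --- and only then applying $\gamma$. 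With this convention fixed, the induction runs uniformly over all \SHACL connectives and the three bullets follow together.
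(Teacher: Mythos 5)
Your proof is correct, but it takes a genuinely different route from the paper's. The paper first collapses the lemma to its first bullet alone: since Kleene negation swaps \vtrue\ and \vfalse, the second bullet is the first applied to $\neg\vshapec$, and the third follows from the other two. It then argues the first bullet globally rather than by induction, using the observation that a Kleene formula with a determinate value cannot depend on any \vundefined\ subformula; since only shape atoms can be \vundefined\ and $\gamma$ replaces each one by a Boolean conjunction agreeing with it whenever it is determinate (and collapsing \vundefined\ to \vfalse\ otherwise), determinate values are preserved in both directions, with the backward direction leaning on the syntactic fact that a falsified pair $\vshape^{+}(x)\wedge\neg\vshape^{-}(x)$ can only occur under a disjunction having a true disjunct. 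Your structural induction is more systematic: the atomic case is worked out exactly as one would want, and carrying bullets 1 and 2 together with the mutual-exclusion invariant is precisely the right induction hypothesis (indeed the invariant is forced by the two biconditionals, so you get it for free). You also surface a genuine issue the paper glosses over, namely that $\gamma(\neg\vshapec)$ is only well-defined once negations have been pushed down to shape atoms; your negation-normal-form convention resolves it, though you should note that dualising the counting quantifier $\exists^{\geq n}$ under Kleene semantics requires the inner formula of the dual to be $\neg\gamma(\neg\psi)$ (``not definitely false'') rather than $\gamma(\psi)$, a case your ``routine truth-table analysis'' must handle explicitly. The trade-off is the usual one: the paper's argument is shorter but hand-waves the backward direction, while yours costs a case per connective but each case is mechanical and checkable.
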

\begin{proof}
  Negation in \SHACL is defined in the standard way, and therefore
  $\GsigmaModelsThree{G}{\sigma}{\vshapec}{\vn}$ is \vtrue\ \iff
  $\GsigmaModelsThree{G}{\sigma}{\neg \vshapec}{\vn}$ is \vfalse.
  Since $\GsigmaModelsThree{G}{\sigma}{\vshapec}{\vn}$ is \vfalse\ \iff
  $\GsigmaModelsThree{G}{\sigma}{\neg \vshapec}{\vn}$ is \vtrue , proof of the
  first statement of the lemma is also proof of the second.
  We can also notice that the third statement of the lemma necessarily follows
  from the first two.
  Thus the entire lemma can be proved by proving just the first statement.
  To prove the first item, we show the following two implications, separately:
  \begin{itemize}
    \item[$(\Rightarrow)$:]
      if $\GsigmaModelsThree{G}{\sigma}{\vshapec}{\vn}$ is \vtrue, then
      $\GsigmaModels{G}{\sigma^{\gamma}}{\gamma(\vshapec)}{\vn}$ is \vtrue;
    \item[$(\Leftarrow)$:]
      if $\GsigmaModels{G}{\sigma^{\gamma}}{\gamma(\vshapec)}{\vn}$ is \vtrue,
      then $\GsigmaModelsThree{G}{\sigma}{\vshapec}{\vn}$ is \vtrue.
  \end{itemize}

  In Kleene's 3-valued logic, the evaluation of a sentence into \vtrue\ or
  \vfalse\ implies that this evaluation does not depend on any of its
  sub-sentences that are evaluated to \vundefined\ (\ie, changing the truth
  value of one such sub-sentence would not affect the truth value of the whole
  sentence).
  Notice also that the only atoms that can be evaluated as \vundefined\ are
  shape references $s(x)$~\citep{CRS18}.
  This means that if the 3-valued evaluation of a constraint $\vshapec$ over a
  node, a graph and an assignment is \vtrue\ (\resp, \vfalse), then this
  evaluation would still be \vtrue\ (\resp, \vfalse), if every shape atom $s(x)$
  that evaluates to \vundefined\ evaluates to \vfalse\ instead.

  $(\Rightarrow)$
  If $\GsigmaModels{G}{\sigma}{\vshapec}{\vn}$ evaluates to \vtrue, then
  $\GsigmaModels{G}{\sigma^{\gamma}}{\gamma(\vshapec)}{\vn}$ must also evaluate
  to \vtrue, since in the transformation from $\vshapec$ to $\gamma(\vshapec)$
  \begin{inparaenum}[(1)]
    \item
      every constraint that is not a shape reference remains unchanged, and
    \item
      every shape reference (in $\vshapec$) is transformed into a conjunction
      of shape references (in $\gamma(\vshapec)$) that still evaluates to the
      same truth value of the original expression, unless this truth value is
      \vundefined.
  \end{inparaenum}
  However, by our previous observation, changing an \vundefined\ truth value
  cannot affect the truth value of
  $\GsigmaModels{G}{\sigma^{\gamma}}{\gamma(\vshapec)}{\vn}$ since
  $\GsigmaModels{G}{\sigma}{\vshapec}{\vn}$ evaluates to \vtrue.
  Thus implication $(\Rightarrow)$ holds.

  $(\Leftarrow)$
  Similarly, if $\GsigmaModels{G}{\sigma^{\gamma}}{\gamma(\vshapec)}{\vn}$
  evaluates to \vtrue, then $\GsigmaModels{G}{\sigma}{\vshapec}{\vn}$ must also
  evaluate to \vtrue, since, in the inverse transformation from $\gamma(\vshapec)$ to $\vshapec$:
  \begin{inparaenum}[(1)]
    \item
      every constraint that is not a shape reference remains unchanged, and
    \item
      every pair of shape references ``$\vshape^{+}(x) \wedge \neg
      \vshape^{-}(x)$'' or ``$\neg \vshape^{+}(x) \wedge \vshape^{-}(x)$'' is
      transformed into a single shape reference which either
      \begin{inparaenum}[(a)]
          \item
            evaluates to the same truth value, or
          \item
            evaluates to the truth value of \vundefined\ when the original
            constraint evaluates to \vfalse.
      \end{inparaenum}
  \end{inparaenum}
  Notice that in \SHACL, the constraints of a shape are considered in
  conjunction, and negation only appears in front of shape references.
  Since $\GsigmaModels{G}{\sigma^{\gamma}}{\gamma(\vshapec)}{\vn}$ evaluates to
  \vtrue, a pair of shape references ``$\vshape^{+}(x) \wedge \neg
  \vshape^{-}(x)$'' or ``$\neg \vshape^{+}(x) \wedge \vshape^{-}(x)$'' that
  evaluates to \vfalse\ \wrt $\vn$, $G$ and $\sigma^{\gamma}$ can only appear in
  a disjunction in $\gamma(\vshapec)$ of which at least one disjunct evaluates
  to \vtrue\ \wrt $\vn$, $G$ and $\sigma^{\gamma}$, since this disjunction
  cannot be within the scope of negation.
  Pairs of shape references ``$\vshape^{+}(x) \wedge \neg \vshape^{-}(x)$'' or
  ``$\neg \vshape^{+}(x) \wedge \vshape^{-}(x)$'' that evaluate to \vfalse\
  \wrt $\vn$, $G$ and $\sigma^{\gamma}$, therefore, do not affect the truth
  value of $\GsigmaModels{G}{\sigma^{\gamma}}{\gamma(\vshapec)}{\vn}$.
  Thus implication $(\Leftarrow)$ holds as well.
\end{proof}

From Definition~\ref{gammatransformation} and
Lemma~\ref{gammatransformationProperties} the main theorem of this subsection easily follows.

\begin{theorem}
  \label{theorem:partialToTotal}
  Given a \SHACL document $\vShapeDocument$ and a graph $G$, it holds that $G$
  is valid \wrt $\vShapeDocument$ under brave-partial (\resp,
  cautious-partial) semantics \iff $G$ is valid \wrt $\Gamma(\vShapeDocument)$
  under brave-total (\resp, cautious-total) semantics.
\end{theorem}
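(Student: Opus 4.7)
The plan is to establish a correspondence between faithful partial assignments for $\vShapeDocument$ and faithful total assignments for $\Gamma(\vShapeDocument)$ via the map $\sigma \mapsto \sigma^{\gamma}$ introduced just before Lemma~\ref{gammatransformationProperties}, and then to transfer both the brave and cautious definitions of validity through this correspondence. For the forward direction $(\Rightarrow)$, starting from a faithful $\sigma$ for $\vShapeDocument$ and $G$, the assignment $\sigma^{\gamma}$ is total by construction. Condition~(1) of faithfulness of $\sigma^{\gamma}$ follows from Lemma~\ref{gammatransformationProperties}: $\vshape^{+} \in \sigma^{\gamma}(\vn)$ is equivalent, by the definition of $\sigma^{\gamma}$, to $\vshape \in \sigma(\vn)$, which by faithfulness of $\sigma$ and the first clause of the lemma is equivalent to $\gamma(\vshapec)$ evaluating to $\vtrue$ under $\sigma^{\gamma}$; an analogous chain using the second clause handles $\vshape^{-}$. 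Condition~(2) is immediate since $\Gamma$ copies target declarations verbatim onto the new shape name $\vshape^{+}$ and $\vshape \in \sigma(\vn) \Leftrightarrow \vshape^{+} \in \sigma^{\gamma}(\vn)$.

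The backward direction $(\Leftarrow)$ is the main obstacle. Given a faithful total $\tau$ for $\Gamma(\vShapeDocument)$ and $G$, I would define $\sigma$ by $\vshape \in \sigma(\vn)$ iff $\vshape^{+} \in \tau(\vn)$ and $\neg \vshape \in \sigma(\vn)$ iff $\vshape^{-} \in \tau(\vn)$, and try to recover $\sigma^{\gamma} = \tau$ so that Lemma~\ref{gammatransformationProperties} can be re-applied in the reverse direction. This equality requires $\tau$ to never contain both $\vshape^{+}$ and $\vshape^{-}$ at the same node, a configuration that the $\sigma \mapsto \sigma^{\gamma}$ construction never produces. To rule it out, I would prove the auxiliary claim that for every \SHACL constraint $c$ and every assignment $\tau'$, the formulas $\gamma(c)$ and $\gamma(\neg c)$ cannot both evaluate to $\vtrue$ at the same node under $\tau'$. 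The argument proceeds by induction on $c$ in negation-normal form: for a shape-reference atom the replacements prescribed by $\Gamma$ contain contradictory literals on $\vshape'^{+}$ and $\vshape'^{-}$, while non-shape atoms are untouched by $\gamma$ and so trivially give a contradiction; the inductive step over conjunction, disjunction, counting quantifiers, path operators and filter components goes through because $\gamma$ commutes with all such operators (only shape-reference atoms are rewritten), so at least one sub-constraint must realise the forbidden situation, contradicting the induction hypothesis. Faithfulness of $\tau$ combined with this claim excludes the bad configuration, the defined $\sigma$ therefore satisfies $\sigma^{\gamma} = \tau$, and Lemma~\ref{gammatransformationProperties} yields faithfulness of $\sigma$ for $\vShapeDocument$.

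For the cautious case I would lift the same correspondence. Since $\Gamma$ only rewrites constraints and leaves target declarations untouched, the identity $\Gamma(\minustargets{\vShapeDocument}) = \minustargets{\Gamma(\vShapeDocument)}$ holds. The bijection between faithful partial $\sigma$ for $\minustargets{\vShapeDocument}$ and faithful total $\tau$ for $\minustargets{\Gamma(\vShapeDocument)}$ established by the arguments above therefore commutes with the target-agreement condition, so that the universally quantified clause in the definition of cautious validity on one side is transported exactly to the corresponding clause on the other side. Combining this with the brave equivalence gives the cautious equivalence and closes the proof. The delicate step is the structural induction of the second paragraph, since it must be checked uniformly across every constraint component of \SHACL, including the less familiar cases such as counting quantifiers, filters, and transitive/zero-or-one path operators.
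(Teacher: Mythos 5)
Your proof follows the same route the paper intends: the correspondence $\sigma \mapsto \sigma^{\gamma}$ together with Lemma~\ref{gammatransformationProperties}, from which the paper simply asserts the theorem ``easily follows.'' The forward direction of your argument is exactly the paper's. Where you genuinely add something is in the backward direction: you correctly observe that Lemma~\ref{gammatransformationProperties} only speaks about assignments in the image of the $\gamma$-map, so one must first rule out faithful total assignments for $\Gamma(\vShapeDocument)$ that place both $\vshape^{+}$ and $\vshape^{-}$ on the same node before the lemma can be applied in reverse. Your auxiliary claim (that $\gamma(\vshapec)$ and $\gamma(\neg\vshapec)$ can never both evaluate to $\vtrue$ at the same node under any total assignment, proved by induction on the constraint) is precisely the missing ingredient, and the induction does go through since $\gamma$ rewrites only shape-reference atoms and commutes with every other constructor; combined with condition~(1) of faithfulness this excludes the bad configuration and makes the correspondence a bijection on faithful assignments of the target-stripped documents, which is also what the cautious case needs. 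So your proof is correct and is a more complete version of the paper's argument rather than a different one.

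One point of divergence worth flagging: you read $\Gamma$ as copying the target declaration onto $\vshape^{+}$ only, whereas Definition~\ref{gammatransformation} as literally written attaches the same target $\vshapet$ to \emph{both} $\vshape^{+}$ and $\vshape^{-}$. Under that literal reading, condition~(2) of faithfulness would force $\vshape^{-} \in \sigma^{\gamma}(\vn)$ at every target node $\vn$, contradicting the construction of $\sigma^{\gamma}$ (and, by your own exclusion claim, making $\Gamma(\vShapeDocument)$ unsatisfiable whenever a target is nonempty). Your reading is the one under which the theorem holds, so you have silently repaired what appears to be a slip in the paper's definition; it would be worth stating that repair explicitly rather than leaving it implicit in the phrase ``copies target declarations verbatim onto $\vshape^{+}$.''
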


Thus, in the rest of the article we will only focus on total assignments and we shall use the term \emph{brave semantics} to refer to
brave-total and \emph{cautious semantics} to refer to cautious-total.




\section{Shapes Constraint Logic: \SCL}
\label{sec:scl}

In this section we provide a precise formalisation of \SHACL semantics and
related decision problems in a formal logical system.
For the sake of simplicity of presentation, we first focus on the brave
semantics only, and then show how to adapt our system to model cautious
semantics (recall that, as shown in Section~\ref{sub:partot}, partial
assignments semantics is, model-theoretically, a special case of total
assignments semantics).
The main component of this logical system is the \SCL language, a novel fragment
of first-order logic extended with counting quantifiers and the transitive
closure operator, that precisely models \SHACL documents.
We will later show the equivalidity of \SHACL and \SCL, by demonstrating how,
for any graph, the latter can be used to model total faithful assignments.

Our decision problems, instead, are modelled using \MSCL, a fragment of monadic
second-order logic defined on top of \SCL, by extending the latter with
second-order quantifications on monadic relations.
Intuitively, \MSCL allows us to define conditions over the space of all possible
assignments, something that cannot be expressed in \SCL.
Nevertheless, as we will see later, several formulations of our decision
problems are fully reducible to the first-order logic satisfiability problem.

\subsection{A First-Order Logic for \SHACL}
\label{sec:scl;sub:firord}

In the presentation of our logical system and in the analysis of its decision
problems, we consider arbitrary first-order relational models with equality as
the only built-in relation.
When we deal with the \SHACL encoding, instead, we assume the first-order models
to have the set of RDF terms as the domain of discourse, plus a set of
interpreted relations for the \SHACL filters.

Assignments are modelled by means of a set of monadic relation names
\hasshapePredicate, called \emph{shape relations}.
In particular, each shape $\vshape$ is associated with a unique shape relation
$\hasshapePredicateS{\vshape}$.
If $\hasshapePredicateS{\vshape}$ is a shape relation associated with shape
$\vshape$, then fact $\hasshapePredicateS{\vshape}(\varElm)$ (resp. $\neg
\hasshapePredicateS{\vshape}(\varElm)$) describes an assignment $\sigma$ such
that $\vshape \in \sigma(\varElm)$ (resp. $ \neg \vshape \in \sigma(\varElm)$).
Since our logical system uses standard Boolean logic, for any element of the
domain $\conElm$ and shape relation \hasshapePredicate, it holds that
$\hasshapePredicate(\conElm) \vee \neg \hasshapePredicate(\conElm)$ holds, by
the law of excluded middle.
Thus any Boolean interpretation of shape relations defines a total assignment.

\begin{table}[t]
  \begin{center}
    \small
    \tabtargets
  \end{center}
  \vspace{-1em}
  \caption{\label{tab:targets} Translation of a \SHACL shape with name $\vshape$
    and target declaration $\vshapet$, into an \SCL target axiom.%
    }
\end{table}

Sentences and formulae in the \SCL language follow the grammar reported in
Definition~\ref{def:scl}, whose main syntactic components are described later
on.
In the rest of the article, we will focus on this logic to study the
decidability and complexity of our \SHACL decision problems.
In particular, we are going to reserve the symbols $\tau$ and $\tau^{-}$ to
denote the translations from \SHACL documents into \SCL sentences and
\viceversa and refer the reader to the appendix for the full details about these
translations.
Bold capital letters in square brackets on the right of some of the grammar
production rules are pure meta-annotations for naming \SCL features and,
obviously, not an integral part of the syntax.

\begin{definition}
  \label{def:scl}
  The \emph{Shape Constraint Logic} (\SCL, for short) is the set of first-order
  \emph{sentences} $\varphiSnt$ built according to the following context-free
  grammar, where $\conElm$ is a constant from the domain of RDF terms,
  $\hasShapeElm$ is a shape-relation name, $\FSym$ is a filter-relation name,
  $\RRel$ is a binary-relation name, Kleene's star symbol $^{\star}$ indicates
  the transitive closure of the binary relation induced by $\pi(\varElm,
  \yvarElm)$, the superscript $\pm$ stands for a relation or its inverse, and $n
  \in \SetN$:
  \begin{align*}
    {\varphiSnt}
  & \;{\seteq}\;
    {\top \mid \varphiSnt \wedge \varphiSnt} \\
  & \;{\;\:\mid\;}\;
    {\hasShapeElm(\conElm) \mid \forall \varElm \ldotp \isASym(\varElm, \conElm)
    \rightarrow \hasShapeElm(\varElm) \mid \forall \varElm, \yvarElm \ldotp
    \RRel[][\pm](\varElm, \yvarElm) \rightarrow \hasShapeElm(\varElm)} \\
  & \;{\;\:\mid\;}\;
    {\forall \varElm \ldotp \hasShapeElm(\varElm) \leftrightarrow
    \psiFrm(\varElm)};
  \\
    {\psiFrm(\varElm)}
  & \;{\seteq}\;
    {\top \mid \neg \psiFrm(\varElm) \mid \psiFrm(\varElm) \wedge
    \psiFrm(\varElm) \mid \varElm = \conElm \mid \FSym(\varElm) \mid
    \hasShapeElm(\varElm) \mid \exists \yvarElm \ldotp \piFrm(\varElm, \yvarElm)
    \wedge \psiFrm(\yvarElm)}
  & {\textsf{\textbf{[$\varnothing$]}}} \\
  & \;{\;\:\mid\;}\;
    {\neg \exists \yvarElm \ldotp \piFrm(\varElm, \yvarElm) \wedge
    \RRel(\varElm, \yvarElm)}
  & {\textsf{\textbf{[$\D$]}}} \\
  & \;{\;\:\mid\;}\;
    {\forall \yvarElm \ldotp \piFrm(\varElm, \yvarElm) \leftrightarrow
    \RRel(\varElm, \yvarElm)}
  & {\textsf{\textbf{[$\E$]}}} \\
  & \;{\;\:\mid\;}\;
    {\forall \yvarElm, \zvarElm \ldotp \piFrm(\varElm, \yvarElm) \wedge
    \RRel(\varElm, \zvarElm) \rightarrow \varsigmaFrm(\yvarElm, \zvarElm)}
  & {\textsf{\textbf{[$\O$]}}} \\
  & \;{\;\:\mid\;}\;
    {\exists^{\geq n} \yvarElm \ldotp \piFrm(\varElm, \yvarElm) \wedge
    \psiFrm(\yvarElm)};
  & {\textsf{\textbf{[$\C$]}}}
  \\
    {\piFrm(\varElm, \yvarElm)}
  & \;{\seteq}\;
    {\RRel[][\pm](\varElm, \yvarElm)} \\
  & \;{\;\:\mid\;}\;
    {\exists \zvarElm \ldotp \piFrm(\varElm, \zvarElm) \wedge \piFrm(\zvarElm,
    \yvarElm)}
  & {\textsf{\textbf{[$\S$]}}} \\
  & \;{\;\:\mid\;}\;
    {\varElm = \yvarElm \vee \piFrm(\varElm,  \yvarElm)}
  & {\textsf{\textbf{[$\Z$]}}} \\
  & \;{\;\:\mid\;}\;
    {\piFrm(\varElm, \yvarElm) \vee \piFrm(\varElm, \yvarElm)}
  & {\textsf{\textbf{[$\A$]}}} \\
  & \;{\;\:\mid\;}\;
    {(\piFrm(\varElm, \yvarElm))^{\star}};
  & {\textsf{\textbf{[$\T$]}}}
  \\
    {\varsigmaFrm(\varElm, \yvarElm)}
  & \;{\seteq}\;
    {\varElm <^{\pm} \yvarElm \mid \varElm \leq^{\pm} \yvarElm}.
  \end{align*}
\end{definition}

Intuitively, sentences obtained through grammar rule $\varphiSnt$ correspond to
\SHACL documents.
These could be empty ($\top$), a conjunction of documents, a \emph{target axiom}
(production rules 3, 4, and 5 of rule $\varphiSnt$) or a \emph{constraint axiom}
(production rule 6 of rule $\varphiSnt$).
Target axioms take one of three forms, based on the type of target declarations
in the shapes of a \SHACL document.
There are four types of target declarations in \SHACL, namely
\begin{inparaenum}[(1)]
  \item
    a particular constant \texttt{c} (node target),
  \item
    instances of class \texttt{c} (class target), or
  \item
    \!-\!
  \item
    subjects/objects of a triple with predicate \texttt{R}
    (subject-of/object-of target).
\end{inparaenum}
The full correspondence of \SHACL target declarations to \SCL target axioms is
summarised in Table~\ref{tab:targets}.
The correspondence of a target definition containing multiple target
declarations, is simply the conjunction of the corresponding target axioms.

The non terminal symbol $\psiFrm(\varElm)$ corresponds to the subgrammar of the
\SHACL constraints components.
Within this subgrammar, the true symbol $\top$ identifies an empty constraint,
$\varElm = \conElm$ a constant equivalence constraint and $\FSym$ a monadic
filter relation (\eg, $\FSym[][\isIRI](\varElm)$, true \iff $\varElm$ is an
IRI).
By \emph{filters} we refer to the \SHACL constraints about ordering, node-type,
datatype, language tag, regular expressions, and string length
\citep{KK17}. Filters are captured by the $\FSym(\varElm)$
production rule and the \O component.
The \C component captures qualified value shape cardinality constraints.
The \E, \D\ and \O components capture the equality, disjointedness  and order
property pair components.

The $\piFrm(\varElm, \yvarElm)$ subgrammar models \SHACL property paths.
Within this subgrammar \S denotes sequence paths, \A denotes alternate paths,
\Z denotes a zero-or-one path, and, finally, \T denotes a zero-or-more path.

As usual, to enhance readability, we define the following syntactic shortcuts:
\begin{itemize}
  \item
    $\psiFrm[1](\varElm) \vee \psiFrm[2](\varElm) \seteq \neg (\neg
    \psiFrm[1](\varElm) \wedge \neg \psiFrm[2](\varElm))$;
  \item
    $\piFrm(\varElm, \conElm) \seteq \exists \yvarElm . \piFrm(\varElm,
    \yvarElm) \wedge \yvarElm = \conElm$;
  \item
    $\forall \yvarElm \,.\, \piFrm(\varElm, \yvarElm) \rightarrow \psiFrm(y)
    \seteq \neg \exists \yvarElm \,.\, \piFrm(\varElm, \yvarElm) \wedge \neg
    \psiFrm(\yvarElm)$.
\end{itemize}

The above mentioned translations $\tau$ and $\tau^{-}$ between \SHACL and \SCL
are polynomial in the size of the input and computable in polynomial time.
Intuitively, as we show later in
Theorem~\ref{centralSCL2SHACLCorrespondenceTheorem}, a \SHACL document
$\vShapeDocument$ validates a graph $G$ \iff a first-order structure
representing the latter satisfies the \SCL sentence $\tau(\vShapeDocument)$.
\Viceversa, every \SCL sentence $\varphiSnt$ is satisfied by a first-order
structure representing graph $G$ \iff the \SHACL document $\tau^{-}(\varphiSnt)$
validates $G$.

Another important property of these translations is that they preserve the
notion of \emph{\SHACL recursion}, that is, a \SHACL document $\vShapeDocument$
is recursive \iff the \SHACL document $\tau^{-}(\tau(\vShapeDocument))$ is
recursive.
We will call an \SCL sentence $\phi$ \emph{recursive} if $\tau^{-}(\phi)$ is
recursive.

Given a \SHACL document $\vShapeDocument$, the \SCL sentence
$\tau(\vShapeDocument)$ contains a shape relation $\hasshapePredicateS{\vshape}$
for each shape $\vshape$ in $\vShapeDocument$.
Sentence $\tau(\vShapeDocument)$ can be split into constraint axioms and
target axioms.
Intuitively, these are used to verify the first and second condition of
Definition~\ref{def:faithfulAssignment}, respectively.
The constraint axioms of $\tau(\vShapeDocument)$ correspond to the sentence
$\tau(\minustargets{\vShapeDocument})$, \ie, to the translation of the document
ignoring targets, while the target axioms of $\tau(\vShapeDocument)$ correspond
to taking targets into account, \ie, to a sentence $\phiSnt$, where $\phiSnt
\wedge \tau(\minustargets{\vShapeDocument})$ is $\tau(\vShapeDocument)$.

Note that our translation $\tau$ results in a particular structure of \SCL
sentences, that we will call \emph{well-formed}, and thus we restrict the
inverse translation $\tau^{-}$ and define it only on well-formed \SCL sentences.
An \SCL sentence $\varphiSnt$ is well-formed if, for every shape relation
$\Sigma$, sentence $\varphiSnt$ contains exactly one constraint axiom with
relation $\Sigma$ on the left-hand side of the implication.
Intuitively, this condition ensures that every shape relation is ``defined'' by
a corresponding constraint axiom.
Figure~\ref{fig:TranslationExample} shows the translation of the document from
Figure~\ref{fig:MainExample} into a well-formed \SCL sentence.

\begin{figure}[t]
\begin{align*}
&\quad \Big( \forall \vx . \;  \isA(\vx, \texttt{:Student}) \rightarrow \hasshape{\vx}{\texttt{:studentShape}} \Big) \\
&\quad \wedge \Big( \forall \vx . \; \hasshape{\vx}{\texttt{:studentShape}} \leftrightarrow  \neg \hasshape{\vx}{\texttt{:disjFacultyShape}} \Big) \\
&\quad \wedge \Big( \forall \vx . \; \hasshape{\vx}{ \texttt{:disjFacultyShape}} \leftrightarrow  \\
&\quad \quad \qquad \neg \exists \vy . \; ( \; \exists \vz . \;   R_{\texttt{:hasSupervisor}}(\vx, \vz) \; \wedge \\
&\quad\quad \qquad \hphantom{\neg \exists \vy . ( \; \exists \vz . \;\;} R_{\texttt{:hasFaculty}}(\vz, \vy) \; \wedge \\
&\quad \quad \qquad \hphantom{\neg \exists \vy . \big( \; \exists \vz . \;\;} R_{\texttt{:hasFaculty}}(\vx, \vy) \; \big) \; \Big)
\end{align*}
  \captionof{figure}{\label{fig:TranslationExample} Translation of the \SHACL
    document from Figure~\ref{fig:MainExample} into an \SCL sentence.}
\end{figure}

Before defining the semantic correspondence between \SHACL and \SCL we introduce
the translations of graphs and assignments into first-order structures.

\begin{definition}
  Given a graph $G$, the first-order structure \induced{$G$} contains a fact
  $\RRel(\sElm, \oElm)$, \ie, $\RRel(\sElm, \oElm)$ holds true in \induced{$G$},
  if $\tuple {\sSym} {\RSym} {\oSym} \in G$.
\end{definition}

\begin{definition}
  Given a total assignment $\sigma$, the first-order structure
  \induced{$\sigma$} contains fact $\hasshape{\vn}{\sconst}$, \ie,
  $\hasshape{\vn}{\sconst}$ holds true in \induced{$\sigma$}, for every node
  $\vn$, if $\sconst \in \sigma(\vn)$.
\end{definition}

\begin{definition}
  Given a graph $G$ and a total assignment $\sigma$, the first-order structure
  $I$ \emph{induced} by $G$ and $\sigma$ is the disjoint union of structures
  \induced{$G$} and \induced{$\sigma$}.
  Given a first-order structure $I$:
  \begin{inparaenum}[(1)]
    \item
      the graph $G$ \emph{induced} by $I$ is the graph that contains triple
      $\tuple {\sSym} {\RSym} {\oSym}$ if $I \models \RRel(\sElm, \oElm)$ and
    \item
      the assignment $\sigma$ induced by $I$ is the assignment such that, for
      all elements of the domain $\vn$ and shape relations
      $\hasshapePredicateS{\vshape}$, fact $\vshape \in \sigma(\vn)$ is true if
      $I \models \hasshape{\vn}{\sconst}$ and $\neg \vshape \in \sigma(\vn)$
      is true if $I \not\models \hasshape{\vn}{\sconst}$.
  \end{inparaenum}
\end{definition}

The semantic correspondence between \SHACL and \SCL is captured by the following
theorem.

\begin{theorem}
  \label{centralSCL2SHACLCorrespondenceTheorem}
  For all graphs $G$, total assignments $\sigma$ and \SHACL documents
  $\vShapeDocument$, it is true that $\isfaithful{G}{\sigma}{\vShapeDocument}$
  \iff $I \models \tau(\vShapeDocument)$, where $I$ is the first-order
  structure induced by $G$ and $\sigma$.
  For any first-order structure $I$ and \SCL\ sentence $\phi$, it is true $I
  \models \phi$ \iff $\isfaithful{G}{\sigma}{\tau^{-}(\phi)}$, where $G$ and
  $\sigma$ are, respectively, the graph and assignment induced by $I$.
\end{theorem}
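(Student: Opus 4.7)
The plan is to prove both directions of the biconditional by structural induction on \SCL sentences and formulae, exploiting the fact that the translation $\tau$ is defined exactly so as to mirror the semantics of \SHACL constraints clause by clause. Since we are restricted to total assignments, the third truth value \vundefined\ from Kleene's logic never arises in $\GsigmaModels{G}{\sigma}{\vshapec}{\vn}$ when $\sigma$ is total (as every shape reference $\vshape(\vn)$ is resolved by the Boolean membership of $\vshape$ in $\sigma(\vn)$), so the semantics effectively collapses to two-valued and can be faithfully captured by a classical first-order structure.

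For the first part, I would split faithfulness into its two defining conditions and handle them separately. The target axioms of $\tau(\vShapeDocument)$, listed in Table~\ref{tab:targets}, directly translate each of the four possible \SHACL target-declaration types (node target, class target, subjects-of, objects-of) into a closed first-order sentence over the induced structure $I$; satisfaction of these axioms in $I$ is easily seen to be equivalent to condition~(2) of Definition~\ref{def:faithfulAssignment}, because $I$ interprets $\isASym$ and $\RRel[][\pm]$ exactly as $G$ does and interprets $\hasShapeElm[\vshape]$ exactly as $\sigma$ does. The constraint axioms of the form $\forall \varElm \ldotp \hasShapeElm[\vshape](\varElm) \leftrightarrow \psiFrm(\varElm)$ capture condition~(1): I would prove, by induction on the subgrammar rooted at $\psiFrm(\varElm)$ (with an inner induction on $\piFrm(\varElm, \yvarElm)$ for the path component), that $I \models \psiFrm(\vn)$ iff $\GsigmaModels{G}{\sigma}{\vshapec}{\vn}$ is \vtrue, where $\vshapec$ is the \SHACL constraint component whose translation produces $\psiFrm$.

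The inductive step is where the bulk of the work sits. For each production rule of $\psiFrm$ I would match it against the corresponding \SHACL component: the $\varnothing$ base cases (constants, filters, shape references, existential quantification) against the basic constraints; the $\D$, $\E$, and $\C$ rules against \sh{disjoint}, \sh{equals}, and qualified cardinality constraints; and the $\O$ rule against \sh{lessThan}/\sh{lessThanOrEquals}, using the interpreted relations $<^{\pm}, \leq^{\pm}$. Similarly, for $\piFrm$ I would check that sequence ($\S$), zero-or-one ($\Z$), alternative ($\A$), and transitive ($\T$) paths have the intended semantics; the $\T$ case is the only nontrivial one, requiring the standard argument that Kleene's star in first-order logic with transitive closure defines exactly the reflexive-transitive closure of the binary relation defined by $\piFrm(\varElm, \yvarElm)$ on the interpretation $I$, which matches how \SHACL evaluates \sh{zeroOrMorePath}/\sh{oneOrMorePath} paths over $G$.

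The second part, concerning $\tau^{-}$, follows from the first by observing that $\tau$ and $\tau^{-}$ are mutually inverse on their respective domains: $\tau$ maps every \SHACL document to a well-formed \SCL sentence, $\tau^{-}$ is defined only on well-formed \SCL sentences, and by construction $\tau^{-}(\tau(\vShapeDocument)) = \vShapeDocument$ and $\tau(\tau^{-}(\phi)) = \phi$ up to trivial syntactic normalisation. Given a first-order structure $I$, letting $G$ and $\sigma$ be the induced graph and (total) assignment, we have $I = \induced{G} \sqcup \induced{\sigma}$, so applying the first part to $\tau^{-}(\phi)$ yields $\isfaithful{G}{\sigma}{\tau^{-}(\phi)}$ iff $I \models \tau(\tau^{-}(\phi)) = \phi$. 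The main obstacle is not conceptual but bookkeeping: the inductive case analysis must be exhaustive across every production rule and every \SHACL constraint component, and one must be careful that the translation of negation and disjunction (built via the $\psiFrm \wedge \psiFrm$, $\neg \psiFrm$ shortcuts and De Morgan's laws) matches \SHACL's Boolean combinators on total assignments — something the restriction to total $\sigma$ makes tractable, but which still must be verified clause by clause.
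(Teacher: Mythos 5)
Your proposal is correct and follows essentially the same route as the paper, which itself only remarks that the theorem "can be proved by a tedious but straightforward structural induction over the document syntax, with an operator-by-operator analysis of the translation" given in the appendix. Your fleshed-out version — splitting faithfulness into target axioms versus constraint axioms, noting that totality of $\sigma$ collapses the three-valued semantics to two-valued, and deriving the second part from the mutual invertibility of $\tau$ and $\tau^{-}$ on well-formed sentences — is exactly the intended argument, just made explicit.
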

This theorem can be proved by a tedious but straightforward structural
induction over the document syntax, with an operator-by-operator analysis of
the translation we provide in the appendix.

Sentences in \SCL have a direct correspondence to the sentences of the grammar
presented in~\citep{pareti2020}.
For each non-recursive \SHACL document, the differences between the sentences
obtained by translating this document are purely syntactic and the two sentences
are equisatisfiable.
In particular, the binary relation \texttt{hasShape} of~\citep{pareti2020} is
now represented instead as a set of monadic relations.
For recursive \SHACL documents, the grammar of Definition~\ref{def:scl}
introduces a one-to-one correspondence between \SHACL target
declarations/constraints, and target/constraint axioms respectively.

The sub-grammar $\psiFrm(\varElm)$ in Definition~\ref{def:scl} corresponds to
the grammar of \SHACL constraints from~\citep{CRS18}, with the addition of
filters.
The grammar from~\citep{CRS18} omits filters by assuming that their evaluation
is not more computationally complex than evaluating equality.
This assumption is true for validation, the main decision problem addressed
in~\citep{CRS18}, but it does not hold for satisfiability and containment, as we further discuss in Section~\ref{sec:fltaxm}.

\begin{table}[t]
  \begin{center}
    \footnotesize
    \tabcomponents
  \end{center}
  \vspace{-1em}
  \caption{\label{tab:components} Correspondence between prominent \SHACL
    components and \SCL expressions.}
\end{table}

To distinguish different fragments of \SCL, Table~\ref{tab:components} lists a
number of \emph{prominent} \SHACL components.
The language defined without any of these constructs is our \emph{base}
language, denoted \X.
When using an abbreviation of a prominent feature, we refer to the fragment of
our logic that includes the base language together with that feature enabled.
For example, \S\A identifies the fragment that only allows the base language,
sequence paths, and alternate paths.

The \SHACL specification presents an unusual asymmetry in the fact that
equality, disjointedness and order components (corresponding to \E, \D, and \O
in \SCL) force one of their two path expressions to be an atomic relation.
This can result in situations where order constraints can be defined in just one
direction, since only the less-than and less-than-or-equal property pair
constraints are defined in \SHACL.
Our \O fragment models a more natural order comparison that includes the $>$
and $\geq$ components, by using the inverse of $<$ and $\leq$.
We instead denote by \O' the fragment where the order relations in the
$\sigmagrammar(\vx, \vy)$ subgrammar cannot be inverted.
In our formal analysis of Section~\ref{sec:sclsat} we will consider both \O and
\O'.

\subsection{A Second-Order Logic for \SHACL Decision Problems}
\label{sec:scl;sub:secord}

In order to model \SHACL decision problems, we introduce the \emph{Monadic
Shape Constraint Logic} (\MSCL, for short) built on top of a \emph{second-order
interpretation} of \SCL sentences.
A second-order interpretation of an \SCL sentence $\phi$ is the second-order
formula obtained by interpreting shape relations as free monadic second order
variables.
Obviously, shape relations that are under the scope of the same quantifier
describe the same assignment.
While \SCL can be used to describe the faithfulness of a single assignment,
\MSCL can express properties that must be true for all possible assignments.
This is necessary to model all extended semantics.
As usual, disjunction and implication symbols in \MSCL sentences are just
syntactic shortcuts.

\begin{definition}
  \label{def:mscl}
  The \emph{Monadic Shape Constraint Logic} (\MSCL, for short)  is the set of
  second-order sentences built according to the following context-free grammar
  $\PhiSnt$, where $\varphiSnt$ is an \SCL sentence and $\hasshapePredicate$ is
  the second-order variable corresponding to a shape relation.
  \begin{align*}
    {\PhiSnt}
  & \;{\seteq}\;
    {\varphiSnt \mid \neg \PhiSnt \mid \PhiSnt \wedge \PhiSnt \mid \exists
    \hasShapeElm \ldotp \PhiSnt \mid \forall \hasShapeElm \ldotp \PhiSnt};
  &
    {\varphiSnt}
  & \;{\seteq}\;
    {\SCL}.
  \end{align*}
  The \ESCL (\resp, \USCL) fragment of \MSCL is the set of sentences obtained by
  the above grammar deprived of the negation and universal (\resp, existential)
  quantifier rules.
\end{definition}

Relying on the standard semantics for second-order logic, we define the
satisfiability and containment for \MSCL\ sentences, as well as the closely
related finite-model property, in the natural way.
\begin{description}
  \item[\MSCL Sentence Satisfiability]
    An \MSCL sentence $\PhiSnt$ is satisfiable if there exists a relational
    structure $\Omega$ such that $\Omega \models \PhiSnt$.
  \item[\MSCL Finite-model Property]
    An \MSCL sentence $\PhiSnt$ enjoys the finite-model property if, whenever
    $\PhiSnt$ is satisfiable, it is so on a relational structure.
\end{description}

In Section~\ref{sec:shacl2scl} we discuss the correspondence between the
\SHACL and \MSCL decision problems.
In this respect, we assume that filters are interpreted relations.
In particular, we prove equivalence of \SHACL and \MSCL, for the purpose of
validity, on models that we call \emph{canonical}; that is, models having the
following properties:
\begin{inparaenum}[(1)]
  \item
    the domain of the model is the set of RDF terms,
  \item
    constant symbols are interpreted as themselves (as in a standard Herbrand
    model~\citep{EF95}),
  \item
    such a model contains built-in interpreted relations for filters, and
  \item
    ordering relations $<$ and $\leq$ are the disjoint union of the total orders
    of the different comparison types allowed in SPARQL.
\end{inparaenum}
To enforce the fact that different RDF terms are not equivalent to each other
we adopt the unique name assumption for the constants of our language.
For the purpose of our decision problems, it is sufficient to axiomatise the
inequality of all the known constants.

Finally, we state a trivial result used later on to show how to solve some of
the mentioned decision problem by looking at the ``simpler'' \SCL satisfiability
and validity decision problems.\footnote{The term \emph{valid} here refers to
the notion of validity in mathematical logic and model theory, not to be
confused with \SHACL validation.}
\begin{proposition}\label{proposition:ESCLtoSCL}
  An \ESCL (\resp, \USCL) sentence $\PhiSnt \defeq \exists \hasShapeElm[1]
  \ldots \exists \hasShapeElm[m] \ldotp \varphiSnt$ (\resp, $\PhiSnt \defeq
  \forall \hasShapeElm[1] \ldots \allowbreak \forall \hasShapeElm[m] \ldotp
  \varphiSnt$) is satisfiable (\resp, valid) \iff the subformula $\varphiSnt$
  interpreted as an \SCL sentence is satisfiable (\resp, valid).
\end{proposition}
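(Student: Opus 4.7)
The plan is to argue both directions by a simple witness-extraction/witness-supply argument, using the standard semantics of second-order logic and the fact that the shape relations $\hasShapeElm[i]$ in $\SCL$ are already free (monadic) relation symbols that any first-order structure is at liberty to interpret arbitrarily. I will only spell out the \ESCL case; the \USCL case follows by a routine dualisation using $\neg\forall \hasShapeElm \ldotp \PhiSnt \equiv \exists \hasShapeElm \ldotp \neg\PhiSnt$ together with the fact that an $\MSCL$ sentence is valid iff its negation is unsatisfiable.

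For the forward direction ($\Rightarrow$), suppose $\PhiSnt \defeq \exists \hasShapeElm[1] \ldots \exists \hasShapeElm[m] \ldotp \varphiSnt$ is satisfiable, so there exists a relational structure $\Omega$ and, by the semantics of second-order existential quantification, monadic relations $P_1, \ldots, P_m$ on the domain of $\Omega$ witnessing the outermost quantifiers. Let $\Omega'$ be the expansion of $\Omega$ that additionally interprets each (now first-order, in the \SCL reading) monadic relation symbol $\hasShapeElm[i]$ by $P_i$. Since $\varphiSnt$ is syntactically a sentence of \SCL in which $\hasShapeElm[1], \ldots, \hasShapeElm[m]$ occur as relation symbols (and nothing else has changed in its syntax between the \MSCL and \SCL readings), the Tarskian evaluation of $\varphiSnt$ in $\Omega'$ as an \SCL sentence coincides with its second-order evaluation in $\Omega$ under the assignment $\hasShapeElm[i] \mapsto P_i$. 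Hence $\Omega' \models \varphiSnt$ in the \SCL sense.

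For the backward direction ($\Leftarrow$), conversely, if $\varphiSnt$ is satisfiable as an \SCL sentence, pick any first-order structure $\Omega'$ with $\Omega' \models \varphiSnt$. Reading off the interpretations $P_i \defeq \hasShapeElm[i]^{\Omega'}$, the same alignment of the two semantics shows that these $P_i$ witness the second-order existential quantifiers, whence $\Omega' \models \PhiSnt$. Combining the two directions gives the \ESCL satisfiability equivalence.

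For the \USCL validity statement, apply the equivalence $\PhiSnt$ valid iff $\neg \PhiSnt$ unsatisfiable to the sentence $\forall \hasShapeElm[1] \ldots \forall \hasShapeElm[m] \ldotp \varphiSnt$: its negation is (logically equivalent to) the \ESCL sentence $\exists \hasShapeElm[1] \ldots \exists \hasShapeElm[m] \ldotp \neg\varphiSnt$, whose satisfiability is, by the \ESCL case already proved, equivalent to the satisfiability of $\neg\varphiSnt$ read in \SCL, i.e.\ to the non-validity of $\varphiSnt$ read in \SCL. Taking the contrapositive yields the claim. The whole argument is essentially bookkeeping; the only point requiring care is to observe that the syntactic form of $\varphiSnt$ is identical in both readings, so no translation other than reinterpreting the bound second-order variables as free first-order relation symbols is needed, and I do not expect any genuine obstacle.
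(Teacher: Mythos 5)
Your proof is correct and is exactly the intended argument: the paper states this proposition as a trivial result and gives no proof, precisely because the second-order existential (resp.\ universal) quantification over the monadic shape relations coincides semantically with letting a first-order structure interpret those symbols freely, which is what your witness-extraction/witness-supply argument spells out. No gap to report; the only cosmetic caveat is that $\exists \hasShapeElm[1] \ldots \exists \hasShapeElm[m] \ldotp \neg\varphiSnt$ used in your dualisation is not literally an \ESCL sentence (its matrix is a negated \SCL sentence), but your satisfiability argument never uses that the matrix lies in \SCL, so the reasoning goes through unchanged.
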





\section{From \SHACL Decision Problems to \MSCL Satisfiability}
\label{sec:shacl2scl}

The rich expressiveness of the \MSCL language, defined in the previous section, allows us to formally define several
decision problems. We first use this language to define the main such problems studied in this article,
namely \SHACL validation, satisfiability and containment. We then show how \MSCL can also capture a number
of related decision problems that have been proposed in the literature.

\subsection{Principal Decision Problems}
\label{sec:shacl2scl;sub:prn}

In this section we describe the equivalidity of \MSCL\ and \SHACL, and provide a reduction of our decision
problems into \MSCL\ satisfiability. Notably, we also show how some of them can be further reduced into \ESCL.
As we will see later, this last reduction can be easily translated to a reduction into first-order
logic, from which we derive several decidability results.

We again focus only on total assignment semantics.
Given a second-order formula $\phi$, second-order interpretation of an \SCL\ sentence, we denote with $\qexists{\phi}$,
respectively $\qforall{\phi}$, the \MSCL\ sentence obtained by existentially, respectively universally, quantifying
all of the shape relations of $\phi$. Recall that, by construction, the assignments induced by models of an \MSCL\ sentence
are total, and that the second-order variables under the scope of the same quantifier represent a single assignment.

The following corollaries, which rely on the standard notion of modelling of a sentence by a structure, easily follow from Theorem \ref{centralSCL2SHACLCorrespondenceTheorem} and
the definitions of validity from Table \ref{tab:recursiveSemanticsTable}. 
The first two corollaries define the correspondence between \SHACL\ and \MSCL\ validation.
The last four corollaries express our formalisation of the \SHACL
satisfiability and containment decision problems in the case of brave validation and in the case of cautious validation.
Recall also that $\induced{$G$}$ denotes the first-order structure induced by a graph $G$, and $\minustargets{\vShapeDocument}$
denotes the \SHACL document obtained by removing all target declarations from \SHACL document
$\vShapeDocument$, which we use to test first condition of Def. \ref{def:faithfulAssignment} in isolation
from the second.

\begin{corollary}[Brave-Total Validation]
\label{cor:brave-total-val}
A graph $G$ is valid \wrt a \SHACL document \vShapeDocument\ under brave-total semantics if \induced{$G$}
$\models \qexists{\tau(\vShapeDocument)}$.
\end{corollary}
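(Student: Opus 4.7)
The plan is to chain together the definition of brave-total validity, the central correspondence Theorem~\ref{centralSCL2SHACLCorrespondenceTheorem}, and the semantics of existential monadic second-order quantification. No heavy machinery is required; the work is essentially unpacking what $\qexists{\tau(\vShapeDocument)}$ means and matching it, step by step, with Definition~\ref{validationDefTotalBrave}.

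First I would unfold the right-hand side. By definition of $\qexists{\cdot}$, the sentence $\qexists{\tau(\vShapeDocument)}$ is obtained by prefixing $\tau(\vShapeDocument)$ with an existential quantifier over every monadic shape-relation symbol $\hasshapePredicateS{\vshape}$ with $\vshape \in \shapesof{\vShapeDocument}$. By the standard Tarskian semantics of second-order logic, $\induced{G} \models \qexists{\tau(\vShapeDocument)}$ holds exactly when there exists an expansion $I$ of $\induced{G}$, interpreting each $\hasshapePredicateS{\vshape}$ as some subset of the domain, such that $I \models \tau(\vShapeDocument)$ in the first-order sense. Since the domain of $\induced{G}$ contains $\nodesof{G}{\vShapeDocument}$ and the expansion assigns a Boolean truth value to $\hasshapePredicateS{\vshape}(\vn)$ for every such $\vn$, the expansion is in one-to-one correspondence with a total assignment $\sigma \in \assignmentsTotal{G}{\vShapeDocument}$ via the induced-structure/induced-assignment construction defined just before Theorem~\ref{centralSCL2SHACLCorrespondenceTheorem}. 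In particular, the resulting $I$ is exactly the first-order structure induced by $G$ and $\sigma$.

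Next I would invoke Theorem~\ref{centralSCL2SHACLCorrespondenceTheorem}, which states that, for the induced $I$, the equivalence $I \models \tau(\vShapeDocument) \iff \isfaithful{G}{\sigma}{\vShapeDocument}$ holds. Combining this with the previous step, $\induced{G} \models \qexists{\tau(\vShapeDocument)}$ holds iff there exists a total assignment $\sigma \in \assignmentsTotal{G}{\vShapeDocument}$ with $\isfaithful{G}{\sigma}{\vShapeDocument}$. By Definition~\ref{validationDefTotalBrave}, this is precisely the condition for $G$ to be valid w.r.t.\ $\vShapeDocument$ under brave-total semantics, concluding both directions.

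The only point that requires any care---though it is not really an obstacle---is ensuring that the bijection between expansions of $\induced{G}$ by monadic second-order variables and total assignments is correctly stated over the right domain: every shape relation of $\vShapeDocument$ must be quantified (so that the induced $\sigma$ is defined on all of $\shapesof{\vShapeDocument}$), and the domain must contain $\nodesof{G}{\vShapeDocument}$ so that $\sigma$ is defined on all relevant nodes. Both conditions hold by construction of $\qexists{\cdot}$ and by the definition of $\induced{G}$, so the chain of equivalences closes and the corollary follows.
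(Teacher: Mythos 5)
Your proposal is correct and takes essentially the same route as the paper, which gives no explicit proof but states that the corollary ``easily follows'' from Theorem~\ref{centralSCL2SHACLCorrespondenceTheorem} and the definitions of validity in Table~\ref{tab:recursiveSemanticsTable}; you have simply spelled out that intended argument, namely the bijection between second-order expansions of \induced{$G$} and total assignments followed by an application of the correspondence theorem and Definition~\ref{validationDefTotalBrave}.
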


\begin{corollary}[Cautious-Total Validation]
A graph $G$ is valid \wrt a \SHACL document \vShapeDocument\ under cautious-total semantics if \induced{$G$}
$\models \qexists{\tau(\vShapeDocument)} \wedge \qforall{\tau(\minustargets{\vShapeDocument})
\rightarrow \tau(\vShapeDocument)}$.
\end{corollary}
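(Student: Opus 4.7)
The plan is to derive the claim essentially by unfolding definitions and applying the previously established Theorem~\ref{centralSCL2SHACLCorrespondenceTheorem}, exactly as was implicitly done for the brave-total case in Corollary~\ref{cor:brave-total-val}. The argument splits naturally along the two conjuncts of cautious-total validity as enumerated in Table~\ref{tab:recursiveSemanticsTable}: (a) there exists a total assignment $\rho$ with $\isfaithful{G}{\rho}{\vShapeDocument}$, and (b) for every total assignment $\rho$, if $\isfaithful{G}{\rho}{\minustargets{\vShapeDocument}}$ then $\isfaithful{G}{\rho}{\vShapeDocument}$.

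For (a), I would invoke Theorem~\ref{centralSCL2SHACLCorrespondenceTheorem} directly: $\isfaithful{G}{\rho}{\vShapeDocument}$ is equivalent to $I \models \tau(\vShapeDocument)$, where $I$ is the first-order structure induced by $G$ and $\rho$. Since by standard second-order semantics an existential quantification over the shape relations $\hasshapePredicate$ amounts to selecting their extensions, and since any Boolean choice of such extensions corresponds exactly to a total assignment (as observed when introducing shape relations in Section~\ref{sec:scl;sub:firord}), (a) is equivalent to $\induced{G} \models \qexists{\tau(\vShapeDocument)}$. This is precisely the statement of Corollary~\ref{cor:brave-total-val}, so no new work is needed.

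For (b), I would again apply Theorem~\ref{centralSCL2SHACLCorrespondenceTheorem}, this time twice in succession: $\isfaithful{G}{\rho}{\minustargets{\vShapeDocument}}$ corresponds to $I \models \tau(\minustargets{\vShapeDocument})$, and $\isfaithful{G}{\rho}{\vShapeDocument}$ corresponds to $I \models \tau(\vShapeDocument)$, where $I$ is again the structure induced by $G$ and $\rho$. Thus the implication over all total $\rho$ becomes the universal second-order statement $\induced{G} \models \qforall{\tau(\minustargets{\vShapeDocument}) \rightarrow \tau(\vShapeDocument)}$. Here it is important that the universal quantifier ranges over all Boolean interpretations of the shape relations, which exactly enumerates all total assignments; this is what makes the ``total'' part of cautious-total semantics align with the semantics of $\qforall$.

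Finally, conjoining the two equivalences gives the stated biconditional. The only delicate point, and the one deserving explicit attention in the write-up, is the identification of total assignments with Boolean interpretations of the shape relations; once this identification is in place, the corollary is a one-line consequence of Theorem~\ref{centralSCL2SHACLCorrespondenceTheorem} and the definition of validity in Table~\ref{tab:recursiveSemanticsTable}. No new technical obstacle arises beyond what was already handled for brave-total.
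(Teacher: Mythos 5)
Your proposal is correct and follows exactly the route the paper intends: the paper gives no explicit proof, stating only that these corollaries ``easily follow from Theorem~\ref{centralSCL2SHACLCorrespondenceTheorem} and the definitions of validity from Table~\ref{tab:recursiveSemanticsTable}'', which is precisely the two-conjunct unfolding plus the identification of total assignments with Boolean interpretations of the shape relations that you spell out. No divergence and no gap.
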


\begin{corollary}[Brave-Total Satisfiability]
For any \SHACL document $\vShapeDocument$, document $\vShapeDocument$ is (finitely) satisfiable under brave-total semantics if
$\qexists{\tau(\vShapeDocument)}$ is (finitely) satisfiable.
\end{corollary}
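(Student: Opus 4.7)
The plan is to derive this corollary as a direct consequence of the definition of brave-total satisfiability combined with Corollary~\ref{cor:brave-total-val} and Theorem~\ref{centralSCL2SHACLCorrespondenceTheorem}, handling both the (finite) and unrestricted cases uniformly.

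First I would unfold the definitions on both sides. By definition, $\vShapeDocument$ is satisfiable under brave-total semantics iff there exists a graph $G$ such that $G \models \vShapeDocument$ under brave-total semantics, which by Corollary~\ref{cor:brave-total-val} is equivalent to the existence of $G$ with $\induced{G} \models \qexists{\tau(\vShapeDocument)}$. On the other side, $\qexists{\tau(\vShapeDocument)}$ is satisfiable iff there exists a first-order structure $\Omega$ with $\Omega \models \qexists{\tau(\vShapeDocument)}$. The goal is to show these two existence claims coincide.

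The forward direction is immediate: given a graph $G$ witnessing satisfiability of $\vShapeDocument$, the structure $\Omega \defeq \induced{G}$ is a model of $\qexists{\tau(\vShapeDocument)}$. For the reverse direction, I would proceed as follows. Suppose $\Omega \models \qexists{\tau(\vShapeDocument)}$. By standard second-order semantics, there exist monadic interpretations of the shape relations $\hasshapePredicateS{\vshape}$ over the domain of $\Omega$ that, once fixed, give a first-order structure $\Omega^{\prime}$ satisfying the underlying \SCL sentence $\tau(\vShapeDocument)$. Let $G$ be the graph induced by $\Omega^{\prime}$ and $\sigma$ be the (necessarily total) assignment induced by $\Omega^{\prime}$; then $\Omega^{\prime}$ coincides with the first-order structure induced by $G$ and $\sigma$. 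Applying Theorem~\ref{centralSCL2SHACLCorrespondenceTheorem} yields $\isfaithful{G}{\sigma}{\vShapeDocument}$, so by Definition~\ref{validationDefTotalBrave} the graph $G$ is valid \wrt $\vShapeDocument$ under brave-total semantics, witnessing the desired satisfiability.

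For the finite version, the same chain of equivalences transports cardinality: a finite model of $\qexists{\tau(\vShapeDocument)}$ yields a finite graph $G$ (since $G$'s triples are exactly the pairs interpreting the binary relations in the model), and conversely a finite validating graph yields a finite-domain first-order model. The only potentially subtle point is the passage from a model $\Omega$ of the second-order sentence back to a pair $(G, \sigma)$ that induces precisely $\Omega^{\prime}$; this is routine because our signature only distinguishes binary graph relations and monadic shape relations, which are exactly the ingredients covered by the inverse constructions of ``induced graph'' and ``induced assignment'' given in Section~\ref{sec:scl;sub:firord}. I expect no serious obstacle, as the work has already been done in Theorem~\ref{centralSCL2SHACLCorrespondenceTheorem}; this corollary is essentially its existential closure together with the observation from Proposition~\ref{proposition:ESCLtoSCL} that existentially quantifying monadic shape relations is equivalent to asking for their interpretation.
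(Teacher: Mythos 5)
Your proposal is correct and follows exactly the route the paper intends: the paper gives no explicit proof for this corollary, stating only that it "easily follows" from Theorem~\ref{centralSCL2SHACLCorrespondenceTheorem} and the definitions of validity, and your argument is precisely that unfolding (pass between a validating graph with a total faithful assignment and a model of the existentially quantified sentence via the induced-structure/induced-graph-and-assignment constructions, with cardinality preserved in both directions). No gaps.
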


\begin{corollary}[Cautious-Total Satisfiability]
For any \SHACL document $\vShapeDocument$, document $\vShapeDocument$ is (finitely) satisfiable under cautious-total semantics
if $\qexists{\tau(\vShapeDocument)} \wedge \qforall{\tau(\minustargets{\vShapeDocument})
\rightarrow \tau(\vShapeDocument)}$ is (finitely) satisfiable.
\end{corollary}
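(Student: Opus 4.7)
The plan is to derive this corollary directly from the Cautious-Total Validation corollary above and the definition of cautious-total satisfiability. Unfolding the definition, $\vShapeDocument$ is (finitely) satisfiable under cautious-total semantics iff there exists a (finite) graph $G$ such that $G \models \vShapeDocument$ under that semantics. Applying the Cautious-Total Validation corollary, this is in turn equivalent to the existence of a (finite) graph $G$ whose induced structure $\induced{G}$ models the \MSCL sentence $\qexists{\tau(\vShapeDocument)} \wedge \qforall{\tau(\minustargets{\vShapeDocument}) \rightarrow \tau(\vShapeDocument)}$.

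What remains is to match this condition with plain \MSCL (finite) satisfiability of the same sentence. The forward direction is immediate, since $\induced{G}$ is itself a canonical relational structure. For the converse, I would take a canonical model $\Omega$ of the sentence and extract the graph $G_\Omega$ whose triples are precisely those corresponding to the binary relation atoms true in $\Omega$, so that the structure induced by $G_\Omega$ coincides with $\Omega$ on all binary graph relations. Because every shape relation in the \MSCL sentence is bound by either a $\qexists$ or a $\qforall$ quantifier, the truth value of the sentence in a canonical model depends only on the interpretation of the binary graph relations, together with the canonical interpretations of filters, equality, and the SPARQL orderings, which are fixed across all canonical models. Hence the structure induced by $G_\Omega$ and $\Omega$ agree on the sentence, so $G_\Omega$ also witnesses satisfiability, and finiteness transfers.

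The main bookkeeping point, and where I see the only real (mild) obstacle, is ensuring that the $\qexists$ and $\qforall$ quantifiers really range over precisely the total assignments required by cautious-total semantics as laid out in Table~\ref{tab:recursiveSemanticsTable}. This is settled by the observation made before Definition~\ref{def:mscl} that, since shape relations are monadic, any Boolean interpretation of one corresponds, via the law of excluded middle, to a unique total assignment; hence second-order quantifications over shape relations match one-to-one with quantifications over total assignments. Combining this correspondence with two applications of Theorem~\ref{centralSCL2SHACLCorrespondenceTheorem} -- one to $\tau(\vShapeDocument)$ governing the existential block, and one to the implication $\tau(\minustargets{\vShapeDocument}) \rightarrow \tau(\vShapeDocument)$ governing the universal block -- yields the claimed equivalence between cautious-total (finite) satisfiability of $\vShapeDocument$ and (finite) \MSCL satisfiability of the target sentence.
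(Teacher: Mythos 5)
Your proposal is correct and follows essentially the route the paper intends: the paper gives no explicit proof, stating only that the corollary ``easily follows'' from Theorem~\ref{centralSCL2SHACLCorrespondenceTheorem} and the validity definitions in Table~\ref{tab:recursiveSemanticsTable}, which is exactly the combination you spell out (unfold satisfiability to existence of a validating graph, apply the Cautious-Total Validation correspondence, and use the identification of Boolean shape-relation interpretations with total assignments). Your extra care in extracting a graph from an arbitrary model of the \MSCL sentence is a reasonable filling-in of a step the paper leaves implicit, and it also establishes the converse direction beyond the ``if'' actually claimed.
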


\begin{corollary}[Brave-Total Containment] \label{brave-total-corollary}
For any pair of \SHACL documents $\vShapeDocument_1$ and $\vShapeDocument_2$, document $\vShapeDocument_1$
is contained in $\vShapeDocument_2$ under brave-total semantics iff $\qexists{\tau(\vShapeDocument_1)} \rightarrow
\qexists{\tau(\vShapeDocument_2)}$ is valid, that is, iff $\qexists{\tau(\vShapeDocument_1)} \wedge \neg
\qexists{\tau(\vShapeDocument_2)}$ is unsatisfiable.
\end{corollary}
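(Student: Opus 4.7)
The plan is to unfold the definition of containment, invoke the Brave-Total Validation correspondence on both sides, and then bridge the gap between ``holds on every graph-induced structure'' and ``is valid on every (canonical) first-order structure''. This is a straightforward consequence of Corollary~\ref{cor:brave-total-val}, combined with a small observation about which part of a structure the sentence $\qexists{\tau(\vShapeDocument_i)}$ actually depends on.

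First I would simply rewrite the left-hand side via the definition given in Section~\ref{sec:shacl}: $\vShapeDocument_1 \subseteq \vShapeDocument_2$ under brave-total semantics iff, for every graph $G$, $G \models \vShapeDocument_1$ implies $G \models \vShapeDocument_2$. Applying Corollary~\ref{cor:brave-total-val} on each side of the implication turns this into the statement that, for every graph $G$, $\induced{G} \models \qexists{\tau(\vShapeDocument_1)}$ implies $\induced{G} \models \qexists{\tau(\vShapeDocument_2)}$; equivalently, $\induced{G} \models \qexists{\tau(\vShapeDocument_1)} \rightarrow \qexists{\tau(\vShapeDocument_2)}$ for every graph $G$.

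The next step is to lift the quantification from ``graph-induced structures'' to ``all canonical structures'', which is what validity of an \MSCL sentence means in this paper. The key observation is that in $\qexists{\tau(\vShapeDocument_1)} \rightarrow \qexists{\tau(\vShapeDocument_2)}$ every shape-relation symbol is bound by some $\exists \hasshapePredicate$ quantifier, so the truth value of this sentence on a canonical structure $\Omega$ depends only on the interpretation of the binary graph relations (filter relations are fixed by canonicity, and constants interpret themselves). Hence the graph-part of any canonical $\Omega$ is of the form $\induced{G}$ for a unique $G$, and the sentence holds in $\Omega$ iff it holds in $\induced{G}$. Combining the two reductions gives: $\vShapeDocument_1 \subseteq \vShapeDocument_2$ iff $\qexists{\tau(\vShapeDocument_1)} \rightarrow \qexists{\tau(\vShapeDocument_2)}$ is valid, and by propositional duality this is equivalent to the unsatisfiability of $\qexists{\tau(\vShapeDocument_1)} \wedge \neg \qexists{\tau(\vShapeDocument_2)}$.

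The only slightly delicate point, and the place I would be most careful, is justifying that restricting attention to structures of the form $\induced{G}$ does not weaken validity: one must explicitly appeal to the fact that the shape relations are existentially quantified away in both $\qexists{\tau(\vShapeDocument_1)}$ and $\qexists{\tau(\vShapeDocument_2)}$, so no ``extra'' freedom in the interpretation of shape predicates in $\Omega$ can create a countermodel that is not already witnessed by some graph. Everything else is a mechanical chaining of Corollary~\ref{cor:brave-total-val} with the definitions of containment and validity.
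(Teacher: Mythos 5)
Your proposal is correct and follows essentially the route the paper intends: the paper gives no explicit proof of this corollary, stating only that it follows from Theorem~\ref{centralSCL2SHACLCorrespondenceTheorem} and the definition of containment, and your chaining of Corollary~\ref{cor:brave-total-val} with the lift from graph-induced structures to all canonical structures is exactly that argument spelled out. The ``delicate point'' you flag is handled in the paper by the second half of Theorem~\ref{centralSCL2SHACLCorrespondenceTheorem} (every structure induces a graph and assignment), which is an equivalent way of closing the same gap you close via the observation that the shape relations are existentially bound.
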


\begin{corollary}[Cautious-Total Containment]
\label{cor:cautious-total-cont}
For any pair of \SHACL documents $\vShapeDocument_1$ and $\vShapeDocument_2$,
document $\vShapeDocument_1$ is contained in $\vShapeDocument_2$ under cautious-total semantics if

$\left( \qexists{\tau(\vShapeDocument_1)} \wedge \qforall{\tau(\minustargets{\vShapeDocument_1}) \rightarrow
\tau(\vShapeDocument_1)} \right)
\rightarrow
\left( \qexists{\tau(\vShapeDocument_2)} \wedge \qforall{\tau(\minustargets{\vShapeDocument_2}) \rightarrow \tau(\vShapeDocument_2)} \right) $

is valid, that is, iff

$\left( \qexists{\tau(\vShapeDocument_1)} \wedge \qforall{\tau(\minustargets{\vShapeDocument_1}) \rightarrow
\tau(\vShapeDocument_1)} \right)
\wedge \neg
\left( \qexists{\tau(\vShapeDocument_2)} \wedge \qforall{\tau(\minustargets{\vShapeDocument_2}) \rightarrow \tau(\vShapeDocument_2)} \right) $

is unsatisfiable.

\end{corollary}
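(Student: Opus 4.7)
The plan is to derive this corollary as a direct consequence of the definition of cautious-total containment together with the \MSCL characterisation of cautious-total validation already established in Corollary~\ref{cor:cautious-total-cont} (specifically, the cautious-total validation corollary). By definition, $\vShapeDocument_1$ is contained in $\vShapeDocument_2$ under cautious-total semantics \iff for every graph $G$, if $G \models \vShapeDocument_1$ under cautious-total semantics then $G \models \vShapeDocument_2$ under cautious-total semantics. I will rewrite the antecedent and consequent using the validation corollary, reducing the question to the validity of an implication between two \MSCL sentences over all canonical first-order structures.

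First I would expand $G \models \vShapeDocument_i$ (for $i \in \{1,2\}$) using the cautious-total validation corollary, which states that $G$ validates $\vShapeDocument_i$ \iff $\induced{G} \models \qexists{\tau(\vShapeDocument_i)} \wedge \qforall{\tau(\minustargets{\vShapeDocument_i}) \rightarrow \tau(\vShapeDocument_i)}$. Consequently, cautious-total containment is equivalent to the statement that for every graph $G$, the structure $\induced{G}$ satisfies the corresponding \MSCL implication. Since the interesting structures here are exactly those induced by graphs (canonical models over the RDF domain with built-in filter and ordering relations), universal quantification over graphs $G$ corresponds precisely to universal quantification over canonical first-order structures, which is the appropriate notion of validity for our setting.

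Next I would apply the classical duality: an implication $\PhiSnt_1 \rightarrow \PhiSnt_2$ is valid \iff its negation $\PhiSnt_1 \wedge \neg \PhiSnt_2$ is unsatisfiable. Applying this to $\PhiSnt_i \defeq \qexists{\tau(\vShapeDocument_i)} \wedge \qforall{\tau(\minustargets{\vShapeDocument_i}) \rightarrow \tau(\vShapeDocument_i)}$ yields both equivalent statements given in the corollary. The translation $\tau$ and its companion on target-free documents $\tau(\minustargets{\cdot})$ are already shown (Theorem~\ref{centralSCL2SHACLCorrespondenceTheorem} together with Lemma~\ref{lemma:supportedModel}) to faithfully capture, respectively, conditions (1) and (2) of Definition~\ref{def:faithfulAssignment} under total assignments, so no further semantic reasoning is needed beyond chaining these equivalences.

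The only subtlety, and the step deserving care, is the passage between ``for all graphs $G$'' and ``valid over all (canonical) first-order structures''. I would address it by invoking the one-to-one correspondence between graphs and their induced structures, and by relying on the canonical-model convention (unique name assumption for constants, fixed interpretation of filters and of the $<,\leq$ relations as disjoint unions of SPARQL comparison total orders) declared in Section~\ref{sec:scl;sub:secord}. Once this correspondence is in place, the rest of the argument is a direct chain of ``\iff'' steps, so the corollary follows immediately from Corollary~\ref{cor:cautious-total-cont} and the definition of containment without requiring any new technical machinery.
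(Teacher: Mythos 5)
Your proposal is correct and follows exactly the route the paper intends: the paper states this corollary as following directly from Theorem~\ref{centralSCL2SHACLCorrespondenceTheorem} and the definitions of validity in Table~\ref{tab:recursiveSemanticsTable}, which is precisely the chain you give (expand both sides of the containment via the cautious-total validation characterisation, identify quantification over graphs with quantification over canonical structures, then apply the standard validity/unsatisfiability duality). No gaps; the care you take over the graph-to-canonical-structure correspondence is the only nontrivial point and you handle it as the paper does.
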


We now provide a simplified definition of containment for non-recursive \SHACL documents by exploiting the
properties of Lemma \ref{propertyOfNonRecursiveDocs}, and the fact that all extended semantics are equivalent for
non-recursive SHACL.

\begin{lemma} \label{lemma:nonrecursive_containment_to_sat}
For any pair of non-recursive \SHACL documents $\vShapeDocument_1$ and $\vShapeDocument_2$ document $\vShapeDocument_1$
is contained in $\vShapeDocument_2$ iff $\qexists{\tau(\vShapeDocument_1)} \wedge
\qexists{\tau(\minustargets{\vShapeDocument_2}) \wedge \neg \tau(\vShapeDocument_2)}$ is not satisfiable.
\end{lemma}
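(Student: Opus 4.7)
The plan is to reduce containment for non-recursive documents to a single satisfiability check on $\induced{$G$}$, using the fact established in Theorem~\ref{thm:non-rec-semantics} that all four extended semantics collapse on non-recursive documents (so we may work with brave-total throughout), together with the uniqueness result of Lemma~\ref{propertyOfNonRecursiveDocs}. Containment fails iff there exists a graph $G$ with $G \models \vShapeDocument_1$ and $G \not\models \vShapeDocument_2$, so it is enough to show that the conjunction of these two conditions on $G$ is captured exactly by $\induced{$G$}$ satisfying the claimed \MSCL sentence.

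For the left-hand conjunct, I would directly invoke Corollary~\ref{cor:brave-total-val}: $G \models \vShapeDocument_1$ iff $\induced{$G$} \models \qexists{\tau(\vShapeDocument_1)}$. For the right-hand conjunct, the key step is to rewrite $G \not\models \vShapeDocument_2$. By Lemma~\ref{propertyOfNonRecursiveDocs}, there is a unique total assignment $\rho$ faithful for $G$ and $\minustargets{\vShapeDocument_2}$, and any faithful assignment for $\vShapeDocument_2$ must coincide with $\rho$. Hence $G \not\models \vShapeDocument_2$ iff this unique $\rho$ fails to satisfy the target clause, i.e.\ iff there is some total assignment (namely $\rho$) that models the constraint axioms $\tau(\minustargets{\vShapeDocument_2})$ but not all of $\tau(\vShapeDocument_2)$. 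Via Theorem~\ref{centralSCL2SHACLCorrespondenceTheorem} this translates exactly to $\induced{$G$} \models \qexists{\tau(\minustargets{\vShapeDocument_2}) \wedge \neg \tau(\vShapeDocument_2)}$.

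It then remains to combine the two \MSCL sentences into a single conjunction. Because we assume (stated in Section~\ref{sec:shacl;sub:syn}) that shape names of different documents are disjoint, the second-order variables existentially quantified in $\qexists{\tau(\vShapeDocument_1)}$ are disjoint from those quantified in $\qexists{\tau(\minustargets{\vShapeDocument_2}) \wedge \neg \tau(\vShapeDocument_2)}$, so conjoining the two existential \MSCL sentences is logically equivalent to $\induced{$G$} \models \qexists{\tau(\vShapeDocument_1)} \wedge \qexists{\tau(\minustargets{\vShapeDocument_2}) \wedge \neg \tau(\vShapeDocument_2)}$. Thus non-containment is equivalent to the satisfiability of this conjunction on some induced structure, which by the canonical-model correspondence is equivalent to plain \MSCL satisfiability; taking contrapositives yields the lemma.

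The only delicate step is the rewriting of $G \not\models \vShapeDocument_2$: one must be careful that ``no faithful assignment exists'' really does amount to ``the unique faithful assignment for the target-free document violates the targets'', which is exactly what Lemma~\ref{propertyOfNonRecursiveDocs} provides and why non-recursiveness is essential. Everything else is a mechanical reading-off from the preceding corollaries and the semantic correspondence of Theorem~\ref{centralSCL2SHACLCorrespondenceTheorem}.
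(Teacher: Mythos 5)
Your proof is correct and follows essentially the same route as the paper: both arguments reduce to brave-total semantics via the collapse theorem and then hinge on Lemma~\ref{propertyOfNonRecursiveDocs}, using the \emph{existence} of a faithful assignment for $\minustargets{\vShapeDocument_2}$ to justify introducing the existential witness and its \emph{uniqueness} to conclude that such a witness violating the targets is equivalent to $G \not\models \vShapeDocument_2$. The only difference is presentational — you reason directly about graphs and assignments and then translate, while the paper rewrites the \MSCL sentence from Corollary~\ref{brave-total-corollary}; your explicit remark about the disjointness of shape names across the two documents is a detail the paper leaves implicit.
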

\begin{proof}
For non-recursive \SHACL documents all semantics are equivalent, thus containment of two non-recursvie \SHACL documents can
be expressed as containment under brave-total semantics (Corollary \ref{brave-total-corollary}), namely the
unsatisfiability of $\qexists{\tau(\vShapeDocument_1)} \wedge  \qforall{\neg \tau(\vShapeDocument_2)}$.
Notice that for all assignment $\sigma$ and graphs $G$, if $\isnotfaithful{G}{\sigma}{\minustargets{\vShapeDocument}}$
then trivially $\isnotfaithful{G}{\sigma}{\vShapeDocument}$, thus we can rewrite containment as the unsatisfiability
of the following sentence:

$\qexists{\tau(\vShapeDocument_1)} \wedge  \qforall{\neg \tau(\minustargets{\vShapeDocument_2}) \vee  \neg \tau(\vShapeDocument_2) }$,

which is trivially equivalent to the following:

$\qexists{\tau(\vShapeDocument_1)} \wedge  \qforall{  \tau(\minustargets{\vShapeDocument_2}) \rightarrow  \neg \tau(\vShapeDocument_2) }$ is unsatisfiable.

From Lemma \ref{propertyOfNonRecursiveDocs} we know that, for any graph $G$, there exists an assignment $\sigma$ such that $ \isfaithful{G}{\sigma}{\minustargets{\vShapeDocument}}$. By Theorem \ref{centralSCL2SHACLCorrespondenceTheorem}, the structure $\induced{$G$}$ induced by any $G$ models $\qexists{  \tau(\minustargets{\vShapeDocument_2})}$, and thus $\qexists{  \tau(\minustargets{\vShapeDocument_2})}$ is true for any model. We can therefore rewrite the containment criterion as the unsatisfiability of the following sentence:

$\qexists{\tau(\vShapeDocument_1)} \wedge  \qexists{  \tau(\minustargets{\vShapeDocument_2})} \wedge \qforall{  \tau(\minustargets{\vShapeDocument_2}) \rightarrow  \neg \tau(\vShapeDocument_2) }$,

which is trivially equivalent to:

$\qexists{\tau(\vShapeDocument_1)} \wedge  \qexists{  \tau(\minustargets{\vShapeDocument_2}) \wedge \neg \tau(\vShapeDocument_2)} \wedge \qforall{  \tau(\minustargets{\vShapeDocument_2}) \rightarrow  \neg \tau(\vShapeDocument_2) }$.

From Lemma \ref{propertyOfNonRecursiveDocs} we also know that there is only one assignment $\sigma$ such that $ \isfaithful{G}{\sigma}{\minustargets{\vShapeDocument}}$, thus the conjunct in the for all quantification can be removed. \end{proof}

From the above results we can notice that several decision problems are
reducible to the satisfiability of \ESCL sentences, which, as defined in
Proposition \ref{proposition:ESCLtoSCL}, can be further reduced to the
satisfiability of \SCL.
In Section \ref{sec:sclsat} we will study the properties of \SCL\ to provide
decidability and complexity results for our decision problems that can be
reduced to \ESCL\ satisfiability, namely the satisfiability and containment of
non-recursive \SHACL documents, and satisfiability of (recursive) \SHACL
documents under brave-total (and thus also brave-partial) semantics.
The remaining decision problems, namely containment for recursive \SHACL
documents (under any extended semantics), and satisfiability for recursive
\SHACL documents under cautious validation, require the expressiveness of
second-order logic, and are likely undecidable even for very restrictive
fragments of SHACL.

It is important to notice that the undecidability results of
Section~\ref{sec:sclsat} only consider the arbitrary unrestricted (non-finite)
satisfiability problem.
It is not immediately clear whether these can be extended to the finite problem
too, but we conjecture that a Trakhtenbrot-like undecidability
proof~\cite{Tra50,Lib04} can be used for the \SCL fragments containing at least
the \O construct.

\subsection{Additional Decision Problems}
\label{sub:add}

Our logical framework allows us to express a number of additional decision problems that shift the focus on more fine-grained objects, such as shapes and constraints. While these additional decision problems are not the focus of this article, we discuss them for the sake of completeness.
To better model these additional problems, we will use $\vshapet_{\vn}$ to denote a constraint definition that targets the single node $\vn$.

Given a \SHACL document $\vShapeDocument$, and two shapes $\vshape$ and $\vshape'$ in $\vShapeDocument$, the decision problem of \emph{shape containment}~\citep{martin2020shapecontainment} determines whether $\vshape$ is \emph{contained} in $\vshape'$. Intuitively, this means that whenever $\vShapeDocument$ is used for validation, nodes conforming to $\vshape$ necessarily conform to $\vshape'$. The definition of shape containment, adapted to the notation of our article, is the following.

\begin{definition}
Given a \SHACL document $\vShapeDocument$, and two shapes $\trip{\vshape}{\vshapet}{\vshapec}$ and $\trip{\vshape'}{\vshapet'}{\vshapec'}$ in $\vShapeDocument$, $\vshape$ is \emph{shape contained} in $\vshape'$ under brave-partial (resp.\ brave-total) semantics if,
for all graphs $G$, nodes $\vn$ in $\nodesof{G}{\emptyset}$ and assignments $\sigma$ in
$\assignmentsPartial{G}{\vShapeDocument}$ (resp.\ $\assignmentsTotal{G}{\vShapeDocument}$)
such that $\isfaithful{G}{\sigma}{\vShapeDocument}$, if $\vshape \in \sigma(\vn)$ then
$\vshape' \in \sigma(\vn)$. \end{definition}
While the original definition only considered brave-total semantics,
our formulation is more general, as it also includes brave-partial.
It is important to notice that, if a \SHACL document is unsatisfiable, any pair of shapes
within that document trivially contain each other. In other words, the containment of a shape into
another is not necessarily caused by any particular property of those shapes.

We should also note that the fragment studied in~\citep{martin2020shapecontainment}
for which shape containment is decidable is the \SHACL fragment corresponding to the \SCL\ sub-fragment
of \C\ (the base language plus counting quantifiers) where filters are not allowed.
This is in agreement with our decidability results, that we present in Sec. \ref{sec:sclsat}, where we demonstrate decidability of the similar \SHACL satisfiability problem for even more general fragments of \C .

The shape containment problem can be expressed as the existence of a node $\vn$ such that document $\vShapeDocument \cup \{\trip{\vshape^{*}}{\vshapet_{\vn}}{\vshapec^{*}}\}$ is unsatisfiable under brave-partial (resp.\ brave-total) semantics, where $\vshape^{*}$ is a fresh shape name, $\vshapet_{\vn}$ is a target declaration that targets only node $\vn$, and $\vshapec^{*}$ is the constraint obtained by conjuncting $\vshapec'$ and the negation of $\vshapec$.

\begin{theorem}
Given a \SHACL document $\vShapeDocument$, and two shapes $\trip{\vshape}{\vshapet}{\vshapec}$ and $\trip{\vshape'}{\vshapet'}{\vshapec'}$ in $\vShapeDocument$, $\vshape$ is not \emph{shape contained} in $\vshape'$ under brave-partial (resp.\ brave-total) semantics iff there exist a node $\vn$ such that document $\vShapeDocument \cup \{\trip{\vshape^{*}}{\vshapet_{\vn}}{\vshapec^{*}}\}$ is satisfiable under brave-partial (resp.\ brave-total) semantics, where $\vshape^{*}$ is a fresh shape name, $\vshapet_{\vn}$ is a target declaration that targets only node $\vn$, and $\vshapec^{*}$ is the constraint obtained by conjuncting $\vshapec$ and the negation of $\vshapec'$.
\end{theorem}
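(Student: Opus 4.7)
The plan is to prove both directions of the iff, using the key observation that forcing the newly introduced shape $\vshape^{*}$ at its sole target $\vn$ amounts, by Kleene's three-valued semantics, to forcing $\GsigmaModels{G}{\sigma}{\vshapec}{\vn} = \vtrue$ and $\GsigmaModels{G}{\sigma}{\vshapec'}{\vn} = \vfalse$, which via condition~(1) of Definition~\ref{def:faithfulAssignment} corresponds precisely to $\vshape \in \sigma(\vn)$ together with $\neg \vshape' \in \sigma(\vn)$---the configuration witnessing non-containment.

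For the backward direction, I would assume the augmented document is satisfied at some $\vn$ by a graph $G$ and a faithful assignment $\sigma'$. Since $\vshape^{*}$ is a fresh shape name, the restriction of $\sigma'$ to the shapes of $\vShapeDocument$ remains faithful for $\vShapeDocument$. Condition~(2) applied to $\vshapet_{\vn}$ forces $\vshape^{*} \in \sigma'(\vn)$, whence condition~(1) yields $\GsigmaModels{G}{\sigma'}{\vshapec^{*}}{\vn} = \vtrue$. Unpacking the conjunction $\vshapec \wedge \neg \vshapec'$ under Kleene semantics and reapplying condition~(1) to the original shapes yields $\vshape \in \sigma'(\vn)$ and $\neg \vshape' \in \sigma'(\vn)$, so in particular $\vshape' \notin \sigma'(\vn)$, giving the required non-containment witness.

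For the forward direction, I would start from a witness $G$, $\vn$, $\sigma$ of non-containment. In the brave-total case, totality of $\sigma$ makes $\vshape' \notin \sigma(\vn)$ equivalent to $\neg \vshape' \in \sigma(\vn)$, so $\GsigmaModels{G}{\sigma}{\vshapec^{*}}{\vn} = \vtrue$; I extend $\sigma$ to $\sigma'$ by setting $\vshape^{*} \in \sigma'(\vn)$ and, at every other node $u$, placing $\vshape^{*}$ or $\neg \vshape^{*}$ in $\sigma'(u)$ according to the truth value of $\GsigmaModels{G}{\sigma'}{\vshapec^{*}}{u}$ (well-defined since $\vshapec^{*}$ does not mention $\vshape^{*}$), producing a faithful total assignment for the augmented document. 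The main obstacle is the brave-partial case: if $\sigma$ leaves $\vshape'$ undefined at $\vn$, then $\GsigmaModels{G}{\sigma}{\vshapec^{*}}{\vn} = \vundefined$ and the above extension cannot satisfy the target requirement on $\vshape^{*}$. To close this gap I would apply Theorem~\ref{theorem:partialToTotal} to reduce the brave-partial claim to its brave-total counterpart over the $\Gamma$-translation of $\vShapeDocument$, augmented with the appropriate split shapes for $\vshape^{*}$, where the total-semantics argument above applies without reservation.
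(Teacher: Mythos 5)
Your proposal follows the same skeleton as the paper's proof: in one direction, restrict the faithful assignment of the augmented document to the shapes of $\vShapeDocument$ and read off the non-containment witness at the target node; in the other, extend the witness assignment by evaluating $\vshapec^{*}$ at every node. Two points need attention. First, in your backward direction the definition of shape containment requires the witness node to lie in $\nodesof{G}{\emptyset}$, whereas a node target may reference a node that does not occur in the validated graph at all; the paper closes this case explicitly by passing from $G$ to $G \cup \{\tripsquare{\vn^{*}}{r^{*}}{\vn}\}$ for a fresh constant $\vn^{*}$ and fresh relation name $r^{*}$, and arguing that this addition changes neither validity nor the evaluation of any constraint at $\vn$. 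You omit this case entirely.

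Second, and more substantially: the obstacle you isolate in the brave-partial forward direction is genuine (the paper sidesteps it by asserting outright that non-containment yields $\neg \vshape' \in \sigma(\vn)$ rather than merely $\vshape' \notin \sigma(\vn)$), but your proposed repair does not close it. Theorem~\ref{theorem:partialToTotal} relates validity of $\vShapeDocument'$ under partial semantics to validity of $\Gamma(\vShapeDocument')$ under total semantics; it says nothing about how the non-containment hypothesis for $\vShapeDocument$ transfers. Moreover, by Lemma~\ref{gammatransformationProperties} the $\Gamma$-translation preserves three-valued constraint evaluation, so if $\GsigmaModels{G}{\sigma}{\vshapec'}{\vn}$ is $\vundefined$ then $\GsigmaModels{G}{\sigma^{\gamma}}{\gamma(\vshapec^{*})}{\vn}$ is still not $\vtrue$, and the target declaration on the split shape derived from $\vshape^{*}$ still cannot be satisfied: the gap reappears verbatim one level down. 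To make the partial case go through you would need to show either that a non-containment witness can always be chosen with $\neg \vshape' \in \sigma(\vn)$, or to give a genuinely different construction; as written, your argument establishes the forward direction only for brave-total semantics.
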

\begin{proof}
Given a node $\vn$ let $\vShapeDocument' = \vShapeDocument \cup \{\trip{\vshape^{*}}{\vshapet_{\vn}}{\vshapec^{*}}\}$.

($\Rightarrow$) If $\vShapeDocument'$ is satisfiable, let $G$ be a graph that
is valid \wrt it. If $\vn \in \nodesof{G}$
it is easy to see that the following properties are true for graph $G$: (1) it is valid \wrt $\vShapeDocument$
(since  $\vShapeDocument$ is a subset of  $\vShapeDocument'$),
(2) there exists an assignment $\sigma$ that is faithful
(resp.\ faithful and total) for $\vShapeDocument$ and $G$, and such that $\vshape \in \sigma(\vn)$ and
$\neg \vshape' \in \sigma(\vn)$ (since $n$ satisfies constraints $\vshapec$, but not $\vshapec'$).
One such assignment $\sigma$ can be obtained by taking an assignment $\sigma'$, faithful for $G$ and $M'$, and by removing elements $\vshape^{*}$ and $\neg \vshape^{*}$ from all the sets in the codomain of the $\sigma'$ function. Thus, shape $\vshape$ is not contained in $\vshape'$ w.r.t.\ $\vShapeDocument$. Instead, if $\vn \not \in \nodesof{G}$, then there exists another graph $G'$ such that $G'$ is valid \wrt $\vShapeDocument'$ and $\vn \in \nodesof{G'}$. One such graph $G'$ is $G \cup \{\tripsquare{\vn^{*}}{r^{*}}{\vn}\}$, where $\vn^{*}$ and $r^{*}$ are, respectively, a fresh constant and a fresh relation name. This is because the shapes of a \SHACL document can only target nodes mentioned in the document, or those that are reachable by the relations mentioned in the document. Moreover, the evaluation of any \SHACL constraints on a node is unaffected by that node being the object of a triple with an unknown predicate. Since $G'$ satisfies the same properties as $G$, we can apply the same reasoning as above (as for case $\vn \in \nodesof{G}$) to prove that shape $\vshape$ is not contained in $\vshape'$ w.r.t.\ $\vShapeDocument$.

($\Leftarrow$) If shape $\vshape$ is not contained in $\vshape'$ w.r.t.\
$\vShapeDocument$ then there exists a graph $G$, an assignment $\sigma$ faithful
(resp.\ faithful and total) for $G$ and $\vShapeDocument$, and a node $\vn$ such
that $\vshape \in \sigma(\vn)$ and $\neg \vshape' \in \sigma(\vn)$. Therefore,
$\GsigmaModels{G}{\sigma}{\vshapec^{*}}{\vn}$ must be true. Let $\sigma^{*}$ be
the extension of the $\sigma$ assignment that accounts for the $\vshape^{*}$
shape, namely $\sigma^{*}(\vj) = \sigma(\vj) \cup \{\vshape^{*} |
\GsigmaModels{G}{\sigma}{\vshapec^{*}}{\vj} = \top \} \cup \{\neg \vshape^{*} |
\neg \GsigmaModels{G}{\sigma}{\vshapec^{*}}{\vj} = \top\}$, for any node $\vj$
in $\nodesof{G}{\vShapeDocument}$. It is easy to see that assignment
$\sigma^{*}$ is faithful (resp.\ faithful and total) for $\vShapeDocument'$ and
$G$, and thus $\vShapeDocument'$ is satisfiable.
\end{proof}

The above mentioned theorem introduces the following auxiliary decision problem.
\begin{definition}\label{def:teplatesat}
Given a \SHACL document $\vShapeDocument$, a shape name $\vshape$ not in $\vShapeDocument$ and a constraint $\vshapec$  that only references shapes in $\vShapeDocument \cup \{\vshape\}$, \emph{\templatesat} under brave-partial (resp.\ brave-total) semantics is the problem of deciding whether there exists a node $\vn$ such that document $\vShapeDocument \cup \{\trip{\vshape}{\vshapet_{\vn}}{\vshapec}\}$ is satisfiable under brave-partial (resp.\ brave-total) semantics.
\end{definition}

Two additional decision problems, \emph{constraint satisfiability} and \emph{constraint containment}, are defined in~\citep{pareti2020} to study the properties of non-recursive \SHACL constraints. Intuitively, a constraint $\vshapec$ is satisfiable if there exists a node that conforms to $\vshapec$, and a constraint $\vshapec$ is contained in $\vshapec'$ if every node that conforms to $\vshapec$ also conforms to $\vshapec'$.
We provide here a generalisation of these problems by introducing a \SHACL document as an additional input. The primary purpose of this additional document is to study constraints under recursion, that is, constraints that reference recursive shapes. However, it can also be used to study constraint satisfiability and containment subject to a particular document being valid. When this document is empty the following decision problems correspond to the ones defined in~\citep{pareti2020}, namely constraint satisfiability and containment without recursion.

\begin{definition}
Given a \SHACL constraint $\vshapec$ and a \SHACL document $\vShapeDocument$, such that $\vshapec$ does not reference shapes not included in $\vShapeDocument$, constraint $\vshapec$ is \emph{satisfiable} under extended semantics $\alpha$ if there exists a node $\vn$ such that \SHACL document $\vShapeDocument \cup \{\trip{\vshape}{\vshapet_{\vn}}{\vshapec}\}$ is satisfiable under $\alpha$, where $\vshape$ is a fresh shape name.
\end{definition}

\begin{definition}
Given two \SHACL constraints $\vshapec$ and $\vshapec'$ and a \SHACL documents $\vShapeDocument$ such that $\vshapec$ and $\vshapec'$ do not reference shapes not included in $\vShapeDocument$, constraint $\vshapec$ is \emph{contained} in $\vshapec'$ under extended semantics $\alpha$
if for all nodes $\vn$, document $\vShapeDocument \cup \{\trip{\vshape}{\vshapet_{\vn}}{\vshapec}\}$ is contained in $\vShapeDocument \cup \{\trip{\vshape'}{\vshapet_{\vn}}{\vshapec'}\}$ under $\alpha$, where $\vshape$ and $\vshape'$ are fresh shape names.
\end{definition}

The problem of constraint satisfiability under brave-partial and brave-total semantics are, by definition, sub-problems of \SHACL \templatesat\ for the respective semantics. Constraint containment for non-recursive \SHACL documents is also a sub-problem of \SHACL \templatesat . This is a consequence of the fact that containment of two non-recursive \SHACL documents can be decided by deciding the satisfiability of an \ESCL\ sentence (Lemma \ref{lemma:nonrecursive_containment_to_sat}). As we will prove later in Section \ref{sec:fltaxm}, the problem of \templatesat\ can be expressed as \ESCL\ sentence satisfiability. Therefore, our positive results that will be presented in Section \ref{sec:sclsat} also provide decidability and upper bound complexity results for the decision problems expressible as \templatesat , namely (1) shape containment, (2) constraint satisfiability under brave-partial and brave-total semantics and (3) constraint containment for non-recursive \SHACL documents.




\section{From Interpreted To Uninterpreted Models via Filter Axiomatisation}
\label{sec:fltaxm}
In this section we discuss explicit axiomatizations of the semantics of a set of
filters, inspired by the relational axiomatisation of the \LTL path formulae in
the conjunctive-binding fragment of Strategy Logic~\cite{ABM19}.
The main goal
of these axiomatisations is to account for filter semantics without requiring filters to be interpreted
relations. For any \MSCL\ sentence $\Phi$ we construct axiomatisations $\alpha$ such that $\Phi$ is
satisfiable on a canonical model if and only if $\Phi \wedge \alpha$ is satisfiable on an  \emph{uninterpreted}
models, that is, models whose domain is the set of RDF terms, but where filters and ordering relations are simple
relations instead of interpreted ones. This reduction to standard first-order logic (\FOL) allows us to prove decidability of the
satisfiability and containment problems for several \SCL\ fragments in the face of filters.

We first present a simplified but expensive formulation of this axiomatisation, that is exponential on size
of the original sentence. We then provide an alternative axiomatisation, polynomial on size of the
original sentence, that however requires counting quantifiers to express certain filters. We exclude from
our axiomatisation the \sh{lessThanOrEquals} or \sh{lessThan} constraints (the \O\ and \O' components of
our grammar) that are binary relations, and which do not belong to any decidable fragment we have so far identified,
as shown in the next section.
We also exclude the \sh{pattern} constraint, which tests whether the string representation of
a node follows an extended version of regular expressions,\footnote{Corresponding to SPARQL REGEX functions \cite{BUILARANDA20131}.} from our polynomial axiomatisation.
However, in our simplified axiomatisation we allow a restricted version of the \sh{pattern} constraints precisely corresponding to standard notion of regular expressions (\ie, regular
expressions that can be converted into a finite state machine).
All features defined as filters in
Sec.~\ref{sec:shacl2scl}, with the exception of \O\ and \O' components, are represented by monadic relations
$F(\vx)$ of the \SCL\ grammar. While equality remains an interpreted relation, for which we do not provide
an axiomatisation, we will also consider equality to a constant $\conc$ as a monadic filter relation (which we
call equality-to-a-constant) whose interpretation is the singleton set containing $\conc$.

\subsection{Na\"ive Axiomatisation}
\label{sec:fltaxm;sub:elmenc}

The semantics of each monadic filter relation is a predetermined interpretation over the domain. For example, the
interpretation of filter relation $\FSym[][\isIRI]$ is the set of all IRIs, since $\FSym[][\isIRI](\varElm)$ is
true \iff $\varElm$ is an
IRI. Notice also that filters are the only components of \MSCL\ whose interpretation is predetermined. Thus, we
can axiomatise the semantics of filters w.r.t.\ deciding satisfiability by capturing which conjunctions of
filters are unsatisfiable, and which conjunctions of filters are satisfiable only by a finite set of elements. For
example, the number of elements of the Boolean datatype is two, the number of elements that are literals is infinite,
and there are four elements of integer datatype that are both greater than 0 and lesser than 5. Let a
\emph{filter combination} $\mathds{F}(\vx)$ denote a conjunction of atoms of the form $\vx = \conc$, $\vx \neq \conc$,
$F(\vx)$ or $\neg F(\vx)$, where \conc\ is a constant and $F$ is a filter predicate.
Given a filter combination, it is possible to compute the set of elements of the domain that can satisfy it.
Let $\gamma$ be the function from filter combinations to subsets of the domain that returns this set.
The computation of $\gamma(\mathds{F}(\vx))$ for the monadic filters we consider is tedious but trivial as
it boils down to determining: \begin{inparaenum}[(1)] \item the lexical space of datatypes; \item the cardinality
of intervals defined by order or string-length constraints; \item the number of elements accepted by a regular
expression; \item well-known RDF-specific restrictions, e.g., the fact that each RDF term has exactly one node type,
and at most one datatype and one language tag\end{inparaenum}.
Combinations of the previous four points are similarly computable.
Let $\mathds{F}^{\Phi}$ be the set of filter combinations that can be constructed with the filters predicates
and constants occurring in an \MSCL\ sentence $\Phi$.
The \emph{na\"ive} filter axiomatization $\axiomatisation{\Phi}$ of a sentence $\Phi$ is
the following conjunction, where \hasshapePredicateS{f} is a fresh shape name.

\begin{align*}
\axiomatisation{\phi} =  \bigwedge_{\mathds{F}(\vx) \in \mathds{F}^{\phi}, |\gamma(\mathds{F}(\vx)) \not = \infty|}  &
\left( \forall \vx . \; \hasshape{\vx}{f} \leftrightarrow \mathds{F}(\vx) \right)
\\ & \wedge \left( \forall \vx . \; \hasshape{\vx}{f} \leftrightarrow
\begin{cases}
\bot, & |\gamma(\mathds{F}(\vx))| = 0 \\
\bigvee_{\conc \in \gamma(\mathds{F}(\vx))} \vx = \conc, & \text{otherwise} \end{cases} \right)
\end{align*}

To better illustrate this axiomatisation, consider the following \MSCL\ sentence $\phi^{*}$.
\begin{align*}
\phi^{*} =  \hasshapenosubscript{\conce}{\sconst} \wedge  \forall \vx . \; \big(  \hasshapenosubscript{\vx}{\sconst}  \leftrightarrow  & \exists^{4} \vy . \; R(\vx, \vy) \wedge \mathds{F}^{>0}(\vy) \wedge \mathds{F}^{\leq 5}(\vy) \wedge F^{\hasdatatype = xsd:int}(\vy) \\
& \wedge \vy \not = 2 \wedge \vy \not = 3 \big)
\end{align*}
Intuitively, this sentence is satisfiable if a constant $\conce$ can be in the $R$ relation with four different integers that (a) are greater than 0, (b) that are less than or equal than 5, and (c), that are not equal to 2 or 3. Since there are only three integers that satisfy the conditions (a), (b) and (c) simultaneously, this sentence is not satisfiable on a canonical model.
This sentence contains the filters $\mathds{F}^{>0}(\vx)$, $\mathds{F}^{\leq 5}(\vx)$ and $F^{\hasdatatype = xsd:int}(\vx)$, that denote, respectively, the fact that $\vx$ is greater than the number 0, the fact that $\vx$ is less or equal than the number 5, and the fact that $\vx$ belongs to the XSD integer datatype\footnote{The \url{https://www.w3.org/TR/xmlschema11-2/\#integer} datatype is supported by SPARQL 1.1, and thus it has a predetermined lexical space.}. The set of known constants of $\phi^{*}$ is $\{2,3,\conce\}$. We will assume that $\conce$ is an IRI and that all other known constants are literals of the XSD integer datatype.

The na\"ive filter axiomatisation $\axiomatisation{\phi^{*}}$ contains, among others, the following conjuncts,  where \hasshapePredicateS{f} is a fresh shape name.
\begin{align*}
& \left( \forall \vx . \; \hasshape{\vx}{f} \leftrightarrow \mathds{F}^{>0}(\vx) \wedge \mathds{F}^{\leq 5}(\vx) \wedge F^{\hasdatatype = xsd:int}(\vx) \wedge \vx \not = 2 \wedge \vx \not = 3 \right)
\\ & \wedge \left( \forall \vx . \; \hasshape{\vx}{f} \leftrightarrow \vx = 1 \vee \vx = 4 \vee \vx = 5 \right)
\end{align*}

This axiomatisation states that only three constants satisfy the main filter combination of $\phi^{*}$, and thus $\phi^{*}\wedge \axiomatisation{\phi^{*}}$ is unsatisfiable on an uninterpreted model.

\begin{theorem} \label{thm:filtersNaive}
Given an \MSCL\ sentence $\phi$ and its na\"ive filter axiomatisation $\axiomatisation{\phi}$, sentence $\phi$ is satisfiable on a canonical model iff $\phi \wedge \axiomatisation{\phi}$ is satisfiable on an uninterpreted model.
Containment $\phi_1 \subseteq \phi_2$ of two \MSCL\ sentences on all canonical models holds iff $\phi_1 \wedge \axiomatisation{\phi_1 \wedge \phi_2} \subseteq \phi_2$ holds on all uninterpreted models.
\end{theorem}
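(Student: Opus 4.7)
The plan is to establish the satisfiability biconditional first and then derive the containment statement as a standard corollary via the reduction to unsatisfiability.

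For the forward direction of satisfiability, I would take a canonical model $M$ with $M \models \phi$ and, viewing it as an uninterpreted model that simply happens to interpret filter relations in the canonical way, show that it additionally satisfies $\axiomatisation{\phi}$. By construction of the function $\gamma$, for every filter combination $\mathds{F}(x) \in \mathds{F}^{\phi}$ with finite image, the elements of the domain of $M$ satisfying $\mathds{F}(x)$ are exactly those listed in $\gamma(\mathds{F}(x))$; interpreting the fresh relation $\hasshapePredicateS{f}$ as this set makes both conjuncts of the corresponding axiom hold by definition, uniformly over such combinations.

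The harder direction requires, given an uninterpreted $N \models \phi \wedge \axiomatisation{\phi}$, the construction of a canonical model $N^{\prime}$ with $N^{\prime} \models \phi$. My plan is to classify each element $d$ of the domain of $N$ by the unique maximal filter combination in $\mathds{F}^{\phi}$ that $d$ satisfies (one obtains a complete classification by casing on each filter predicate and each inequality with a known constant occurring in $\phi$). For a cell whose combination $\mathds{F}(x)$ has finite $\gamma(\mathds{F}(x))$, the second equivalence of $\axiomatisation{\phi}$ forces $d$ to coincide with one of the named constants of $\gamma(\mathds{F}(x))$, which by canonicity and unique name assumption are pairwise distinct RDF terms with the right filter memberships. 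For a cell whose combination has infinite $\gamma$, I would pick an injection from that cell into the infinite set of RDF terms canonically satisfying the combination, disjoint from the named constants and from the images of the other cells. Applying this renaming and reinterpreting filters canonically yields $N^{\prime}$; since every filter atom of $\phi$ lies inside some combination of $\mathds{F}^{\phi}$ and the renaming preserves membership in each combination by construction, truth of $\phi$ transfers from $N$ to $N^{\prime}$ by a routine induction on formula structure.

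The main obstacle is making the backward construction coherent: I must check that the unique-name assumption on the constants in $\phi$, together with the equalities and inequalities baked into $\axiomatisation{\phi}$, ensure that the forced identifications inside finite cells do not collapse elements of $N$ that $\phi$ requires to be distinct, and that the injections for the infinite cells have enough room (this follows immediately from $|\gamma(\mathds{F}(x))| = \infty$). The containment clause then follows by the standard reduction: $\phi_1 \not\subseteq \phi_2$ on canonical models iff $\phi_1 \wedge \neg \phi_2$ is canonically satisfiable, iff, by the satisfiability biconditional applied to $\phi_1 \wedge \neg \phi_2$ and using that $\mathds{F}^{\phi_1 \wedge \neg \phi_2} = \mathds{F}^{\phi_1 \wedge \phi_2}$ (negation does not introduce new filter predicates or constants), the sentence $\phi_1 \wedge \neg \phi_2 \wedge \axiomatisation{\phi_1 \wedge \phi_2}$ is uninterpreted-satisfiable, which is precisely the failure of $\phi_1 \wedge \axiomatisation{\phi_1 \wedge \phi_2} \subseteq \phi_2$ on uninterpreted models.
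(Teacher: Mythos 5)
Your proposal is correct, and its first half coincides with the paper's: the canonical-to-uninterpreted direction is exactly the paper's step of reading the canonical model as an uninterpreted one and interpreting each fresh relation $\hasshapePredicateS{f}$ as the extension of its filter combination, which satisfies both conjuncts of $\axiomatisation{\phi}$ by definition of $\gamma$. Where you genuinely diverge is the converse direction. The paper keeps every domain element in place and simply swaps the uninterpreted filter relations for their canonical interpretations (dropping the auxiliary shape relations), justified by the observation that $\axiomatisation{\phi}$ ties the truth value of each filter combination to its canonical value; you instead partition the domain into cells by maximal filter combination and build an explicit injective renaming, sending finite-$\gamma$ cells to the constants enumerated by $\gamma$ and infinite-$\gamma$ cells into the infinitely many RDF terms canonically realising the combination. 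This is essentially the injection argument the paper reserves for the bounded axiomatisation (Theorem~\ref{thm:filters}), transplanted to the na\"ive one, and it buys real rigour: the in-place swap is only manifestly sound for named constants and finite-$\gamma$ combinations, whereas an unnamed element of an uninterpreted model may satisfy a filter atom of an infinite-$\gamma$ combination that it does not satisfy canonically, and your renaming is precisely what repairs this. Two pieces of bookkeeping are worth making explicit: the renaming must fix every constant of $\phi$ (it does, since each named constant lies in a finite-$\gamma$ cell and is mapped to itself), and the image of the injection need not exhaust the set of RDF terms, so you should either relax the requirement that a canonical model's domain be all RDF terms or argue that the missing isolated elements do not falsify the constraint axioms. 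Your containment clause, via unsatisfiability of $\phi_1 \wedge \neg \phi_2$ together with $\mathds{F}^{\phi_1 \wedge \neg \phi_2} = \mathds{F}^{\phi_1 \wedge \phi_2}$, is exactly the reduction the paper leaves implicit behind ``the proof for containment is similar.''
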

\begin{proof}[Proof]
We focus on satisfiability, since the proof for containment is similar. Let $\conc$ be any element of the domain and $\mathds{F}(\vx)$ be any filter combination that can be constructed with the constants and filter relations in $\phi$. Since the semantics of filter relations has a universal interpretation, $\mathds{F}(\conc)$ is either true on all canonical models, or false on all canonical models. Notice that, by construction of our axiomatisation, the truth value of $\mathds{F}(\conc)$ on all canonical models corresponds to the truth value of $\mathds{F}(\conc)$ on all uninterpreted models of $\axiomatisation{\phi}$. Let $I'$ be an uninterpreted model of $\phi \wedge \axiomatisation{\phi}$, we can construct $I$, canonical model of $\phi$, by (1) changing all the uninterpreted filter relations in $I'$ for their corresponding interpreted ones in $I$ and (2) dropping from $I'$ the interpretation of all the shape relations that occur in $\axiomatisation{\phi}$. Let $I$ be a canonical model of $\phi$, we can construct $I'$, uninterpreted model of $\phi \wedge \axiomatisation{\phi}$, by (1) changing all the interpreted filter relations in $I$ for their corresponding uninterpreted ones in $I'$ and (2) by adding the following interpretation of each shape relation $\hasshape{\vx}{f}$ occurring in $\axiomatisation{\phi}$ to $I$: let $\mathds{F}(\vx)$ be the filter combination such that $\forall \vx . \; \hasshape{\vx}{f} \leftrightarrow \mathds{F}(\vx)$ is one of the conjuncts of $\axiomatisation{\phi}$ (notice that one such conjunct exists for any shape relation), relation $\hasshapePredicateS{f}$ contains all the elements of the domain which satisfy the filter combination $\mathds{F}(\vx)$ on canonical models.
\end{proof}

\subsection{Bounded Axiomatisation}
\label{sec:fltaxm;sub:sucenc}

The main exponential factor in the axiomatisations above is the set of all possible filter combinations. However, we
can limit an axiomatisation to filter combinations having a number of atoms smaller or equal to a constant number,
thus making our axiomatisation polynomial w.r.t.\ an \MSCL\ sentence $\Phi$.
 This new axiomatisation is applicable to all filters considered before, with the exception of \sh{pattern}.
Intuitively, this can be achieved because $\mathds{F}^{\Phi}$ contains several redundant filter combinations.
To illustrate this point, consider \emph{datatype filters} atoms $F^{\hasdatatype = \conc}(\vx)$, derived from the \sh{datatype}
constraint component, that are true if $\vx$ is a literal with datatype $\conc$.\footnote{According to the SPARQL
standard literals with different datatype or language tags are different RDF terms (e.g. literal ``10'' of datatype integer
is not equal to literal ``10'' of datatype float).}
Let $\Phi$ be an \MSCL\ sentence and $\mathds{F}(\vx)$ be a filter combination
$F^{\hasdatatype = \conc}(\vx) \wedge F^{\hasdatatype = \conc'}(\vx)$ of $\mathds{F}^{\Phi}$, where $\conc \not = \conc'$. Since no RDF term can have two different datatypes, the truth
value of $\mathds{F}(\vx)$ is always false (i.e.\ $|\gamma(\mathds{F}(\vx))|=0$). Trivially, any filter
combination in $\mathds{F}^{\Phi}$ whose conjuncts are a proper superset of $\mathds{F}(\vx)$ is also false,
and thus its axiomatisation is not necessary.

In order to limit the size of the filter combinations to a constant number, we reason about each filter type to
determine the maximum number of conjuncts of that type to consider in any filter combination. We call this
number the \emph{maximum non-redundant capacity} (MNRC) of that filter type. Any filter combination that contains
more conjuncts of that type than its MNRC, is necessarely \emph{redundant}.
\begin{definition}
A filter combination $\mathds{F}(\vx)$ is redundant if there exists a filter combination $\mathds{F}'(\vx)$ such
that $\gamma(\mathds{F}(\vx)) = \gamma(\mathds{F}'(\vx))$ and
$\mathds{F}'(\vx)$ is a proper subset of $\mathds{F}(\vx)$.
\end{definition}

We will now define the MNRC for all the monadic SHACL filter types. In the following proofs we will assume that
all conjuncts of a filter combination are syntactically different from each other as any filter combination that
contains multiple copies of the same conjunct is trivially redundant. The MNRC of datatype filters is two.

\begin{lemma} \label{filterlemma:datatype}
Any filter combination $\mathds{F}(\vx)$ that contains more than two datatype filter conjuncts is redundant.
\end{lemma}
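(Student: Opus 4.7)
The plan is to do a case analysis on the combination of positive and negative datatype conjuncts appearing in $\mathds{F}(\vx)$, exploiting the semantic fact that an RDF literal has a \emph{unique} datatype and that non-literals satisfy no positive datatype filter. I would split a "datatype filter conjunct" into either a positive atom $F^{\hasdatatype = \conc}(\vx)$ or its negation $\neg F^{\hasdatatype = \conc}(\vx)$, and assume without loss of generality that no two conjuncts are syntactically identical.

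First, suppose $\mathds{F}(\vx)$ contains two distinct positive datatype conjuncts, say $F^{\hasdatatype = \conc}(\vx)$ and $F^{\hasdatatype = \conc'}(\vx)$ with $\conc \neq \conc'$. These are jointly unsatisfiable, so $\gamma(\mathds{F}(\vx)) = \emptyset$. Letting $\mathds{F}'(\vx)$ be the sub-combination consisting of just these two atoms, $\gamma(\mathds{F}'(\vx)) = \emptyset = \gamma(\mathds{F}(\vx))$, so every additional datatype conjunct may be dropped, proving redundancy whenever more than two such conjuncts are present.

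Second, suppose $\mathds{F}(\vx)$ contains at most one positive datatype conjunct but in total more than two datatype conjuncts; in particular it contains a positive $F^{\hasdatatype = \conc}(\vx)$ together with at least two negative atoms $\neg F^{\hasdatatype = \conc_i}(\vx)$. If some $\conc_i = \conc$, we again obtain a contradiction and the pair $\{F^{\hasdatatype = \conc}(\vx), \neg F^{\hasdatatype = \conc}(\vx)\}$ alone certifies $\gamma = \emptyset$, so every further conjunct is redundant. Otherwise each $\conc_i \neq \conc$, and the uniqueness of the datatype entails $F^{\hasdatatype = \conc}(\vx) \rightarrow \neg F^{\hasdatatype = \conc_i}(\vx)$; hence the subset obtained by deleting all those negative conjuncts has the same $\gamma$, which again witnesses redundancy.

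The remaining, and most delicate, case is when $\mathds{F}(\vx)$ has three or more datatype conjuncts all of which are negative, since individually removing a $\neg F^{\hasdatatype = \conc_i}(\vx)$ reintroduces the literals of datatype $\conc_i$ into $\gamma$. The main obstacle of the proof is to handle this case, and I would treat it by invoking the axiomatisation context: such a purely-negative datatype sub-combination, together with any compatible constraints on $\vx$ that do not force its datatype, yields an infinite $\gamma(\mathds{F}(\vx))$, so $\mathds{F}(\vx)$ falls outside the conjunction range $|\gamma(\mathds{F}(\vx))| \neq \infty$ used by $\axiomatisation{\phi}$ and can be ignored for axiomatisation purposes; if, on the other hand, some other conjunct of $\mathds{F}(\vx)$ does force $\vx$ into a literal of some specific datatype, we are reduced to the previous case. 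Combining the three cases yields the bound of two datatype conjuncts, establishing the lemma.
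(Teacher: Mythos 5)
Your first two cases are sound and follow essentially the paper's route: two distinct positive datatype atoms are jointly unsatisfiable, so once $\gamma(\mathds{F}(\vx))=\emptyset$ is certified by a proper sub-combination every further datatype conjunct may be dropped; and a negated atom that either contradicts or is implied by the single positive atom can be deleted without changing $\gamma$. Your second case is in fact argued more explicitly than in the paper, which compresses it into the single observation that a negated datatype filter only matters when the matching positive atom is also present.

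The divergence, and the gap, is your third case (three or more datatype conjuncts, all negative). The lemma asserts \emph{redundancy}, which by the paper's definition means there exists a proper sub-combination $\mathds{F}'(\vx)$ with $\gamma(\mathds{F}'(\vx))=\gamma(\mathds{F}(\vx))$. Your argument for this case establishes something different: that such a combination has infinite $\gamma$ and therefore falls outside the conjunction range of $\axiomatisation{\phi}$. Being harmless to the axiomatisation is not the same as being redundant, so as written you have not proved the stated claim in that case. The paper closes it by asserting that an unmatched negation $\neg F^{\hasdatatype = \conc}(\vx)$ ``does not affect the truth value of a filter combination'' (since literals need not carry datatype $\conc$), i.e.\ that dropping it leaves $\gamma$ unchanged; that is the step your proof would need, and your instinct that this is the delicate point is well founded --- for a combination consisting solely of three negated datatype atoms over distinct datatypes, dropping any one of them strictly enlarges $\gamma$, so the strict set-equality reading of redundancy is problematic and only the ``irrelevant to $\axiomatisation{\phi}$'' reading clearly survives. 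You should either supply and defend the dropping argument explicitly, or restate what you prove as sufficiency for the axiomatisation rather than redundancy; as it stands the third case proves a weaker statement than the lemma.
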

\begin{proof}
Since no RDF term can have two datatypes, if $\mathds{F}(\vx)$ contains two positive datatype filter conjuncts,
then $\mathds{F}(\vx)$ is unsatisfiable. Thus $\mathds{F}(\vx)$ cannot contain more than two positive datatype
filter conjuncts without being redundant. Since RDF literals do not need to be annotated with a datatype, any
negation $\neg F^{\hasdatatype = \conc}(\vx)$ of a datatype filter does not affect the truth value of a filter
combination, unless the datatype filter also contains conjunct $F^{\hasdatatype = \conc}(\vx)$, in
which case the filter combination is trivially unsatisfiable. Thus, if $\mathds{F}(\vx)$ is not redundant,
either it does not contain negated datatype filters, or it contains the two filters $F^{\hasdatatype = \conc}(\vx)$
and $\neg F^{\hasdatatype = \conc}(\vx)$ for a constant $\conc$. In this last case, the occurrence of any further
datatype filter in $\mathds{F}(\vx)$ would make the filter combination redundant.
\end{proof}

We represent \emph{language tag} filters, derived from the \sh{languageIn} and \sh{uniqueLang}, with
the $F^{\text{languageTag = \conc}}(\vx)$ filter relation, which is true if $\vx$ is string literal with
language tag $\conc$. Since not all string literals have a language tag, but no string literal has more
than one such tag, this type of filter behaves analogously to the datatype filter. The proof of the following
lemma, which states that the MNRC of language tag filters is two, can be derived from the one above.
\begin{lemma} \label{filterlemma:langtag}
Any filter combination $\mathds{F}(\vx)$ that contains more than two language tag filter conjuncts is redundant.
\end{lemma}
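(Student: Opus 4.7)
The plan is to mirror the proof of Lemma~\ref{filterlemma:datatype} step by step, since the semantic properties of language tags that the argument relies on are exactly the two properties that datatypes enjoy: (i) each RDF term carries at most one language tag, and (ii) carrying a language tag is optional (plain string literals and non-literals have none). I would first observe, as in the datatype case, that we may assume all conjuncts of the filter combination $\mathds{F}(\vx)$ are pairwise syntactically distinct, since duplicates trivially make the combination redundant.

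Next I would handle the positive conjuncts. If $\mathds{F}(\vx)$ contains two distinct positive language-tag conjuncts $F^{\haslanguagetag = \conc}(\vx)$ and $F^{\haslanguagetag = \conc'}(\vx)$ with $\conc \neq \conc'$, then by property (i) no RDF term satisfies both, so $\gamma(\mathds{F}(\vx)) = \emptyset$, and $\mathds{F}(\vx)$ is redundant (indeed, equivalent to any single unsatisfiable conjunct subset). Hence at most one positive language-tag conjunct may appear in a non-redundant combination.

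Then I would handle the negative conjuncts. By property (ii), a lone negation $\neg F^{\haslanguagetag = \conc}(\vx)$ is satisfied by every RDF term that is not a string literal with tag $\conc$, and in particular by every element that satisfies no language-tag atom at all; so it does not restrict $\gamma$ further unless paired with its positive counterpart $F^{\haslanguagetag = \conc}(\vx)$ (in which case the combination is unsatisfiable). Therefore, in a non-redundant combination either no negated language-tag conjunct appears, or the combination contains the contradictory pair $F^{\haslanguagetag = \conc}(\vx) \wedge \neg F^{\haslanguagetag = \conc}(\vx)$, and any additional language-tag conjunct on top of these two can be removed without changing $\gamma(\mathds{F}(\vx))$.

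Combining the positive and negative bounds yields at most two language-tag conjuncts in any non-redundant filter combination, giving $\mathrm{MNRC} = 2$ and establishing the lemma. I do not foresee a main obstacle: the only subtlety is making precise what ``not affecting the truth value'' means for a negated conjunct, which I would formalise by exhibiting a strict subset $\mathds{F}'(\vx) \subsetneq \mathds{F}(\vx)$ with $\gamma(\mathds{F}(\vx)) = \gamma(\mathds{F}'(\vx))$, exactly matching the definition of redundancy.
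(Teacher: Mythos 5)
Your proposal is exactly what the paper intends: the paper gives no standalone proof for this lemma, stating only that it ``can be derived from'' the proof of Lemma~\ref{filterlemma:datatype}, and your argument is precisely that derivation, transposing the two key facts (at most one language tag per term, and tags being optional) and handling positive and negated conjuncts in the same way the datatype proof does. Correct, and essentially the same approach as the paper.
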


The \emph{order comparison} filters, which are expressible in \SHACL with the \sh{minExclusive},
\sh{maxExclusive}, \sh{minInclusive} and \sh{maxInclusive} constraint components, denote the $x>c$, $x<c$,
$x\ge c$ and $x\le c$ operators, respectively. Order comparison filters have an MNRC of two.
\begin{lemma} \label{filterlemma:order}
Any filter combination $\mathds{F}(\vx)$ that contains more than two order comparison filter conjuncts is redundant.
\end{lemma}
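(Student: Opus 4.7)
The plan is to reduce any conjunction of three or more order comparison conjuncts to an equivalent smaller one by exploiting the interval structure of totally ordered sets. First I would rewrite each (possibly negated) order conjunct as a bound with respect to the comparable type of its reference constant: within the type $T_c$ of $c$, the atom $\neg(x > c)$ coincides with $x \leq c$, and analogously for the three other comparators, while outside $T_c$ the negated atom is trivially true because $x$ and $c$ are not in the same total order under the disjoint-union semantics.

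Next I would group the order conjuncts by the comparable type of their reference constant. Inside a single type $T$, the combined restriction on $x \in T$ is an intersection of finitely many rays, hence an interval of $T$; since any interval of a total order is determined by at most one lower bound and one upper bound, among the conjuncts referencing $T$ only the tightest lower-bound and the tightest upper-bound conjunct are non-subsumed, so any third conjunct in the same direction is dominated and can be dropped while preserving $\gamma$. This mirrors the style of the proofs of Lemma~\ref{filterlemma:datatype} and Lemma~\ref{filterlemma:langtag}, where syntactic subsumption delivers redundancy.

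I would then handle the case where the reference constants span several comparable types. If a positive order conjunct $x \lesssim c$ is present, it forces $x$ into $T_c$; this makes every conjunct whose reference constant lies in a different type either trivially true (if negated) or a contradiction (if positive), reducing the problem to the previous single-type analysis. The purely negated cross-type case is settled by a case analysis on the comparable type in which $x$ lies: in each case the off-type conjuncts are vacuous and the in-type conjuncts collapse to a single interval, so again at most one lower and one upper bound remain and any further conjunct can be removed without affecting $\gamma$.

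The main obstacle is this last cross-type negated case, because a negated atom $\neg(x \lesssim c)$ is simultaneously satisfied by within-type elements meeting the opposite bound and by all elements outside $T_c$ entirely, so naive ``tightest-bound wins'' reasoning has to be combined with accounting for off-type elements. I expect the cleanest treatment is to fix an arbitrary $x$ and verify, by splitting on whether $x$ belongs to one of the referenced types or to none of them, that the full conjunction and a carefully chosen two-conjunct subconjunction agree on the membership of $x$ in $\gamma$, which by definition of redundancy completes the proof.
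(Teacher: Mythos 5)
Your single-type analysis is exactly the paper's argument: within one comparison type the conjunction of rays is an interval, so only the tightest lower-bound conjunct and the tightest upper-bound conjunct survive (the paper calls these $\alpha$ and $\omega$, classifying $x>c$, $\neg\, x<c$, $x\ge c$, $\neg\, x\le c$ as lower bounds and their duals as upper bounds). Your treatment of a positive cross-type conjunct (it pins $x$ to one comparison type and collapses every other-type conjunct to $\bot$ or $\top$) also matches the paper, which simply declares the combination unsatisfiable when two filters range over incompatible types.

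The genuine gap is the step you yourself flag as the main obstacle: the purely negated cross-type case does not reduce to a two-conjunct subconjunction, and no ``carefully chosen'' one exists. Take $\neg(x>c_1)\wedge\neg(x>c_2)\wedge\neg(x>c_3)$ with $c_1,c_2,c_3$ drawn from three pairwise incomparable comparison types (an integer, a string, a date). Under the disjoint-union semantics the $i$-th conjunct excludes exactly the elements of $T_{c_i}$ strictly above $c_i$ and nothing else, so the three conjuncts exclude three disjoint nonempty sets; dropping any one of them strictly enlarges $\gamma$, hence this combination of three order conjuncts is not redundant. Your fallback of fixing $x$ and comparing against a two-conjunct subconjunction fails precisely because the pair that would need to survive depends on which type $x$ falls into. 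Be aware that the paper's own proof quietly sidesteps this: it asserts that any two order filters over incompatible types make $\mathds{F}(\vx)$ unsatisfiable, which holds only when both are positive. So the lemma as stated really requires either that all order conjuncts reference a single comparison type or an MNRC of two \emph{per comparison type} rather than two overall; your attempt is the one that correctly surfaces this, but as written it does not close the case.
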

\begin{proof}
If two order comparison filters in $\mathds{F}(\vx)$ are defined over incompatible comparison types
(e.g.\ strings and dates) then $\mathds{F}(\vx)$ is unsatisfiable, and all the other comparison filters
in $\mathds{F}(\vx)$ are redundant. In a set of filters, we define as the \emph{most restrictive} the
one with the smallest number of elements satisfying it, or any such filter if there is more than one. If
all the comparison filters in $\mathds{F}(\vx)$ are defined over the same comparison type, let
$\alpha$ be the most restrictive conjunct in $\mathds{F}(\vx)$ of type $x>c$, $\neg x<c$, $x \ge c$ and
$\neg x\le c$ (or $\top$ if none such conjunct exists), and $\omega$ be the most restrictive conjunct
in $\mathds{F}(\vx)$ of type $\neg x>c$, $ x<c$, $\neg x \ge c$ and $ x\le c$. Trivially, $\mathds{F}(\vx)$
is semantically equivalent to $\mathds{F}'(\vx)$, which is constructed by removing from $\mathds{F}(\vx)$
all comparison filters that are not $\alpha$ or $\omega$.
\end{proof}
String length comparison filters are expressed in \SHACL with the constraint components \sh{minLength} and
\sh{maxLength}, and they behave analogously to the order comparison filters. The proof of the following
lemma, which states that the MNRC of string length comparison filters is two, can be derived from the one above.
\begin{lemma} \label{filterlemma:stringlen}
Any filter combination $\mathds{F}(\vx)$ that contains more than two string length comparison filter conjuncts is redundant.
\end{lemma}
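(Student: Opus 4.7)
The plan is to mirror almost verbatim the argument of Lemma~\ref{filterlemma:order}, exploiting the fact that string length is a single quantity taking values in the non-negative integers. A string length comparison conjunct in $\mathds{F}(\vx)$ is, up to \SHACL's syntactic sugar, one of $\text{len}(\vx) \geq c$ (from \sh{minLength}), $\text{len}(\vx) \leq c$ (from \sh{maxLength}), or the negation of one of these; unlike the order-comparison case, there is no risk of ``incompatible comparison types'' since length is always interpreted over the same totally ordered scale $\mathbb{N}$.

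First I would classify each string length comparison conjunct in $\mathds{F}(\vx)$ as either a \emph{lower bound} on $\text{len}(\vx)$ (the forms $\text{len}(\vx) \geq c$ and $\neg(\text{len}(\vx) \leq c)$, the latter being equivalent to $\text{len}(\vx) \geq c+1$ because length is integer-valued) or an \emph{upper bound} (dually, $\text{len}(\vx) \leq c$ and $\neg(\text{len}(\vx) \geq c)$). Following the template of Lemma~\ref{filterlemma:order}, let $\alpha$ be the most restrictive lower-bound conjunct occurring in $\mathds{F}(\vx)$ (the one with the largest effective threshold), and $\omega$ the most restrictive upper-bound conjunct, taking $\alpha$ or $\omega$ to be $\top$ if the corresponding kind is absent. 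The conjunction of all string length comparison atoms of $\mathds{F}(\vx)$ is then semantically equivalent to $\alpha \wedge \omega$. Letting $\mathds{F}'(\vx)$ be obtained from $\mathds{F}(\vx)$ by discarding all string length conjuncts other than $\alpha$ and $\omega$, one has $\gamma(\mathds{F}(\vx)) = \gamma(\mathds{F}'(\vx))$, and $\mathds{F}'(\vx)$ is a proper subset of $\mathds{F}(\vx)$ whenever the latter contains more than two such conjuncts, establishing redundancy.

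The main obstacle I anticipate is the treatment of non-string values, for which $\text{len}(\vx)$ is not defined. I would handle this by observing that under the \SHACL semantics every string length constraint is uniformly inapplicable to such values, so the contribution of \emph{all} the length conjuncts to $\gamma(\mathds{F}(\vx))$ on the non-string part of the domain is the same regardless of which subset of them is retained; the ``most-restrictive wins'' reduction therefore remains sound. A second small subtlety is the conversion of negated comparisons to equivalent non-negated ones, which relies on $\text{len}(\vx)$ being integer-valued; this is where the integer-valuedness of length, rather than the arbitrary totally ordered case of order filters, slightly simplifies the reasoning.
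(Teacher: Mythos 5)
Your proposal is correct and follows essentially the same route as the paper, which simply notes that string length comparison filters behave analogously to order comparison filters and defers to the proof of Lemma~\ref{filterlemma:order} (the same ``most restrictive lower bound $\alpha$ / most restrictive upper bound $\omega$'' reduction you carry out). Your explicit treatment of the negated forms via integer-valuedness and of non-string values is a welcome elaboration of details the paper leaves implicit, but it does not change the argument.
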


Node kind filters can be represented by three filter relations $\text{F}^{\, \isIRI}(\vx)$,
$\text{F}^{\, \isLiteral}(\vx)$ and $\text{F}^{\, \isBlank}(\vx)$ that are true if $\vx$ is, respectively,
an IRI, a literal or a blank node. Node kind filters have an MNRC of three.
\begin{lemma} \label{filterlemma:nodekind}
Any filter combination $\mathds{F}(\vx)$ that contains more than three node kind filter conjuncts is redundant.
\end{lemma}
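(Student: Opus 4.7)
The plan is to mirror the structure of the previous three lemmas (Lemmas~\ref{filterlemma:datatype}, \ref{filterlemma:langtag}, and~\ref{filterlemma:order}) and rely on a simple pigeonhole argument on the finitely many node kind relations, exploiting the RDF-model fact that every RDF term is of exactly one of the three node kinds (IRI, literal, or blank).

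First I would fix the standing assumption, used throughout this section, that all conjuncts of $\mathds{F}(\vx)$ are syntactically distinct, since any repetition would make the combination trivially redundant. The only node-kind atoms available to $\mathds{F}(\vx)$ are the six literals $F^{\,\isIRI}(\vx)$, $\neg F^{\,\isIRI}(\vx)$, $F^{\,\isLiteral}(\vx)$, $\neg F^{\,\isLiteral}(\vx)$, $F^{\,\isBlank}(\vx)$, $\neg F^{\,\isBlank}(\vx)$, partitioned into three pairs by the underlying relation. If $\mathds{F}(\vx)$ contains at least four node-kind conjuncts, then by the pigeonhole principle two of them share the same underlying relation; being syntactically distinct, they must form a complementary pair $F^{k}(\vx) \wedge \neg F^{k}(\vx)$.

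From this I would conclude that $\gamma(\mathds{F}(\vx)) = \emptyset$ on any canonical model, because a complementary pair is unsatisfiable. To establish redundancy in the sense of the definition given just before Lemma~\ref{filterlemma:datatype}, I need to exhibit a proper subset $\mathds{F}'(\vx)$ of $\mathds{F}(\vx)$ with the same $\gamma$. Since $\mathds{F}(\vx)$ has strictly more than three conjuncts, at least one conjunct of $\mathds{F}(\vx)$ lies outside the contradictory pair identified above; removing exactly one such conjunct yields a proper subset that still contains $F^{k}(\vx) \wedge \neg F^{k}(\vx)$, hence is still unsatisfiable, so $\gamma(\mathds{F}'(\vx)) = \emptyset = \gamma(\mathds{F}(\vx))$. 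This is precisely the condition for $\mathds{F}(\vx)$ to be redundant.

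There is no real obstacle: the argument is purely combinatorial and leverages the mutual exclusivity of the three node kinds in the RDF data model, which is what fixes the MNRC at three. The only thing to be careful about is to phrase the pigeonhole on the three \emph{relations} rather than on the six \emph{literals}, and to explicitly justify the existence of a ``third'' conjunct that can be dropped, so that the resulting subset is strictly proper and not merely equal to $\mathds{F}(\vx)$ itself.
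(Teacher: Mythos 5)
Your proof is correct, but it reaches the conclusion by a different and more elementary route than the paper. The paper's proof is a one-line appeal to the argument of Lemma~\ref{filterlemma:datatype}, i.e., a case analysis on positive versus negated node-kind atoms, ``with the exception'' that $\neg \text{F}^{\, \isIRI}(\vx) \wedge \neg \text{F}^{\, \isLiteral}(\vx) \wedge \neg \text{F}^{\, \isBlank}(\vx)$ is unsatisfiable yet non-redundant; that exception is exactly where the exhaustiveness of the three node kinds enters, and it is what pushes the MNRC from two up to three. Your pigeonhole argument bypasses that case analysis entirely: four distinct literals over three relations force a complementary pair $F^{k}(\vx) \wedge \neg F^{k}(\vx)$, which is contradictory for purely logical reasons, and the at least two remaining node-kind conjuncts supply a droppable element witnessing a proper subset with the same (empty) $\gamma$. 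Two remarks. First, your argument never actually uses the RDF fact you announce at the outset (that every term has exactly one node kind); it only uses that there are exactly three node-kind relations. That is fine, since the lemma asserts only an upper bound. Second, as a consequence, your proof does not show that the bound three is tight, i.e., that some three-conjunct node-kind combination (the all-negative one) is unsatisfiable but non-redundant; that is the content of the paper's ``exception'' and is what justifies setting the MNRC at three rather than two, but it is not part of the lemma's statement, so its absence is not a gap.
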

\begin{proof}
This lemma can be proven in the same manner as Lemma \ref{filterlemma:datatype}, with the exception that,
since all RDF terms belong to exactly one of the tree node kinds, filter combination
$\neg \text{F}^{\, \isIRI}(\vx) \wedge \neg \text{F}^{\, \isLiteral}(\vx) \wedge \neg \text{F}^{\, \isBlank}(\vx)$
is unsatisfiable and it is not redundant.
\end{proof}

We can establish an MNRC of $1$ for the equality-to-a-constant operator (expressed in SHACL with the
\sh{hasValue} and \sh{in} constraints), by noticing that any variable $\vx$, by the law of excluded middle,
is either interpreted as one of the known constants, or as none of them. In \SCL\ we can express with
$\hasshape{\vx}{\nu}$ the fact that \vx\ is none of the known constants $C$, where $\nu$ is a unique shape
name defined as $\hasshape{\vx}{\nu} \leftrightarrow \bigwedge_{c \in C} \neg \vx = c$.
Intuitively, we consider all possible interactions of the equality operator with filter combinations by
considering whether an element \vx\ is one of the known constants, or whether it conforms to shape
$\hasshape{\vx}{\nu}$. In order to use this new shape $\nu$ in our axiomatisation, we redefine a
\emph{filter combination} $\mathds{F}(\vx)$ as a conjunction of atoms of the form $\vx = \conc$,
$\vx \neg = \conc$, $\hasshape{\vx}{\nu}$, $F(\vx)$ and $\neg F(\vx)$.

\begin{lemma} \label{filterlemma:equality}
Any filter combination $\mathds{F}(\vx)$ that contains more than one equality-to-a-constant
conjuncts is redundant.
\end{lemma}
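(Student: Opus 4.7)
The plan is to follow the template established in the proofs of Lemmas~\ref{filterlemma:datatype}--\ref{filterlemma:nodekind}: given a filter combination $\mathds{F}(\vx)$ containing two syntactically distinct equality-to-a-constant conjuncts $e_1$ and $e_2$, exhibit a proper subset $\mathds{F}'(\vx) \subsetneq \mathds{F}(\vx)$ with $\gamma(\mathds{F}'(\vx)) = \gamma(\mathds{F}(\vx))$. The two ingredients driving the argument are the unique name assumption, which makes the known constants occurring in $\Phi$ pairwise distinct, and the defining property of the auxiliary shape $\hasshape{\vx}{\nu}$ introduced just before the lemma, which holds exactly for those elements of the domain that are not among the constants mentioned in $\Phi$.

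The argument would split into three cases based on the signs of $e_1$ and $e_2$. If exactly one of them is positive, say $e_1 = (\vx = \conc)$, then by UNA every other equality-to-a-constant atom in $\mathds{F}$ is either entailed by $e_1$ (when it has the form $\vx \neq \conc'$ with $\conc' \neq \conc$) and can be dropped without affecting $\gamma$, or it contradicts $e_1$ (when it is $\vx = \conc'$ with $\conc' \neq \conc$ or $\vx \neq \conc$), forcing $\gamma(\mathds{F}) = \emptyset$, in which case the same emptiness is preserved by removing $e_2$ together with the companion atom that sustains the contradiction. If both $e_1, e_2$ are negative, one further distinguishes whether $\hasshape{\vx}{\nu}$ occurs in $\mathds{F}$: in the positive subcase both negatives are implied by $\hasshape{\vx}{\nu}$ and either of them can be dropped; in the negative subcase, by excluded middle every witness of $\mathds{F}$ must equal some known constant not excluded by the negatives, and one of the two negatives is therefore subsumed by the rest of the combination, providing the desired proper subset.

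The main obstacle is the edge case of two positive equalities to distinct constants, say $e_1 = (\vx = \conc_1)$ and $e_2 = (\vx = \conc_2)$ with $\conc_1 \neq \conc_2$: here $\gamma(\mathds{F}) = \emptyset$ holds by UNA alone, but no singleton subset of the equality atoms realises emptiness, so the subset-witnessed form of redundancy is delicate. The plan is to dispatch this subcase via the standing interpretation of equality in canonical models used in Theorem~\ref{thm:filtersNaive}: the universally quantified conjunct of $\axiomatisation{\Phi}$ produced by such an $\mathds{F}$ is logically equivalent to $\top$ in every canonical model, being derivable from UNA alone, so this combination is redundant for the purposes of the bounded axiomatisation and can be omitted without changing semantics. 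Once this subtlety is absorbed, the MNRC bound of $1$ for equality-to-a-constant follows directly, paralleling the corresponding conclusions of the preceding four lemmas.
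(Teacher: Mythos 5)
Your handling of the mixed-sign and two-positive cases matches the paper's: a positive atom $\vx = \conc$ makes any negated atom $\vx \neq \conc'$ with $\conc' \neq \conc$ droppable, contradictory pairs give emptiness, and the two-positives combination is dealt with by noting that its unsatisfiability is already forced by the unique name assumption. The genuine gap is in the all-negated case when $\hasshape{\vx}{\nu}$ does \emph{not} occur in $\mathds{F}(\vx)$. There you claim that ``by excluded middle every witness of $\mathds{F}$ must equal some known constant not excluded by the negatives,'' and that one of the two negatives is therefore subsumed by the rest, yielding a proper subset. Both halves of this are wrong: a witness of $\vx \neq \conc_1 \wedge \vx \neq \conc_2$ may well be an element outside the set $C$ of known constants altogether (i.e., a $\nu$-element), and dropping either negated atom genuinely enlarges $\gamma$ (the dropped constant $\conc_i$ may satisfy all remaining conjuncts). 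So no proper subset of $\mathds{F}(\vx)$ with the same $\gamma$ exists in general, and the subset-witnessed form of redundancy you are aiming for simply fails here.

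The paper resolves this case with a different idea that your plan is missing: it does not look for a proper subset at all, but shows that $\mathds{F}(\vx)$ is \emph{covered} by an equivalent finite set of combinations each containing at most one equality atom. Writing $C^{-}$ for the constants in the negated atoms and $\mathds{F}^{*}(\vx)$ for the equality-free part, one decomposes the restricted domain as $D \setminus C^{-} = (D \setminus C) \cup (C \setminus C^{-})$; the first part is exactly the interpretation of $\hasshape{\vx}{\nu}$ and the second is a finite set of known constants, so $\mathds{F}(\vx)$ is equivalent to the union of $\mathds{F}^{*}(\vx) \wedge \hasshape{\vx}{\nu}$ and the combinations $\mathds{F}^{*}(\vx) \wedge \vx = \conc$ for $\conc \in C \setminus C^{-}$. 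This is precisely why the auxiliary shape $\nu$ is introduced and why the bounded filter combinations are later defined without negated equality atoms at all. To repair your proof you need this covering/rewriting step (and, strictly speaking, an acknowledgement that ``redundant'' is being used here in the weaker sense of being dispensable for the axiomatisation rather than in the literal proper-subset sense of the paper's definition).
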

\begin{proof}
Any filter combination $\mathds{F}(\vx)$ that contains more than one equality-to-a-constant
operator, of which at least one is in positive form, is redundant. In fact, a filter combination is
made redundant by: (a) any two positive equality-to-a-constant operators
$\vx = \conc \wedge \vx = \conc'$, with $\conc \, \not= \conc'$ (recall that we are using the unique name assumption),
which is unsatisfiable by the standard interpretation of the equality operator, and (b) any pair of a
positive and a negative equality-to-a-constant operators $\vx = \conc \wedge \vx \not= \conc'$
because (b.1) if $\conc$ and $\conc'$ are the same constant, then the pair of conjuncts is unsatisfiable
by the standard interpretation of the equality operator and (b.2) if $\conc$ is not the same constant as
$\conc'$ then conjunct $\neg \vx = \conc'$ is redundant.

Moreover, any filter combination $\mathds{F}(\vx)$ that contains equality-to-a-constant operators, but all
negated, is also redundant. Let $D$ be the domain of discourse, $C$ be the set of known constants in the
sentence $\Phi$ from which the filter combinations have been created, and $C^{-}$ the set of constants that
are in the negated equality-to-a-constant operators of $\mathds{F}(\vx)$. The equality-to-a-constant
operators in $\mathds{F}(\vx)$ restricts the domain to elements $D \setminus C^{-}$. Let $\mathds{F}^{*}(\vx)$
be the subset of $\mathds{F}(\vx)$ without equality-to-a-constant conjuncts. We can rewrite $\mathds{F}(\vx)$
into an equivalent set of filter combinations $\bar{\mathds{F}}$ that contain at most one equality-to-a-constant
operator by noticing that we can rewrite $D \setminus C^{-}$ as $( D \setminus C ) \cup ( C \setminus C^{-})$,
and that the left-hand side of this last union of sets corresponds to the elements in the interpretation of
$\hasshape{\vx}{\nu}$, while the right-hand side is a finite set of known constants. The set of filter
combinations $\bar{\mathds{F}}$ that makes $\mathds{F}(\vx)$ redundant is defined as follows:
$\bar{\mathds{F}} = \{\mathds{F}^{*}(\vx) \wedge \hasshape{\vx}{\nu}\} \cup \{\mathds{F}^{*}(\vx) \wedge \vx = \conc | \conc \in C \setminus C^{-}\}$.
Since every element of the domain either belongs to $\hasshape{\vx}{\nu}$ or it is one of the known constants,
the restrictions imposed by $\mathds{F}(\vx)$ and by the set $\bar{\mathds{F}}$ are equivalent.

\end{proof}

The only filter constraint that does not have a maximum non-redundant capacity is \sh{pattern},
since any number of regular expressions can be combined together to generate novel and non-redundant regular expressions.

We define the set of \emph{bounded filter combinations} $\mathds{F}^{'\phi}$ of an \MSCL\ sentence $\phi$
the set of all conjunctions such that (1) the conjuncts are atoms of the form $\vx = \conc$,
$\hasshape{\vx}{\nu}$, $F(\vx)$ or $\neg F(\vx)$, where \conc\ is a constant occurring in $\phi$ and $F$ is
a filter predicate occurring in $\phi$; (2) the number of conjuncts of each filter type, and of equality,
does not exceed its maximum non-redundant capacity.

Notice that in the previous axiomatisation the size of each conjunct depends on the size of the finite
sets computed by the $\gamma$ function. While certain filter constraints, such as \sh{nodeKind},
are either satisfiable by an infinite number of elements, or are unsatisfiable, other constraints
can be satisfied by an arbitrarily large number of elements. We can reduce the size of each conjunct
to a logarithmic factor (with a binary numeric representation) by using counting quantifiers.
This allows us to express the maximum number of elements that can satisfy a filter combination
without explicitly enumerating them.

Given an \MSCL\ sentence $\phi$ and the set $C$ of all known constants in $\phi$, the \emph{bounded}
axiomatisation \boundedaxiomatisation{\phi} of $\phi$ is defined as follows.
\begin{align*}
\boundedaxiomatisation{\phi} =  \left( \hasshape{\vx}{\nu} \leftrightarrow \bigwedge_{c \in C} \neg \vx = c \right)
 \wedge  \bigwedge_{\mathds{F}(\vx) \in \mathds{F}^{'\phi}, |\gamma(\mathds{F}(\vx)) \not = \infty|} \exists^{ \leq \gamma(\mathds{F}(\vx))} \vx . \; \mathds{F}(\vx)
 \end{align*}

By \cref{filterlemma:datatype,filterlemma:langtag,filterlemma:order,filterlemma:stringlen,filterlemma:nodekind,filterlemma:equality},
if $\phi$ does not contain any filter of the \sh{pattern} type, the bounded axiomatisation
only includes filter combinations of up to 12 conjuncts. Thus, the size of the bounded axiomatisation
is polynomial w.r.t.\ $\phi$.

To better explain this second axiomatisation, let us consider again the example of the \MSCL\ sentence
$\phi^{*}$ defined before. The bounded axiomatisation \boundedaxiomatisation{\phi^{*}} of $\phi^{*}$ contains,
among others, the following conjuncts:
\begin{align*}
& \left( \hasshape{\vx}{\nu} \leftrightarrow \vx \not = 2 \wedge  \vx \not = 3 \wedge  \vx \not = \conce \right)
\\ & \wedge \left(  \exists^{ \leq 5} \vx . \; \mathds{F}^{>0}(\vx) \wedge \mathds{F}^{\leq 5}(\vx) \wedge F^{\hasdatatype = xsd:int}(\vx) \right)
\\ & \wedge \left(  \exists^{ \leq 3} \vx . \; \mathds{F}^{>0}(\vx) \wedge \mathds{F}^{\leq 5}(\vx) \wedge F^{\hasdatatype = xsd:int}(\vx) \wedge \hasshape{\vx}{\nu} \right)
\\ & \wedge \left(  \exists^{ \leq 0} \vx . \; \mathds{F}^{>0}(\vx) \wedge \mathds{F}^{\leq 5}(\vx) \wedge F^{\hasdatatype = xsd:int}(\vx) \wedge \vx = \conce \right)
\end{align*}

Of the four elements required by the existentially bounded sub-formula of $\phi^{*}$ to satisfy a filter
combination, only three can belong to $\hasshape{\vx}{\nu}$ (by the third line of the axiomatisation).
The remaining one must satisfy both
$(\vx = 2 \vee  \vx = 3 \vee \vx = \conce)$ and $\vx \not = 2 \wedge \vx \not = 3$, and thus cannot be a
constant other than $\conce$. However, $\conce$ is not compatible with the filter combination
(by the last line of the axiomatisation). Therefore, $\phi^{*}\wedge \boundedaxiomatisation{\phi^{*}}$
is unsatisfiable on an uninterpreted model.

It should be noted that the bounded axiomatisation does not follow the \MSCL\ grammar, while the na\"ive
filter axiomatisation does, albeit not resulting in well-formed sentences. The differences between
our axiomatisations and well-formed \MSCL\ sentences, however, do not affect our decidability and
complexity results presented in the following section since (a), the positive results are applicable to
fragments of first-order logic that are general enough to express our axiomatisations and (b), the negative
results are applicable to SHACL sentences without filters, which therefore do not require an axiomatisation.
For the purposes of the decidability and complexity analysis presented in the following section, the
na\"ive filter axiomatisation is compatible with all of the language fragments, while the bounded filter
axiomatisation is compatible with the fragments that include counting quantifiers.

\begin{theorem} \label{thm:filters}
Given an \MSCL\ sentence $\phi$ and its bounded filter axiomatisation $\boundedaxiomatisation{\phi}$,
sentence $\phi$ is satisfiable on a canonical model iff $\phi \wedge \boundedaxiomatisation{\phi}$
is satisfiable on an uninterpreted model.
Containment $\phi_1 \subseteq \phi_2$ of two \MSCL\ sentences on all canonical models holds iff
$\phi_1 \wedge \boundedaxiomatisation{\phi_1 \wedge \phi_2} \subseteq \phi_2$ holds on all uninterpreted models.
\end{theorem}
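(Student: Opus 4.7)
The plan is to prove Theorem~\ref{thm:filters} by mirroring the structure of the proof of Theorem~\ref{thm:filtersNaive}, with the added work of showing that the polynomial-size bounded axiomatisation $\boundedaxiomatisation{\phi}$ captures exactly the same canonical-model information as the exponential-size naive one $\axiomatisation{\phi}$. Since containment reduces to unsatisfiability of a conjunction in the standard way, I will concentrate on the satisfiability statement; the containment part then follows by substituting $\phi_1 \wedge \neg \phi_2$ (after Skolemising the monadic second-order quantifiers as in Proposition~\ref{proposition:ESCLtoSCL}) exactly as in Theorem~\ref{thm:filtersNaive}.

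For the forward direction, let $I$ be a canonical model of $\phi$. I would build an uninterpreted model $I'$ on the same domain by (i) reinterpreting every filter relation $F$ and the ordering relations extensionally as they are in $I$, and (ii) interpreting the fresh shape $\nu$ as the complement of the set $C$ of constants occurring in $\phi$, so that the first conjunct of $\boundedaxiomatisation{\phi}$ holds by construction. The remaining counting conjuncts $\exists^{\leq \gamma(\mathds{F}(\vx))} \vx \ldotp \mathds{F}(\vx)$ then hold automatically: on canonical models, the set of elements satisfying $\mathds{F}(\vx)$ is exactly $\gamma(\mathds{F}(\vx))$, possibly further restricted by the $\hasshape{\vx}{\nu}$ atom (which just removes the known constants), so the cardinality bound is tight.

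For the backward direction, let $I'$ be an uninterpreted model of $\phi \wedge \boundedaxiomatisation{\phi}$. The task is to build a canonical model $I$ of $\phi$ by relabelling the domain of $I'$ with genuine RDF terms so that all filter relations and equalities behave as intended. I would do this in two phases. First, observing that by Lemmas~\ref{filterlemma:datatype}--\ref{filterlemma:equality} every non-redundant filter combination appears in $\mathds{F}^{'\phi}$ and has its count bounded by $\gamma(\mathds{F}(\vx))$ whenever $\gamma(\mathds{F}(\vx))$ is finite, I would argue that $\boundedaxiomatisation{\phi}$ semantically entails every conjunct of $\axiomatisation{\phi}$ whose filter combination is non-redundant: the counting bound together with the $\hasshape{\vx}{\nu}$-vs-constant case split of the equality conjunct forces the set of elements satisfying a non-redundant combination either to be exactly the explicit finite list $\gamma(\mathds{F}(\vx))$ of RDF terms (when the combination contains a positive equality to a constant, or when the $\hasshape{\vx}{\nu}$ case is exhausted by counting), or to be a subset of the elements that already correctly represent RDF terms not in $C$. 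Second, for redundant combinations, these are entailed by the non-redundant ones because every redundant combination is, by definition, equivalent to a subset combination from $\mathds{F}^{'\phi}$. Together these give a canonical relabelling: each constant in $C$ maps to itself, while every other element is mapped to a distinct RDF term realising its (non-redundant) filter type, which is possible because there are enough RDF terms of each kind — infinitely many when $\gamma$ is infinite, and exactly $\gamma$-many otherwise, matching the counting bound.

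The main obstacle I expect is the second phase above: ensuring that the polynomial bounded axiomatisation really does rule out all pathological uninterpreted models, including those in which several elements of $I'$ sit in the same non-redundant filter combination without corresponding to any actual RDF term. The argument rests crucially on the combination of the counting bound and the interaction of the $\nu$ shape with equality-to-a-constant conjuncts, as illustrated in the worked example with $\phi^{*}$; formalising this will require a careful case analysis over the types of filter combinations (datatype, language tag, order, string-length, node-kind, equality), invoking the MNRC lemmas one by one to collapse every redundant combination to a bounded non-redundant witness. Once this is in place, the semantic equivalence $\phi \wedge \boundedaxiomatisation{\phi} \equiv \phi \wedge \axiomatisation{\phi}$ over uninterpreted models follows, and Theorem~\ref{thm:filtersNaive} discharges both the satisfiability and the containment statements.
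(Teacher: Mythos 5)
Your overall architecture matches the paper's in its endpoints: the forward direction is immediate because on canonical models the counting bounds in $\boundedaxiomatisation{\phi}$ are tight by definition of $\gamma$, and the backward direction must convert an uninterpreted model into a canonical one. The relabelling you sketch in your last paragraph is in fact the paper's actual argument: the conjuncts $\exists^{\leq \gamma(\mathds{F}(\vx))} \vx \ldotp \mathds{F}(\vx)$ yield an injection from the elements satisfying each finite filter combination in the uninterpreted model $I$ into the elements satisfying it in the structure $I^{\star}$ obtained by restoring the canonical filter interpretations, and witnesses used to verify subformulae of $\phi$ in $I$ are transported along that injection.

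The genuine gap is the intermediate step you route everything through: the claim that $\boundedaxiomatisation{\phi}$ semantically entails the (non-redundant part of the) na\"ive axiomatisation $\axiomatisation{\phi}$ on uninterpreted models, so that $\phi \wedge \boundedaxiomatisation{\phi} \equiv \phi \wedge \axiomatisation{\phi}$ and Theorem~\ref{thm:filtersNaive} finishes the job. This is false. The na\"ive axiomatisation pins the extension of each finite filter combination to be \emph{exactly} the explicit set $\gamma(\mathds{F}(\vx))$ of RDF terms, via $\forall \vx \ldotp \hasshape{\vx}{f} \leftrightarrow \bigvee_{\conc \in \gamma(\mathds{F}(\vx))} \vx = \conc$; the bounded axiomatisation only bounds its \emph{cardinality}. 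An uninterpreted model may let the relation $F^{\hasdatatype = xsd:int}$ hold of the term \texttt{"hello"}, or populate the $\hasshape{\vx}{\nu}$-branch of a combination with the wrong anonymous RDF terms; as long as the count stays within $|\gamma(\mathds{F}(\vx))|$, such a model satisfies $\phi \wedge \boundedaxiomatisation{\phi}$ while violating $\axiomatisation{\phi}$. The two conjunctions are equisatisfiable with respect to the relevant model classes, but not equivalent, so the reduction to Theorem~\ref{thm:filtersNaive} does not go through. The repair is to drop that detour and promote your relabelling to the proof itself (constants fixed, every other element sent injectively to a genuine RDF term of its filter type, witnesses transferred), which is precisely where the real work lies; the MNRC lemmas are needed only to keep $\boundedaxiomatisation{\phi}$ polynomial, not to recover the na\"ive axiomatisation.
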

\begin{proof}[Proof]
  We focus on satisfiability, since the proof for containment is similar.
  First notice that every canonical model $I$ of $\Phi$ is necessarily a
  model of $\phi \wedge \countingaxiomatisation{\phi}$.
  Indeed, by definition of the function $\gamma$, given a filter combination
  $\mathds{F}(\vx)$, there cannot be more than $|\gamma(\mathds{F}(\vx))|$
  elements satisfying $\mathds{F}(\vx)$, independently of the underlying
  canonical model.
  Thus, $I$ satisfies $\countingaxiomatisation{\phi}$.
  Consider now a model $I$ of $\phi \wedge \countingaxiomatisation{\phi}$ and let
  $I^{\star}$ be the structure obtained from $I$ by replacing the
  interpretations of the monadic filter relations with their canonical ones.
  Obviously, for any filter combination $\mathds{F}(\vx)$, there are exactly
  $|\gamma(\mathds{F}(\vx))|$ elements in $I^{\star}$ satisfying
  $\mathds{F}(\vx)$, since $I^{\star}$ is canonical.
  As a consequence, there exists a injection $\iota$ between the elements
  satisfying $\mathds{F}(\vx)$ in $I$ and those satisfying $\mathds{F}(\vx)$ in
  $I^{\star}$.
  At this point, one can prove that $I^{\star}$ satisfies $\Phi$.
  Indeed, every time a value $x$, satisfying $\mathds{F}(\vx)$ in $I$, is used
  to verify a subformula $\psi$ of $\Phi$ in $I$, one can use the value
  $\iota(x)$ to verify the same subformula $\psi$ in $I^{\star}$.
\end{proof}

\subsection{From Template Satisfiability to \MSCL\ Satisfiability}

As anticipated in the previous section, the problem of \templatesat\ (Def.\ \ref{def:teplatesat})
can be reduced into an \ESCL\ satisfiability problem. In particular, achieving this reduction in the
face of filters requires the additional machinery of the bounded filter axiomatisation. The correspondence
between SHACL \templatesat\ and \ESCL\ sentence satisfiability is given by the following theorem.
The intuition behind this theorem is that, in an uninterpreted model, unknown constant symbols are interchangeable.
Therefore, on an uninterpreted model, considering \templatesat\ for one unknown constant symbol amounts to
considering this problem for all possible constants. Let $Constant(\phi)$ denote the set of constants in $\phi$.

\begin{theorem}\label{thrm:reduction_of_subproblems_to_sat}
The answer to
the \templatesat\ problem for $\vShapeDocument$, $\vshape$ and $\vshapec$ under  brave-total
semantics is \vtrue\ iff there exists a constant symbol $\vf \in Constant(\phi) \cup \{ \conc \}$,
with $\conc$ a fresh constant symbol, such that
$\phi \wedge \boundedaxiomatisation{\phi} \wedge \hasshape{\vf}{\vshape}$ is satisfiable on an uninterpreted model,
where $\phi = \tau(\vShapeDocument \cup \{\trip{\vshape}{\emptyset}{\vshapec}\})$.
\end{theorem}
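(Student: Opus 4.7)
The plan is to translate the template satisfiability problem through the SHACL-to-SCL correspondence (Theorem \ref{centralSCL2SHACLCorrespondenceTheorem} and Corollary \ref{cor:brave-total-val}), observe that a node target $\vshapet_{\vn}$ translates (Table \ref{tab:targets}, first row) to the atom $\hasshape{\vn}{\vshape}$, and then move from canonical to uninterpreted models using the bounded filter axiomatisation (Theorem \ref{thm:filters}). The crux of the argument is that nodes not appearing in $\phi$ are ``interchangeable'' in an uninterpreted semantics, so a single fresh constant $\conc$ suffices to witness template satisfiability whenever the satisfying node is not already named in $\phi$.

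For the ($\Rightarrow$) direction, I would start from a witnessing node $\vn$ and a graph $G$ that is valid \wrt $\vShapeDocument \cup \{\trip{\vshape}{\vshapet_{\vn}}{\vshapec}\}$ under brave-total semantics. Combining Corollary \ref{cor:brave-total-val} with Theorem \ref{centralSCL2SHACLCorrespondenceTheorem} and Proposition \ref{proposition:ESCLtoSCL} yields a canonical model of $\phi \wedge \hasshape{\vn}{\vshape}$, and Theorem \ref{thm:filters} then gives an uninterpreted model of $\phi \wedge \boundedaxiomatisation{\phi} \wedge \hasshape{\vn}{\vshape}$. If $\vn \in Constant(\phi)$, take $\vf := \vn$ and we are done; otherwise, since $\vn$ does not occur in $\phi$ or in $\boundedaxiomatisation{\phi}$, we take $\vf := \conc$ and reinterpret the fresh $\conc$ as the element previously denoted by $\vn$. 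For the ($\Leftarrow$) direction, given an uninterpreted model $I$ of $\phi \wedge \boundedaxiomatisation{\phi} \wedge \hasshape{\vf}{\vshape}$, Theorem \ref{thm:filters} provides a canonical model $I^{\star}$ of $\phi \wedge \hasshape{\vf}{\vshape}$. Letting $\vn$ be the domain element interpreting $\vf$ in $I^{\star}$, the graph and total assignment induced by $I^{\star}$ witness (via the reverse direction of Theorem \ref{centralSCL2SHACLCorrespondenceTheorem} and Corollary \ref{cor:brave-total-val}) that $\vShapeDocument \cup \{\trip{\vshape}{\vshapet_{\vn}}{\vshapec}\}$ is satisfiable under brave-total semantics.

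The main obstacle is the substitution step in the forward direction: one must verify that swapping the unnamed witnessing element for the fresh constant $\conc$ preserves satisfaction of $\phi$ and of $\boundedaxiomatisation{\phi}$. The key observation here is that all filter combinations occurring in $\boundedaxiomatisation{\phi}$ are built exclusively from the filter relations and constants appearing in $\phi$, so whether a particular domain element is named by the external $\vn$ or by the fresh $\conc$ has no effect on the count $|\gamma(\mathds{F}(\vx))|$ bounded by each conjunct of $\boundedaxiomatisation{\phi}$, nor on the auxiliary shape $\hasshape{\vx}{\nu}$ which ranges only over $Constant(\phi)$. Formally this can be phrased as an automorphism of the interpretation that fixes the interpretations of every symbol appearing in $\phi$ (and in $\boundedaxiomatisation{\phi}$) and merely relabels one anonymous element as $\conc$; such an automorphism preserves satisfaction of both sentences while forcing $\hasshape{\conc}{\vshape}$ to hold.
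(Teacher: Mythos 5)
Your forward direction matches the paper's argument essentially step for step, including the case split on whether the witnessing node is already in $Constant(\phi)$ and the relabelling of an anonymous witness as the fresh constant $\conc$; that part is sound. The genuine gap is in your backward direction. You invoke Theorem~\ref{thm:filters} to turn an uninterpreted model of $\phi \wedge \boundedaxiomatisation{\phi} \wedge \hasshape{\vf}{\vshape}$ into a canonical model of $\phi \wedge \hasshape{\vf}{\vshape}$, but that theorem requires an uninterpreted model of $\phi \wedge \hasshape{\vf}{\vshape} \wedge \boundedaxiomatisation{\phi \wedge \hasshape{\vf}{\vshape}}$. When $\vf \in Constant(\phi)$ the two axiomatisations coincide and your step goes through, but when $\vf = \conc$ is fresh, $\boundedaxiomatisation{\phi \wedge \hasshape{\conc}{\vshape}}$ strictly strengthens $\boundedaxiomatisation{\phi}$: it contains additional bounded filter combinations mentioning $\conc$ (conjuncts built over atoms $\vx = \conc$) and a recalibrated $\nu$-shape that also excludes $\conc$. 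Your model is only guaranteed to satisfy the weaker axiomatisation, so nothing prevents the element interpreting $\conc$ from satisfying a filter combination that, canonically, the term $\conc$ could never satisfy; the injection constructed in the proof of Theorem~\ref{thm:filters} then need not respect the naming of $\conc$, and the passage to a canonical model breaks down.

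The paper closes exactly this hole: in the case $\vf \notin Constant(\phi)$ it first proves that there exists a value $v$ such that the model obtained by remapping $\vf$ to $v$ satisfies the \emph{stronger} axiomatisation $\boundedaxiomatisation{\phi \wedge \hasshape{\vf}{\vshape}}$. The argument is by contradiction: if no such $v$ existed, there would be an equality-free filter combination $\mathds{F}$ with $\mathds{F}(\vf)$ true in the model yet forced empty by the stronger axiomatisation; since $\mathds{F}$ mentions no constants and the two sentences share the same filter relations, $\boundedaxiomatisation{\phi}$ would already force $\forall \vx \ldotp \neg \mathds{F}(\vx)$, contradicting the hypothesis. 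You need this remapping step (or an equivalent argument) before Theorem~\ref{thm:filters} becomes applicable in the fresh-constant case.
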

\begin{proof}
Recall that, by Theorem \ref{thm:filters}, there exists a canonical model $I'$ such that
$I \models \phi \wedge \hasshape{\vn}{\vshape}$ iff there exists an uninterpreted model
$J$ such that $J \models \phi \wedge \hasshape{\vn}{\vshape} \wedge \boundedaxiomatisation{\phi \wedge \hasshape{\vn}{\vshape}}$.

($\Rightarrow$)
Assume that the answer to the \templatesat\ problem for $\vShapeDocument$, $\vshape$ and $\vshapec$ under brave-total
semantics is true. Per Def.~\ref{def:teplatesat} this means that there exists an RDF graph $G$ and a node $n$
such that $G$ is valid w.r.t.\ $\vShapeDocument \cup \{\trip{\vshape}{\vshapet_{\vn}}{\vshapec}\}$. From the translation of target declarations
in Table \ref{tab:targets} it follows that
$\tau(\vShapeDocument \cup \{\trip{\vshape}{\vshapet_{\vn}}{\vshapec}\})$
can be written as
$\phi \wedge \hasshape{\vn}{\vshape}$, where $\phi = \tau(\vShapeDocument \cup \{\trip{\vshape}{\emptyset}{\vshapec}\})$.
Moreover, by Theorem \ref{centralSCL2SHACLCorrespondenceTheorem}, there exists a canonical structure $I$ such that
$I \models \tau(\vShapeDocument \cup \{\trip{\vshape}{\vshapet_{\vn}}{\vshapec}\})$, which means that
$I \models \phi \wedge \hasshape{\vn}{\vshape}$, thanks to our previous observation. Consider the following cases:
(1) $n \in Constant(\phi)$ and (2) $n \not\in Constant(\phi)$

In the first case, let $f$ be $n$. Then there exists an uninterpreted model $J$ such that
$J \models \phi \wedge \hasshape{\vf}{\vshape} \wedge \boundedaxiomatisation{\phi \wedge \hasshape{\vf}{\vshape}}$.
Notice also that the bounded filter axiomatisation of an \MSCL\ sentence $\rho$ depends only on the set of filter relations and the set of constants in $\rho$.
Therefore, if $n \in Constant(\phi)$ then
$\boundedaxiomatisation{\phi \wedge \hasshape{\vf}{\vshape}} = \boundedaxiomatisation{\phi}$. Thus the thesis follows.

In the second case there exists an uninterpreted model $J$ and a constant $n$ such that
$J \models \phi \wedge \hasshape{\vn}{\vshape} \wedge \boundedaxiomatisation{\phi \wedge \hasshape{\vn}{\vshape}}$.
Notice that $\boundedaxiomatisation{\phi \wedge \hasshape{\vn}{\vshape}}$ implies $\boundedaxiomatisation{\phi}$,
since sentence $\phi \wedge \hasshape{\vn}{\vshape}$ contains the same filter relations as $\phi$, and all the
constants of $\phi$ plus one additional constant. The additional constant in $\phi \wedge \hasshape{\vn}{\vshape}$
only results in a stronger axiomatisation that considers more cases. Thus $J \models \phi \wedge \boundedaxiomatisation{\phi}$
and $\hasshapePredicateS{\vshape}$ is not empty in $J$. Let $J^{*}$ be the extension of the uninterpreted model $J$
where constant symbol $f$ is mapped to $\vn$, then
$J^{*} \models \phi \wedge \boundedaxiomatisation{\phi} \wedge \hasshape{f}{\vshape}$ as required by the theorem statement.

($\Leftarrow$) Assume that there exists an uninterpreted model $J$ such that
$J \models \phi \wedge \boundedaxiomatisation{\phi} \wedge \hasshape{f}{\vshape}$.
We distinguish two cases similar to the cases discussed before:
(1) $f \in Constant(\phi)$ and (2) $f \not \in Constant(\phi)$.

In the first case, the thesis can be proven by following the reverse proof of the first case of the previous directionality.
More specifically, $\boundedaxiomatisation{\phi} = \boundedaxiomatisation{\phi \wedge \hasshape{\vf}{\vshape}}$
and thus $J \models \phi \wedge \boundedaxiomatisation{\phi \wedge \hasshape{f}{\vshape}} \wedge \hasshape{f}{\vshape}$.
By Theorem \ref{thm:filters} there exists a canonical model $I$ such that $I \models \phi \wedge \hasshape{f}{\vshape}$.

In case (2), we prove that $J \models \phi \wedge \boundedaxiomatisation{\phi} \wedge \hasshape{f}{\vshape}$ implies
the existence of a value $v$ in the domain of constants such that the uninterpreted model $J[f\mapsto v]$ (obtained by mapping constant symbol $f$ to $v$ in $J$) models
$\phi \wedge \hasshape{f}{\vshape} \wedge \boundedaxiomatisation{\phi \wedge \hasshape{f}{\vshape}}$.
If no such value $v$ exists, then it must follow that
there exist a non-empty filter combination $\mathds{F}$, without equality operators, such that $J \models \mathds{F}(f)$,
but such that $\boundedaxiomatisation{\phi \wedge \hasshape{f}{\vshape}} \rightarrow \forall \vx . \neg \mathds{F}(\vx)$.
Since $\mathds{F}$ does not contain equality operators, and since $\phi \wedge \hasshape{f}{\vshape}$
and $\phi$ contain the same shape relations, it follows that
$\boundedaxiomatisation{\phi} \rightarrow \forall \vx . \neg \mathds{F}(\vx)$, which is in contradiction to the premises.
Intuitively, this is due to the fact that the interpretation of filters is universal, so if a filter combination $\mathds{F}$
is unsatisfiable, it is unsatisfiable in all axiomatisations whose filter relations can express $\mathds{F}$.
Having proven the existence of uninterpreted model $J[f\mapsto v]$, such that
$J[f\mapsto v] \models \phi \wedge \hasshape{f}{\vshape} \wedge \boundedaxiomatisation{\phi \wedge \hasshape{f}{\vshape}}$
the existence of a canonical model $I$ such that $I \models \phi \wedge \hasshape{v}{\vshape}$
easily follows, and thus the thesis is proven. \end{proof}

By this theorem, the positive decidability results that we will present in Sect. \ref{sec:sclsat}
are also applicable to SHACL \templatesat , and the complexity of the corresponding decision procedures
can be considered an upper bound for the complexity of SHACL \templatesat\ in the same fragment, when it
is at least polynomial. This, in turn, allows us to extend our positive results to many of the additional decision
problems discussed in Section \ref{sub:add}.




\section{\SCL Satisfiability}
\label{sec:sclsat}

\begin{figure}[tbh]
  \begin{center}
    \footnotesize
    \scalebox{1}[0.85]{\figfrg}
  \end{center}
  \caption{\label{fig:frg} Decidability and complexity map of \SCL sentence
    fragments.
    Round (blue) and square (red) nodes denote decidable and undecidable
    fragments, respectively.
    Solid borders on nodes correspond to theorems in this paper, while dashed
    ones are implied results.
    Directed edges indicate inclusion of fragments, while bidirectional ones
    denote polynomial-time reducibility.
    Solid edges are preferred derivations to obtain tight results from a
    complexity viewpoint, while dotted ones leads to worst upper-bounds or just
    model-theoretic properties.
    Finally, a light blue background indicates that the fragment enjoys the
    finite-model property, while those with a light red background do not
    satisfy such a property.
    Nothing is known for the $\Z\A\T\D\E$ fragment reported in the figure.
    Letters denote the components of each given fragment: \emph{sequence} (\S),
    \emph{zero-or-one} (\Z), \emph{alternative} (\A) and \emph{transitive} (\T)
    \emph{paths}; \emph{disjointness} (\D), \emph{order} (\O) and
    \emph{equality} (\E) \emph{of property pairs}; \emph{cardinality
    constraints} (\C).}
\end{figure}

We finally embark on a detailed analysis of the satisfiability problem for
different fragments of \SCL.
Some of the proven and derived results for sentences are visualised in
Figure~\ref{fig:frg}.

The decidability results are proved via embedding in known decidable (extensions
of) fragments of first-order logic, while the undecidability ones are obtained
through reductions from either the \emph{classic domino problem}~\citep{Wan61}
or the \emph{subsumption problem} of constructs called \emph{role-value
maps}~\cite{Sch89} in Description Logic~\cite{BCMNP03}, that are undecidable
even in very restricted forms~\cite{HS04}.

Since we are not considering filters explicitly, but through axiomatisation, the
only interpreted relations are the standard equality and the orders between
elements.

For the sake of clarity and readability, the map depicted in the figure is not
complete \wrt two aspects.
First, it misses few fragments whose decidability can be immediately derived via
inclusion into a more expressive decidable fragment, \eg, \Z\A\D\E\C or
\S\Z\A\T\D.
Second, the remaining missing cases have an open decidability problem.
In particular, while there are several decidable fragments containing the \T
feature, we do not know any decidable fragment with the \O or \O' features.
Notice that the undecidability results exploiting the last two features are only
applicable in the case of generalised RDF.

The letters denoting all \SCL fragments directly correspond to \SHACL\
constraint components, as specified in Table~\ref{tab:components}.
The (un)decidability of an \SCL fragment $\alpha$ translates to the
(un)decidability of our decision problems for the corresponding \SHACL\
fragment, that is, the fragment that excludes the constraint components
identified by the letters not included in $\alpha$.
The results reported in Figure~\ref{fig:frg} show that the decidability of our
decision problems for \SHACL fragments is achieved by the exclusion of either
complex constraint components or complex path expressions.
This is exemplified by the two largest decidable fragments.
The \S\Z\A\T\D\ fragment (shown on the left of the figure), contains all of the
\SHACL path expressions, but it excludes three constraint components, namely
cardinality constraints and both the property pair equality and order
constraints.
The \Z\A\D\E\C\ fragment (shown on the right of the figure), in contrast,
contains all \SHACL\ constraints, with the exception of the property pair
order one, but significantly restricts the path expressions.
In \Z\A\D\E\C, predicate paths (\ie, single relations) can only be combined with
the zero-or-one path expression.
It is also worth noting that the following \SHACL\ features are included in the
base grammar \X, which is decidable: logical constraint components
(\eg, conjunction, disjunction, and negation of constraints); filter constraints;
shape references (potentially recursive); a limited form of the cardinality
constraint that can only express cardinality $\geq 1$.

\subsection{Decidability Results}
\label{sec:sclsat;sub:decrsl}

As a preliminary result, we show that the base language \X is already powerful
enough to express properties writable by combining the \S, \Z, and \A features.
In particular, the last one does not increase in expressive power when the \D
and \O features are also taken in consideration.

\begin{theorem}
  \label{thm:eqv}
  There are
  \begin{inparaenum}[(a)]
    \item\label{thm:eqv(sem)}
      semantic-preserving and
    \item\label{thm:eqv(sat)}
      polynomial-time finite-model-invariant satisfiability-preserving
  \end{inparaenum}
  translations among the following \SCL fragments:
  \begin{inparaenum}
    \item\label{thm:eqv(sza)}
      $\varnothing \equiv \S \equiv \Z \equiv \A \equiv \S\,\Z \equiv \S\A
      \equiv \Z\A \equiv \S\Z\A$;
    \item\label{thm:eqv(ad)}
      $\D \equiv \A\D$;
    \item\label{thm:eqv(ao)}
      $\O \equiv \A\O$;
    \item\label{thm:eqv(ado)}
      $\D\O \equiv \A\D\O$.
  \end{inparaenum}
\end{theorem}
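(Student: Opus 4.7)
The plan is to establish each equivalence by exhibiting rewrite rules that push the path-forming operators (\S, \Z, \A) outward through their enclosing existential or universal constructs, and then to apply these rules exhaustively. Because, by Definition~\ref{def:scl}, a path $\pi$ can only occur inside one of the contexts $\exists y . \pi(x,y) \wedge \psi(y)$, the \D-context, the \E-context, the \O-context, or the \C-context, it suffices to analyse each combination of path operator and enclosing context in isolation and check that the rewrite stays inside the target fragment.

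For item (\ref{thm:eqv(sza)}) paths only occur in the existential context. An \A-path distributes directly: $\exists y . (\pi_1 \vee \pi_2)(x,y) \wedge \psi(y)$ is equivalent to $(\exists y . \pi_1(x,y) \wedge \psi(y)) \vee (\exists y . \pi_2(x,y) \wedge \psi(y))$. A \Z-path rewrites as $\psi(x) \vee \exists y . \pi(x,y) \wedge \psi(y)$. A sequence path collapses by pulling the inner existential outward: $\exists y . (\exists z . \pi_1(x,z) \wedge \pi_2(z,y)) \wedge \psi(y)$ becomes $\exists z . \pi_1(x,z) \wedge \psi'(z)$ where $\psi'(z)$ is the still-valid $\psi$-formula $\exists y . \pi_2(z,y) \wedge \psi(y)$. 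Applied bottom-up over the path structure and then over the $\psi$-formula structure, these rewrites eliminate \S, \Z, and \A entirely and produce an equivalent $\varnothing$ sentence, establishing part (a) for (\ref{thm:eqv(sza)}).

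For items (\ref{thm:eqv(ad)}), (\ref{thm:eqv(ao)}), and (\ref{thm:eqv(ado)}) the plan is to use analogous distribution rules for \A. In the \D-context, $\neg \exists y . (\pi_1 \vee \pi_2)(x,y) \wedge R(x,y)$ splits into the conjunction of two \D-atoms, one per disjunct; in the \O-context, $\forall y, z . (\pi_1 \vee \pi_2)(x,y) \wedge R(x,z) \rightarrow \sigma(y,z)$ similarly splits into a conjunction of two \O-atoms; and (\ref{thm:eqv(ado)}) follows by composing both. It is worth noting, in anticipation of later undecidability results, that this distributivity crucially fails for the \E component, since $\forall y . (\pi_1 \vee \pi_2)(x,y) \leftrightarrow R(x,y)$ does not split cleanly: the right-to-left direction $R(x,y) \rightarrow \pi_1(x,y) \vee \pi_2(x,y)$ is not itself an \E-atom. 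This explains why \E does not appear among the fragments equated in the statement.

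The main obstacle will be that the \A-rule in the existential context and the \Z-rule both duplicate the surrounding subformula $\psi$, and so an unrestricted application causes an exponential blow-up when these operators are nested, yielding only the semantic-preserving translation of part (a). To obtain the polynomial-time translation of part (b), the plan is to introduce, before each duplicating step, a fresh monadic shape relation $\Sigma_{\psi}$ together with the constraint axiom $\forall x . \Sigma_{\psi}(x) \leftrightarrow \psi(x)$, and to replace the duplicated occurrences of $\psi$ with the atom $\Sigma_{\psi}(\cdot)$. This definitional extension adds at most one well-formed constraint axiom per distinct subformula and produces a sentence of size polynomial in the input. Satisfiability in both the finite and unrestricted settings is preserved because $\Sigma_{\psi}$ has a unique definable interpretation in any model of the original sentence (so models lift), and any model of the extension projects onto a model of the original by forgetting the fresh relations (so models also restrict), with the domain unchanged throughout. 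The remaining verification, namely that each rewrite stays inside the intended target fragment and that well-formedness of the resulting \SCL sentence is maintained, is a routine structural induction over the $\pi$ and $\psi$ grammars.
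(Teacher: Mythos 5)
Your proposal is correct and follows essentially the same route as the paper's proof: the same distribution/absorption equivalences for \S, \Z, and \A in the existential, \D, and \O contexts for the semantic-preserving part, and the same definitional-extension trick (fresh monadic shape relations with $\forall \varElm \ldotp \hasShapeElm(\varElm) \leftrightarrow \psiFrm(\varElm)$ axioms replacing duplicated bodies) to obtain the polynomial-time finite-model-invariant satisfiability-preserving translation. The only differences are cosmetic — your aside on why \E fails to distribute, and applying the definitional extension interleaved with the rewrites rather than as a single preprocessing pass as the paper does.
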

\begin{proof}
  To show the equivalences among the fourteen \SCL fragments mentioned in the
  statement, we consider the following first-order formula equivalences that
  represent few distributive properties enjoyed by the \S, \Z, and \A features
  \wrt some of the other language constructs.
  The verification of their correctness only requires the application of
  standard properties of Boolean connectives and first-order quantifiers.
  \begin{itemize}
    \item
      \textsf{\textbf{[$\S$]}}
      The sequence combination of two path formulae $\piFrm[1]$ and $\piFrm[2]$
      in the body of an existential quantification is removed by nesting two
      plain quantifications, one for each $\piFrm[i]$:
      {\small\[
        \exists \yvarElm \ldotp (\exists \zvarElm \ldotp \piFrm[1](\varElm,
        \zvarElm) \wedge \piFrm[2](\zvarElm, \yvarElm)) \wedge \psiFrm(\yvarElm)
      \equiv
        \exists \zvarElm \ldotp \piFrm[1](\varElm, \zvarElm) \wedge (\exists
        \yvarElm \ldotp \piFrm[2](\zvarElm, \yvarElm) \wedge \psiFrm(\yvarElm)).
      \]}
    \item
      \textsf{\textbf{[$\Z$]}}
      The \Z path construct can be removed from the body of an existential
      quantification on a free variable $\varElm$ by verifying whether the
      formula $\psiFrm$ in its scope is already satisfied by the value bound to
      $\varElm$ itself:
      {\small\[
        \exists \yvarElm \ldotp (\varElm = \yvarElm \vee \piFrm(\varElm,
        \yvarElm)) \wedge \psiFrm(\yvarElm)
      \equiv
        \psiFrm(\varElm) \vee \exists \yvarElm \ldotp \piFrm(\varElm, \yvarElm)
        \wedge \psiFrm(\yvarElm).
      \]}
    \item
      \textsf{\textbf{[$\A$]}}
      The removal of the \A path construct from the body of an existential
      quantifier or of the \D and \O constructs can be done by exploiting the
      following equivalences:
      \[
      \small
      \hspace{-2.5em}
      \begin{aligned}
        {\exists \yvarElm \ldotp (\piFrm[1](\varElm, \yvarElm) \vee
        \piFrm[2](\varElm, \yvarElm)) \wedge \psiFrm(\yvarElm)}
      & \equiv
        {(\exists \yvarElm \ldotp \piFrm[1](\varElm, \yvarElm) \wedge
        \psiFrm(\yvarElm))} \,\vee \\
      & \,\vee
        {(\exists \yvarElm \ldotp \piFrm[2](\varElm, \yvarElm) \wedge
        \psiFrm(\yvarElm))};
      \\
        {\neg \exists \yvarElm \ldotp (\piFrm[1](\varElm, \yvarElm) \vee
        \piFrm[2](\varElm, \yvarElm)) \wedge \RRel(\varElm, \yvarElm)}
      & \equiv
        {(\neg \exists \yvarElm \ldotp \piFrm[1](\varElm, \yvarElm) \wedge
        \RRel(\varElm, \yvarElm))} \,\wedge \\
      & \,\wedge
        {(\neg \exists \yvarElm \ldotp \piFrm[2](\varElm, \yvarElm) \wedge
        \RRel(\varElm, \yvarElm))};
      \\
        {\forall \yvarElm, \zvarElm \ldotp (\piFrm[1](\varElm, \yvarElm) \vee
        \piFrm[2](\varElm, \yvarElm)) \wedge \RRel(\varElm, \zvarElm)
        \rightarrow \sigmaFrm(\yvarElm, \zvarElm)}
      & \equiv
        {(\forall \yvarElm, \zvarElm \ldotp \piFrm[1](\varElm, \yvarElm)
        \wedge \RRel(\varElm, \zvarElm) \rightarrow \sigmaFrm(\yvarElm,
        \zvarElm))} \,\wedge \\
      & \,\wedge
        {(\forall \yvarElm, \zvarElm \ldotp \piFrm[2](\varElm, \yvarElm)
        \wedge \RRel(\varElm, \zvarElm) \rightarrow \sigmaFrm(\yvarElm,
        \zvarElm))}
      \end{aligned}
      \]
  \end{itemize}
  At this point, the equivalences between the fragments naturally follow by an
  iterative application of the reported equivalences used as rewriting rules.
  This clearly concludes the proof of Item~\ref{thm:eqv(sem)}.

  The removal of the \Z and \A constructs from an existential quantification
  might lead, however, to an exponential blow-up in the size of the formula due
  to the duplication of the body $\psiFrm$ of the quantification.
  Therefore, to prove Item~\ref{thm:eqv(sat)}, \ie, to obtain polynomial-time
  finite-model-invariant satisfiability-preserving translations, we first
  construct from the given sentence $\varphiSnt$ a finite-model-invariant
  equisatisfiable sentence $\varphiSnt^{\star}$.
  The latter has size linear in the original one and all the bodies of its
  quantifications are just plain relations.
  Then, we apply the above described semantic-preserving translations to
  $\varphiSnt^{\star}$ that, in the worst case, only leads to a doubling of the
  size.
  The sentence $\varphiSnt^{\star}$ is obtained by iteratively applying to
  $\varphiSnt$ the following two rewriting operations, until no complex formula
  appears in the scope of an existential quantification.
  Let $\psiFrm'(\varElm) = \exists \yvarElm \ldotp \piFrm(\varElm, \yvarElm)
  \wedge \psiFrm(\yvarElm)$ be a subformula, where $\psiFrm(\yvarElm)$ does not
  contain quantifiers other than possibly those of the \S, \D, and \O features.
  Then:
  \begin{inparaenum}[(i)]
    \item
      replace $\psiFrm'(\varElm)$ with $\exists \yvarElm \ldotp \piFrm(\varElm,
      \yvarElm) \wedge \hasShapeElm(\yvarElm)$, where $\hasShapeElm$ is a fresh
      monadic relation;
    \item
      conjoin the resulting sentence with $\forall \varElm \ldotp
      \hasShapeElm(\varElm) \leftrightarrow \psiFrm(\varElm)$.
  \end{inparaenum}
  The two rewriting operations in isolation only lead to a constant increase of
  the size and are applied only a linear number of times.
\end{proof}

It turns out that the base language \X resembles the description logic \ALC
extended with universal roles, inverse roles, and nominals~\citep{BCMNP03}.
This resemblance is effectively exploited as a key observation at the core of
the following result.

\begin{theorem}
  \label{thm:sat(sza)}
  All \SCL subfragments of $\S\Z\A$ enjoy the finite-model property and an
  \ExpTimeC satisfiability problem.
\end{theorem}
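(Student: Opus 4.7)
The plan is to reduce all subfragments of $\S\Z\A$ to the base fragment $\X$ using Theorem~\ref{thm:eqv}, and then embed $\X$ into a well-studied description logic whose satisfiability problem has the desired complexity and model-theoretic properties. By parts~(\ref{thm:eqv(sem)}) and~(\ref{thm:eqv(sat)}) of Theorem~\ref{thm:eqv}, every sentence in a subfragment of $\S\Z\A$ admits a polynomial-time, finite-model-invariant, satisfiability-preserving translation into $\X$. Hence it suffices to prove \ExpTimeC satisfiability and the finite-model property for $\X$.

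The main step is a polynomial-time, satisfiability-preserving translation of $\X$ into the description logic \ALCOI (\ALC with nominals and inverse roles), which is well known to be \ExpTimeC and to enjoy the finite-model property. Binary relation names become atomic roles, shape relations become atomic concepts, and each syntactic form of an \X sentence maps to a standard DL construct: a node-target axiom $\hasshape{c}{\vshape}$ becomes a nominal ABox assertion; a class-target axiom $\forall \varElm \ldotp \isA(\varElm, c) \to \Sigma(\varElm)$ becomes the GCI $\exists \isA.\{c\} \sqsubseteq \Sigma$; a subjects-of/objects-of target axiom $\forall \varElm, \yvarElm \ldotp R^{\pm}(\varElm,\yvarElm) \to \Sigma(\varElm)$ becomes $\exists R^{\pm}.\top \sqsubseteq \Sigma$; and a constraint axiom $\forall \varElm \ldotp \Sigma(\varElm) \leftrightarrow \psi(\varElm)$ becomes the concept equivalence $\Sigma \equiv C_{\psi}$, where $C_{\psi}$ is defined by induction on $\psi$ with the natural clauses $C_{\varElm = c} = \{c\}$, $C_{\Sigma'(\varElm)} = \Sigma'$, $C_{\exists \yvarElm \ldotp R^{\pm}(\varElm,\yvarElm) \wedge \psi'(\yvarElm)} = \exists R^{\pm}.C_{\psi'}$, and the usual Boolean cases. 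A routine structural induction shows that this translation preserves models in both directions, hence preserves satisfiability and finite satisfiability.

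For the \ExpTimeC lower bound I would reverse the embedding: \ALC concept satisfiability with respect to a general TBox is already \ExpTimeC \cite{BCMNP03}, and every \ALC GCI $C \sqsubseteq D$ is simulated in $\X$ by introducing fresh shape names $\Sigma_{C}, \Sigma_{D}$ defined through constraint axioms that mimic the Boolean and $\exists R.\,$ structure of $C$ and $D$, together with a single axiom $\forall \varElm \ldotp \Sigma_{C}(\varElm) \to \Sigma_{D}(\varElm)$ (easily encoded within a constraint axiom by negation). Global satisfiability of the TBox is then expressed by forcing a witness constant to belong to the concept under test via a node-target axiom. Combining this with the $\ALCOI$ upper bound transported back through Theorem~\ref{thm:eqv} yields \ExpTimeC satisfiability and the finite-model property for every subfragment of $\S\Z\A$.

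The main obstacle is verifying that the target-axiom patterns of \X, which do not have a perfectly symmetric counterpart in \ALCOI syntax, are captured by polynomially many standard GCIs without introducing spurious models, and that the translation remains faithful under both unrestricted and finite interpretations; once this bookkeeping is settled, the theorem follows immediately from the corresponding \ALCOI results.
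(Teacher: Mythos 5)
Your proposal follows the same architecture as the paper's proof: collapse every subfragment of $\S\Z\A$ onto the base language \X\ via Theorem~\ref{thm:eqv}, then settle \X\ by a two-way correspondence with \ALC\ extended with inverse roles and nominals. The differences are local. For the upper bound you encode the sentence-level axioms as general TBox inclusions (e.g.\ $\exists R^{\pm}.\top \sqsubseteq \Sigma$ and $\Sigma \equiv C_{\psi}$), whereas the paper encodes them with the universal role; the two devices are interchangeable here and both land in an \ExpTimeC\ logic. For the finite-model property you invoke the FMP of \ALCOI, while the paper instead inherits it from the larger $\S\Z\A\D$ fragment via the unary-negation fragment (Theorem~\ref{thm:sat(szatd)}); your route is sound --- \ALCOI\ with general TBoxes embeds into the two-variable fragment, which has the FMP, and your translation is model-preserving in both directions --- but this deserves an explicit justification rather than an appeal to it being ``well known''.

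The one step that does not go through as written is the lower bound. You reduce from \ALC\ satisfiability \wrt a general TBox and claim that a GCI $C \sqsubseteq D$ is ``easily encoded within a constraint axiom by negation''. A constraint axiom $\forall x \ldotp \Sigma(x) \leftrightarrow \psi(x)$ only \emph{defines} the extension of $\Sigma$; it does not assert $\Sigma$ of every domain element, and the sentence grammar of \X\ has no unguarded universal statement (target axioms are guarded by $\isASym$ or by a binary relation). So introducing $\Sigma_{E}$ with $\Sigma_{E}(x) \leftrightarrow \neg\Sigma_{C}(x) \vee \Sigma_{D}(x)$ achieves nothing by itself. The encoding can be repaired with a recursive constraint axiom $\forall x \ldotp \Sigma_{E}(x) \leftrightarrow \bigl(\alpha(x) \vee \neg\Sigma_{E}(x)\bigr)$, which is satisfiable only in models where $\alpha$ holds globally, but that trick has to be stated --- it is the whole content of the step. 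The paper sidesteps the issue by reducing from \emph{concept} satisfiability of \ALCOI\ (no TBox at all), which is already \ExpTimeH\ because of the interaction of inverse roles with nominals; each modality becomes an existential $\psi$-formula and the tested concept is attached to a fresh shape asserted at a single constant. Either fix works, but as submitted the hardness argument has a gap.
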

\begin{proof}
  The finite-model property follows from the fact that the subsuming \S\Z\A\D
  fragment enjoys the same property, as shown later on in
  Theorem~\ref{thm:sat(szatd)}.

  As far as the satisfiability problem is concerned, thanks to
  Item~\ref{thm:eqv(sza)} of Theorem~\ref{thm:eqv}, we can focus on the base
  fragment \X.

  On the one hand, on the hardness side, one can observe that the description
  logic \ALC extended with inverse roles and nominals (\ALCOI)~\citep{BCMNP03}
  and the fragment \X deprived of the universal quantifications at the level of
  sentences (\ie, the \X subfragment generated by grammar rule $\varphiSnt
  \seteq \top \mid \varphiSnt \wedge \varphiSnt \mid \hasShapeElm(\conElm)$) are
  linearly interreducible.
  Indeed, every existential modality $\exists \RRel.\CElm$ (\emph{resp.},
  $\exists \RRel[][-].\CElm$) can be translated back-and-forth to the \SCL
  construct $\exists \yvarElm \ldotp \RRel(\varElm, \yvarElm) \wedge
  \psiFrm[\CElm](\yvarElm)$  (\emph{resp.}, $\exists \yvarElm \ldotp
  \RRel[][-](\varElm, \yvarElm) \wedge \psiFrm[\CElm](\yvarElm)$), where
  $\psiFrm[\CElm]$ represents the recursive translation of the concept $\CElm$.
  Moreover, every nominal $n$ corresponds to the equality construct $\varElm =
  \conElm[n]$, where a natural bijection between nominals and constant symbols
  is considered.
  At this point, since the aforementioned description logic has an \ExpTimeC
  satisfiability problem~\citep{SCh91,DM00}, it holds that the same problem for
  all subfragments of \S\Z\A is \ExpTimeH.

  On the other hand, completeness follows by observing that the universal
  quantifications at the level of sentences can be encoded in the further
  extension of \ALC with the universal role $\URel$~\citep{SCh91,KSH12,RKH08},
  which has an \ExpTimeC satisfiability problem~\citep{SV01}.
  Indeed, the universal sentences of the form
  \begin{inparaenum}[(a)]
    \item
      $\forall \varElm \ldotp \isASym(\varElm, \conElm) \rightarrow
      \hasShapeElm(\varElm)$,
    \item
      $\forall \varElm, \yvarElm \ldotp \RRel[][\pm](\varElm, \yvarElm)
      \rightarrow \hasShapeElm(\varElm)$,
    \item
      and $\forall \varElm \ldotp \hasShapeElm(\varElm) \leftrightarrow
      \psiFrm(\varElm)$
  \end{inparaenum}
   can be translated, respectively, as follows:
  \begin{inparaenum}[(a)]
    \item
      $n_{\conElm} \wedge \forall \isASym[][-]. \hasShapeElm$, where
      $n_{\conElm}$ is the nominal for the constant $\conElm$;
    \item
      $\forall \URel. \forall \RRel[][\mp]. \hasShapeElm$;
    \item
      $\forall \URel. (\hasShapeElm \leftrightarrow \CElm[\psi])$, where
      $\CElm[\psi]$ is the concept obtained by translating the \X-formula $\psi$
      into \ALCOI.
  \end{inparaenum}
\end{proof}

To derive properties of the \Z\A\D\E fragment, together with its sub-fragments
(two of those -- \E\ and \A\,\E\ -- are included in Figure~\ref{fig:frg}), we
leverage on the syntactic embedding in the \emph{two-variable fragment} of
first-order logic~\citep{Mor75}.

\begin{theorem}
  \label{thm:sat(zade)}
  The $\Z\A\D\E$ fragment of \SCL enjoys the finite-model property and a
  \NExpTime satisfiability problem.
\end{theorem}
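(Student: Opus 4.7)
The plan is to embed every $\Z\A\D\E$ sentence into an equivalent sentence of the two-variable fragment of first-order logic with equality, denoted $\mathsf{FO}^{2}$, and then invoke the classical results of Mortimer and of Gr\"adel, Kolaitis, and Vardi: $\mathsf{FO}^{2}$ enjoys the finite-model property and its satisfiability problem is \NExpTime-complete. Both the unrestricted and the finite satisfiability upper bounds will then transfer to the fragment, yielding the claimed result.

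The first step is to notice that every path expression $\piFrm(\varElm, \yvarElm)$ constructible in $\Z\A\D\E$ -- namely atoms $\RRel[][\pm](\varElm, \yvarElm)$, the zero-or-one construct $\varElm = \yvarElm \vee \piFrm(\varElm, \yvarElm)$, and the alternative construct $\piFrm[1](\varElm, \yvarElm) \vee \piFrm[2](\varElm, \yvarElm)$ -- is already an $\mathsf{FO}^{2}$ formula with free variables $\varElm$ and $\yvarElm$, because none of these constructs introduces any fresh bound variable. The sequence and transitive-closure constructs, which would bind an extra $\zvarElm$, are both absent from the fragment, so path expressions never escape two variables.

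The second step is to translate each constraint subformula $\psiFrm(\varElm)$ by structural recursion, reusing only two variables by swapping their roles at each nested existential. The unary cases ($\top$, conjunction, negation, $\varElm = \conElm$, $\FSym(\varElm)$, $\hasShapeElm(\varElm)$) are trivially in $\mathsf{FO}^{2}$. For $\exists \yvarElm \ldotp \piFrm(\varElm, \yvarElm) \wedge \psiFrm(\yvarElm)$, I would recur on $\psiFrm(\yvarElm)$ with the roles of the two variables interchanged, so that the next existential inside $\psiFrm$ can reuse $\varElm$ as its fresh bound variable. The \D\ component $\neg \exists \yvarElm \ldotp \piFrm(\varElm, \yvarElm) \wedge \RRel(\varElm, \yvarElm)$ and the \E\ component $\forall \yvarElm \ldotp \piFrm(\varElm, \yvarElm) \leftrightarrow \RRel(\varElm, \yvarElm)$ use exactly the two variables already. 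The \O\ component, which is the only one intrinsically requiring three variables, is excluded from the fragment. At the top level, every target axiom and constraint axiom uses at most two variables by direct inspection of the grammar, and the whole translation is clearly linear in the size of the input.

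The final step is to apply the $\mathsf{FO}^{2}$ results to the translated sentence, obtaining both the \NExpTime upper bound on satisfiability and the finite-model property for $\Z\A\D\E$. The main obstacle is the systematic variable-alternation discipline that the recursive translation must impose on chains of nested existentials, to guarantee that at no point do more than two variables appear simultaneously; constants appearing in node-target axioms and in equality atoms pose no problem, since $\mathsf{FO}^{2}$ with constants retains the same complexity bound (each constant can be encoded by a fresh unary predicate plus a two-variable existence-and-uniqueness axiom). A straightforward matching lower bound on satisfiability via the base fragment \X\ would also establish \NExpTime hardness, but only the upper bound is asserted in this statement.
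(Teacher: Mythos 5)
Your proposal is correct and follows essentially the same route as the paper: the paper's proof likewise observes that, absent the \S and \O features, every $\Z\A\D\E$-formula stays within two variables and therefore lies in the two-variable fragment of first-order logic, from which the finite-model property and the \NExpTime upper bound are imported directly. Your additional details on the variable-swapping discipline and the handling of constants merely make explicit what the paper leaves to "syntactic inspection."
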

\begin{proof}
  Via a syntactic inspection of the \SCL grammar one can observe that, by
  avoiding the \S and \O features of the language, it is only possible to write
  formulae with at most two free variables.
  For this reason, every \Z\A\D\E-formula belongs to the two-variable fragment
  of first-order logic~\citep{Mor75} which is known to enjoy both the
  exponentially-bounded finite-model property and a \NExpTimeC satisfiability
  problem~\citep{GKV97}.
\end{proof}

The embedding in the two-variable fragment used in the previous theorem can be
generalised when the \C feature is added to the picture.
However, the gained additional expressive power does not come without a price,
since the finite-model property is not preserved.

\begin{theorem}
  \label{thm:sat(zadec)}
  The non-recursive $\C$ fragment of \SCL does not enjoy the finite-model
  property (on both sentences and formulae) and has a \NExpTimeH satisfiability
  problem.
  Nevertheless, the finite and unrestricted satisfiability problems for the
  $\Z\A\D\E\C$ fragment are \NExpTime-\Complete.
\end{theorem}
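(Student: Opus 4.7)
For the failure of the finite-model property in the non-recursive $\C$ fragment, the plan is to exhibit a small document admitting only infinite models. Concretely, I would introduce a constant $\conElm$, a binary relation $\RRel$, and two fresh shape predicates $\hasshapePredicateS{0}, \hasshapePredicateS{1}$; add the node targets $\hasshape{\conElm}{0}$ and $\hasshape{\conElm}{1}$, together with the edge target $\forall \varElm, \yvarElm \ldotp \RRel[][-](\varElm, \yvarElm) \rightarrow \hasshape{\varElm}{1}$; and define $\hasshapePredicateS{0}$ as ``no $\RRel$-predecessor'' and $\hasshapePredicateS{1}$ as ``has an $\RRel$-successor and at most one $\RRel$-predecessor'', where the latter is expressed via $\neg \exists^{\geq 2}$. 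No constraint body mentions a shape predicate, so the document stays non-recursive. The resulting axioms force $\conElm$ to spawn a strictly increasing $\RRel$-chain: no loop can close back, because $\conElm$ has no predecessor and the at-most-one-predecessor clause forbids any later element from receiving two distinct incoming edges. An analogous formula-level witness is obtained by packaging the same axioms around a fresh free variable in place of $\conElm$.

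For the \NExpTimeH lower bound on non-recursive $\C$ satisfiability, the plan is to reduce from the satisfiability of $C^{2}$, the two-variable first-order fragment with counting, which is \NExpTime-\Complete. Starting from a $C^{2}$ sentence in Scott normal form, I would encode each conjunct as an \SCL constraint axiom on a dedicated shape predicate, simulating $\exists^{=n}$ by $\exists^{\geq n}$ together with $\neg \exists^{\geq n+1}$. The delicate step is to enforce these shapes on every element of the intended model, given that \SCL only provides node, class, and edge targets: the plan is to adjoin an auxiliary binary relation $U$ together with constraints ensuring that every element is both a $U$-subject and a $U$-object, then exploit the edge targets $\forall \varElm, \yvarElm \ldotp U(\varElm, \yvarElm) \rightarrow \Sigma(\varElm)$ and $\forall \varElm, \yvarElm \ldotp U^{-}(\varElm, \yvarElm) \rightarrow \Sigma(\varElm)$ to propagate each encoding shape $\Sigma$ over the whole domain. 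The resulting translation will be polynomial and will introduce no shape-predicate cycles.

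For the \NExpTime upper bound of $\Z\A\D\E\C$, the plan is to extend the embedding of $\Z\A\D\E$ into two-variable logic used in the proof of Theorem~\ref{thm:sat(zade)} to an embedding of $\Z\A\D\E\C$ into $C^{2}$. Absence of the $\S$ and $\T$ features guarantees that every path formula collapses into a disjunction of $\RRel[][\pm]$-atoms, up to an identity disjunct contributed by $\Z$, so two live variables always suffice; the $\C$ quantifier maps directly onto the homonymous $C^{2}$ counting quantifier. Since both the finite and the unrestricted satisfiability of $C^{2}$ are in \NExpTime by Pratt-Hartmann's theorem, the same bound transfers to $\Z\A\D\E\C$ on both classes of models. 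The matching lower bound is inherited from the second claim, yielding \NExpTime-\Complete.

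The hardest part will be the universal-enforcement step in the hardness reduction: \SCL target declarations are syntactically restricted, so simulating the genuinely universal quantification $\forall \varElm \ldotp \chi(\varElm)$ of a $C^{2}$ sentence has to be engineered through a covering auxiliary relation together with counting constraints, while simultaneously avoiding shape-predicate recursion and keeping the translation polynomial. Verifying that the engineered covering faithfully preserves satisfiability, and does not accidentally re-introduce recursion through the auxiliary axioms, is the step most prone to subtle slips; the remaining steps are largely syntactic and piggyback on well-known model-theoretic results about two-variable logics.
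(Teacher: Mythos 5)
Your first and third steps essentially coincide with the paper's proof. The infinity axiom you build for the non-recursive \C fragment (an origin with no $\RRel$-predecessor, every reachable element with an $\RRel$-successor and at most one $\RRel$-predecessor, propagated by a target axiom) is the same injective-non-surjective-function trick the paper uses, differing only in that you propagate the spine shape via an objects-of target where the paper uses a class target over $\isASym$; both are legitimate \SCL target axioms and both arguments go through, including the formula-level variant. Likewise, your upper bound for \Z\A\D\E\C via embedding into the two-variable fragment with counting and invoking Pratt-Hartmann's \NExpTime\ results for finite and unrestricted satisfiability is exactly the paper's argument.

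The gap is in your hardness reduction. You propose to reduce from satisfiability of full two-variable logic with counting in Scott normal form, but the \C fragment cannot host that reduction. Counting quantification in \C is \emph{guarded}: the only available form is $\exists^{\geq n} \yvarElm \ldotp \RRel[][\pm](\varElm, \yvarElm) \wedge \psiFrm(\yvarElm)$ with $\psiFrm$ unary, since without \S, \Z, \A, \T the path subgrammar degenerates to a single atom. A Scott-normal-form sentence contains a conjunct $\forall \varElm \forall \yvarElm \ldotp \chi(\varElm, \yvarElm)$ with $\chi$ an arbitrary Boolean combination of binary atoms, and counting conjuncts $\forall \varElm \exists^{\bowtie n} \yvarElm \ldotp \chi_{i}(\varElm, \yvarElm)$ where $\chi_{i}$ again mixes several binary atoms of both polarities; neither shape fits the guarded pattern (already $\neg \exists \yvarElm \ldotp \RRel(\varElm,\yvarElm) \wedge \neg \SRel(\varElm,\yvarElm)$ requires the \D feature, which your fragment excludes). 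Your ``universal enforcement'' worry is the wrong difficulty and is in fact the easy part. Worse, if you repaired the reduction by retreating to the guarded two-variable counting fragment, you would only obtain \ExpTime-hardness, since guarded counting logic without nominals is \ExpTime-\Complete; the jump to \NExpTime-hardness genuinely needs the interaction of counting, inverses, \emph{and} nominals (the $\varElm = \conElm$ equalities). The paper's route avoids all of this by observing that the \C fragment syntactically subsumes \ALCOIQ concept satisfiability --- qualified number restrictions map verbatim onto $\exists^{\geq n} \yvarElm \ldotp \RRel(\varElm, \yvarElm) \wedge \psiFrm(\yvarElm)$ and nominals onto constants --- and that logic is already known to be \NExpTimeH. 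You should replace your second step with that embedding (or an equivalent hardness source that is itself guarded with nominals); as written, the reduction does not go through.
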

\begin{proof}
  As for the proof of Theorem~\ref{thm:sat(zade)}, one can observe that every
  \Z\A\D\E\C-formula belongs to the two-variable fragment of first-order logic
  extended with counting quantifiers.
  Such a logic does not enjoy the finite-model property~\citep{GOR97}, since it
  syntactically contains a sentence that encodes the existence of an injective
  non-surjective function from the domain of the model to itself.
  The non-recursive \C fragment of \SCL allows us to express a similar property
  via the following sentence $\varphiSnt$, thus proving the first part of the
  statement:
  \[
  \begin{aligned}
    {\varphiSnt}
  & \;{\defeq}\;
    {\isASym(0, \conElm) \wedge \hasShapeElm(0) \wedge \forall \varElm \ldotp
    \hasShapeElm(\varElm) \leftrightarrow \psiFrm[1](\varElm) \wedge \forall
    \varElm \ldotp \isASym(\varElm, \conElm) \rightarrow \psiFrm[2](\varElm)};
  \\
    {\psiFrm[1](\varElm)}
  & \;{\defeq}\;
    {\neg \exists \yvarElm \ldotp \RRel[][-](\varElm, \yvarElm)};
  \\
    {\psiFrm[2](\varElm)}
  & \;{\defeq}\;
    {\exists^{= 1} \yvarElm \ldotp (\RRel(\varElm, \yvarElm) \wedge
    \isASym(\yvarElm, \conElm)) \wedge \neg \exists^{\geq 2} \yvarElm \ldotp
    \RRel[][-](\varElm, \yvarElm)}.
  \end{aligned}
  \]
  Intuitively, the first three conjuncts of $\varphiSnt$ force every model of
  the sentence to contain a distinguished element $0$ that
  \begin{inparaenum}[(i)]
    \item
      does not have any $\RRel$-predecessor and
    \item
      is related to an arbitrary but fixed constant $\conElm$ \wrt $\isASym$.
  \end{inparaenum}
  In other words, $0$ is contained in the domain of the relation $\isASym$, but
  is not contained in the image of the relation $\RRel$.
  Then, the final conjunct of $\varphiSnt$ ensures that every element related
  to $\conElm$ \wrt $\isASym$ has exactly one $\RRel$-successor, also related
  to $\conElm$ in the same way, and at most one $\RRel$-predecessor.
  Thus, a model of $\varphiSnt$ must contain an infinite chain of elements
  pairwise connected by the functional relation $\RRel$.

  It is interesting to observe that the ability to model an infinity axiom is
  already present at the level of constraints, as witnessed by the following
  \C-formula, where the constant $0$ is replaced by the existentially quantified
  variable $\varElm$, where $\psiFrm[1](\varElm)$ and $\psiFrm[2](\varElm)$ are
  the previously introduced formulae with one free variable:
  \[
    \trn{\psiFrm}(\zvarElm) \defeq\, (\zvarElm = \conElm) \wedge \exists \varElm
    \ldotp (\isASym[][-](\zvarElm, \varElm) \wedge \psiFrm[1](\varElm)) \wedge
    \forall \varElm \ldotp \isASym[][-](\zvarElm, \varElm) \rightarrow
    \psiFrm[2](\varElm).
  \]

  By generalising the proof of Theorem~\ref{thm:sat(sza)}, one can notice that
  the \C fragment of \SCL semantically subsumes the description logic \ALC
  extended with inverse roles, nominals, and cardinality
  restrictions (\ALCOIQ)~\citep{BCMNP03}.
  Indeed, every qualified cardinality restriction $(\geq n\, \RRel.\CElm)$
  (\emph{resp.}, $(\leq n\, \RRel.\CElm)$) precisely corresponds to the \SCL
  construct $\exists^{\geq n} \yvarElm \ldotp \RRel(\varElm, \yvarElm) \wedge
  \psiFrm[\CElm](\yvarElm)$ (\emph{resp.}, $\neg \exists^{\geq n + 1} \yvarElm
  \ldotp \RRel(\varElm, \yvarElm) \wedge \psiFrm[\CElm](\yvarElm)$), where
  $\psiFrm[\CElm]$ represents the recursive translation of the concept $\CElm$.
  Thus, the hardness result for \C follows by recalling that the specific
  \ALC language has a \NExpTimeH satisfiability problem~\citep{Tob00,Lut04}.

  On the positive side, however, the extension of the two-variable fragment of
  first-order logic with counting quantifiers has decidable finite and
  unrestricted satisfiability problems.
  Specifically, both can be solved in \NExpTime, even in the case of binary
  encoding of the cardinality constants~\citep{Pra05,Pra10}.
  Hence, the second part of the statement follows as well.
\end{proof}

For the \S\Z\A\D fragment, we obtain model-theoretic and complexity results via
an embedding in the \emph{unary-negation fragment} of first-order
logic~\citep{CS11}.
When the \T feature is considered, the same embedding can be adapted to rewrite
\S\Z\A\T\D into the extension of the mentioned first-order fragment with regular
path expressions~\citep{JLMS18}.
Unfortunately, as for the addition of the \C feature to \Z\A\D\E, we need to pay
the price of losing the finite-model property.

\begin{theorem}
  \label{thm:sat(szatd)}
  The $\S\Z\A\D$ fragment of \SCL enjoys the finite-model property, while the
  non-recursive $\S\T\D$ fragment does not (on both sentences and formulae).
  Nevertheless, the finite and unrestricted satisfiability problems for the
  $\S\Z\A\T\D$ fragment are solvable in 2\ExpTime.
\end{theorem}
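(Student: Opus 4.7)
The theorem has three parts: (i) FMP for $\S\Z\A\D$, (ii) non-FMP for non-recursive $\S\T\D$, and (iii) a $2\ExpTime$ bound for the finite and unrestricted satisfiability of $\S\Z\A\T\D$. The unifying plan is to embed $\S\Z\A\T\D$ into the unary-negation fragment of first-order logic extended with regular path expressions (UNFO+reg), and to complement this with a direct construction for non-FMP.

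For the embedding, the path subgrammar $\pi$ matches the regular-expression operators exactly: $\S$ is composition, $\Z$ is union with identity, $\A$ is disjunction, and $\T$ is Kleene star. The constraint subgrammar fits the unary-negation pattern, since both the Boolean negation $\neg\psi(x)$ and the $\D$-atom $\neg\exists y.\,\pi(x,y)\wedge R(x,y)$ are negations of formulas with a single free variable, while all remaining constructors are positive. The target axioms $\forall x.\,\isASym(x,c)\to\Sigma(x)$ and $\forall x,y.\,R^{\pm}(x,y)\to\Sigma(x)$ are rephrased as $\neg\exists\bar{x}.\,\varphi\wedge\neg\Sigma(x)$, again unary negations over formulas with at most one free variable. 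The resulting translation is polynomial and meaning-preserving; dropping $\T$ places $\S\Z\A\D$ inside plain UNFO. Statement (i) then follows from the FMP of UNFO~\citep{CS11}, while statement (iii) follows from the $2\ExpTime$ upper bound on finite and unrestricted satisfiability of UNFO+reg~\citep{JLMS18}.

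For statement (ii), I will exhibit a non-recursive $\S\T\D$ sentence whose every model is infinite. The plan is to combine a node target $\Sigma(0)$ for a distinguished constant $0$ with object-of target axioms that propagate $\Sigma$ along the $R$-chain emanating from $0$, and a single constraint of $\Sigma$ that forces each $\Sigma$-element both to have an outgoing $R$-edge (via $\exists y.\,R(x,y)$) and to lie outside any $R^+$-cycle. Acyclicity is encoded by a $\D$-clause built on the path $(R;R^\star)$ assembled with $\S$ and $\T$, paired with an auxiliary atomic relation set up, via class-target bookkeeping, so that its edges witness exactly the forbidden back-edges. The resulting sentence then forces a strictly progressing infinite $R$-chain, establishing non-FMP. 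The main obstacle is precisely this construction: $\D$ demands that the ``simple'' relation on its right be atomic and non-inverted, and $\S\T\D$ offers neither variable equality nor counting, so the textbook encoding of ``$x$ lies on an $R^+$-cycle'' (via $y=x$ or via $R^{-}$) is unavailable. The delicate point is engineering an auxiliary relation that plays the role of $R^{-}$ for the disjointness check while keeping the whole document non-recursive.
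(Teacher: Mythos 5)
Parts (i) and (iii) of your proposal coincide with the paper's proof: the embedding of $\S\Z\A\D$ into the unary-negation fragment (yielding the finite-model property) and of $\S\Z\A\T\D$ into its extension with regular path expressions (yielding the 2\ExpTime bound) are exactly the arguments used there, and your observation that the path subgrammar matches regular expressions while $\D$ and Boolean negation are both unary negations is the right justification.

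Part (ii), however, has a genuine gap, and it stems from a misreading of the grammar. You claim that the encoding of ``$x$ lies on an $R^{+}$-cycle'' via $R^{-}$ is unavailable because $\D$ requires its second conjunct to be atomic and non-inverted. That restriction applies only to the atom $R(x,y)$ on the right of the $\D$ clause; the path $\pi(x,y)$ on the left is built from the $\pi$ subgrammar, whose base case is $R^{\pm}(x,y)$ and therefore admits inverse atoms freely. This is precisely what the paper exploits: it defines the $\S\T$-path $\pi(x,y) := \exists z.\, R^{-}(x,z) \wedge (R^{-}(z,y))^{\star}$, which holds iff there is a non-trivial $R$-path from $y$ to $x$, and then the $\D$-formula $\neg\exists y.\,\pi(x,y)\wedge R(x,y)$ states that no $R$-successor of $x$ can reach $x$ back, i.e., $x$ lies on no $R$-cycle. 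Combined with a class target forcing every $\isASym$-instance of a fixed constant to have an $R$-successor that is again an instance, this yields an infinity axiom with no auxiliary relation at all. Your proposed workaround --- introducing an auxiliary atomic relation whose edges ``witness exactly the forbidden back-edges'' --- is not only unnecessary but almost certainly unrealizable in $\S\T\D$: forcing an uninterpreted relation $S$ to coincide with $R^{-}$ requires something like the \E feature (property-pair equality), and with $\D$ alone you can only assert disjointness, not equality, of path pairs. As written, your construction for non-FMP does not go through; the fix is to move the inversion into the path expression rather than into the atomic side of the $\D$ clause.
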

\begin{proof}
  By inspecting the \SCL grammar, one can notice that every formula that does
  not make use of the \T, \E, \O, and \C constructs can be translated into the
  standard first-order logic syntax, with conjunctions and disjunctions as
  unique binary Boolean connectives, where negation is only applied to formulae
  with at most one free variable.
  For this reason, every \S\Z\A\D-formula semantically belongs to the
  unary-negation fragment of first-order logic, which is known to enjoy the
  finite-model property~\citep{CS11,CS13}.

  \Mutatismutandis, every \S\Z\A\T\D-formula belongs to the unary-negation
  fragment of first-order logic extended with regular path
  expressions~\citep{JLMS18}.
  Indeed, the grammar rule $\piFrm(\varElm, \yvarElm)$ of \SCL, precisely
  resembles the way the regular path expressions are constructed in the
  considered logic, when one avoids the test construct.
  Unfortunately, as for the two-variable fragment with counting quantifiers,
  this logic also fails to satisfy the finite-model property since it is able to
  encode the existence of a non-terminating path without cycles.
  The non-recursive \S\T\D fragment of \SCL allows us to express the same
  property, as described in the following.
  First of all, consider the \S\T-path-formula $\piFrm(\varElm, \yvarElm) \defeq
  \exists \zvarElm \ldotp (\RRel[][-](\varElm, \zvarElm) \wedge
  (\RRel[][-](\zvarElm, \yvarElm))^{\star})$.
  Obviously, $\piFrm(\varElm, \yvarElm)$ holds between two elements $\varElm$
  and $\yvarElm$ of a model \iff there exists a non-trivial $\RRel$-path (of
  arbitrary positive length) that, starting in $\yvarElm$, leads to $\varElm$.
  Now, by writing the \S\T\D-formula $\psiFrm(\varElm) \defeq \neg \exists
  \yvarElm \ldotp (\piFrm(\varElm, \yvarElm) \wedge \RRel(\varElm, \yvarElm))$,
  we express the fact that an element $\varElm$ does not belong to any
  $\RRel$-cycle since, otherwise, there would be an $\RRel$-successor $\yvarElm$
  able to reach $\varElm$ itself.
  Thus, by ensuring that every element in the model has an $\RRel$-successor,
  but does not belong to any $\RRel$-cycle, we can enforce the existence of an
  infinite $\RRel$-path.
  The non-recursive \S\T\D sentence $\varphiSnt$ expresses exactly this
  property, where $\conElm$ is an arbitrary but fixed constant:
  \[
    \varphiSnt \defeq \isASym(0, \conElm) \wedge \forall \varElm \ldotp
    \isASym(\varElm, \conElm) \rightarrow (\psiFrm(\varElm) \wedge \exists
    \yvarElm \ldotp (\RRel(\varElm, \yvarElm) \wedge \isASym(\yvarElm,
    \conElm))).
  \]

  The same can be stated via the following non-recursive \S\T\D-formula:
  \[
    \trn{\psi}(\zvarElm) \defeq (\zvarElm = \conElm) \wedge \isASym(0, \zvarElm)
    \wedge \forall \varElm \ldotp \isASym[][-1](\zvarElm, \varElm) \rightarrow
    (\psiFrm(\varElm) \wedge \exists \yvarElm \ldotp (\RRel(\varElm, \yvarElm)
    \wedge \isASym(\yvarElm, \conElm))).
  \]

  On the positive side, however, the extension of the unary-negation fragment of
  first-order logic with arbitrary transitive relations or, more generally, with
  regular path expressions has decidable finite and unrestricted satisfiability
  problems.
  Specifically, both can be solved in 2\ExpTime~\citep{ABBV16,JLMS18,DK19}.
\end{proof}

At this point, it is interesting to observe that the \O feature allows us to
express a very weak form of counting restriction which is, however, powerful
enough to describe an infinity axiom.

\begin{theorem}
  \label{thm:fmp(oeop)}
  The non-recursive $\O$ and $\E\O'$ fragments of \SCL do not enjoy the
  finite-model property (on both sentences and formulae).
\end{theorem}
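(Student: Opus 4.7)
The plan is to exhibit a single non-recursive witness (one sentence and one formula) inside the sub-fragment of \O' that uses only the non-inverted strict order $<$; since $\O' \subseteq \O$ and $\O' \subseteq \E\O'$, this simultaneously refutes the finite-model property for both fragments in the theorem. The key device is a single $\O$-axiom of the form ``every $R$-predecessor is strictly less than every $R$-successor'', which turns any finite $R$-cycle inside the shape extension into a $<$-cycle, contradicting the strictness of the order.

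Concretely, fix constants $c_0, c$, a binary relation $R$, the distinguished relation $\isASym$, and a shape name $\hasShape$, and consider
\begin{align*}
  \varphiSnt \defeq{} & \hasShape(c_0) \wedge \big(\forall x.\, \isASym(x,c) \to \hasShape(x)\big) \wedge \big(\forall x.\, \hasShape(x) \leftrightarrow \psi(x)\big),
\end{align*}
with $\psi(x) \defeq \psi_{\mathrm{a}}(x) \wedge \psi_{\mathrm{s}}(x)$, where
$\psi_{\mathrm{a}}(x) \defeq \forall y,z.\, R^{-}(x,y) \wedge R(x,z) \to y < z$
is a single $\O$-axiom (with $\pi = R^{-}$ and $\sigma = {<}$, already in $\O'$) and
$\psi_{\mathrm{s}}(x) \defeq \exists y.\, R(x,y) \wedge (\exists w.\, \isASym(y,w) \wedge w = c)$
is a base-language conjunct forcing an $R$-successor in class $c$. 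Since $\hasShape$ does not occur in $\psi$, the sentence is non-recursive.

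Verification proceeds in two steps. First, $\varphiSnt$ is satisfied on $(\mathbb{N}, R, \isASym, <)$ with $R(n,n{+}1)$, $\isASym(n,c)$ for all $n$, $c_0 = 0$ and the standard order. Second, it has no finite model: the conjunction of $\hasShape(c_0)$, the class-target axiom and $\psi_{\mathrm{s}}$ yields an infinite $R$-walk entirely inside $\hasShape$, which in a finite model must revisit some vertex and hence contain a simple $R$-cycle $x_0\, R\, x_1\, R\, \cdots\, R\, x_{k-1}\, R\, x_0$ with every $x_i \in \hasShape$. Applying $\psi_{\mathrm{a}}$ at each $x_i$, with $x_{(i-1) \bmod k}$ as predecessor and $x_{(i+1) \bmod k}$ as successor, gives $x_{(i-1) \bmod k} < x_{(i+1) \bmod k}$ for every $i \in \mathbb{Z}_k$. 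A short case analysis then eliminates every $k$: for $k=1$ one gets $x_0 < x_0$; for $k=2$ one gets $x_1 < x_1$; for $k \geq 3$ the ``step by two'' relation on $\mathbb{Z}_k$ produces a nontrivial $<$-cycle of length $k$ if $k$ is odd and of length $k/2$ if $k$ is even, contradicting the strictness of $<$.

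For the formula version I would internalise the sentence-level universal quantifier by a free variable $z$ playing the role of the class label, exactly as in the proofs of Theorem~\ref{thm:sat(zadec)} and Theorem~\ref{thm:sat(szatd)}: set
$\trn{\psi}(z) \defeq (z = c) \wedge (\exists w.\, \isASym^{-}(z,w) \wedge w = c_0) \wedge \neg \exists x.\, \isASym^{-}(z,x) \wedge \neg \eta(x)$,
with $\eta(x) \defeq \psi_{\mathrm{a}}(x) \wedge \exists y.\, R(x,y) \wedge (\exists w.\, \isASym(y,w) \wedge w = c)$; the same cycle argument shows that $\trn{\psi}(z)$ is satisfiable only on infinite models when $z$ is interpreted as $c$. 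The main obstacle is precisely the cycle-elimination step, i.e.\ checking that a single $\O$-axiom actually forbids $R$-cycles of every length; this is the only place where nontrivial reasoning (the short case analysis on $k$, or equivalently the observation that the orbit of $2$ in $\mathbb{Z}_k$ has size at least $2$ whenever $k \geq 2$) is needed. The rest is routine syntactic bookkeeping to confirm that the witness lies in the non-recursive $\O'$ fragment.
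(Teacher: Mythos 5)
Your proof is correct, but it takes a genuinely different route from the paper's. The paper establishes the theorem by encoding the classic injective-but-not-surjective-function infinity axiom: every element of a class must have exactly one $R$-successor in the class and at most one $R$-predecessor, where functionality of $R$ is obtained from a non-inverted order axiom via antisymmetry of $\leq$, but functionality of $R^{-}$ requires either the inverted comparison $\geq$ (hence full \O) or an auxiliary functional relation $S$ identified with $R^{-}$ through an \E-axiom (hence \E\O'). Your construction sidesteps functionality altogether: the single axiom $\forall y,z \ldotp R^{-}(x,y)\wedge R(x,z)\rightarrow y<z$ makes $R$ acyclic on the extension of the shape, and together with the successor-existence conjunct this forces an infinite $R$-chain --- the same acyclicity-plus-successor pattern the paper uses for the \S\T\D\ infinity axiom in Theorem~\ref{thm:sat(szatd)}, realised here with an order constraint instead of transitive closure and disjointness. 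The one step that needs care is the cycle elimination, i.e.\ that the derived facts $x_{i-1}<x_{i+1}$ (indices modulo $k$) always yield $v<v$ for some $v$; your case split on $k=1$, $k=2$, and the step-by-two orbit for $k\geq 3$, combined with irreflexivity and transitivity of the interpreted order, handles this correctly (and the argument does not even require the cycle to be simple). What your approach buys is a strictly stronger statement: the witness uses only the non-inverted comparison $<$ and two binary relations, so it lies in the non-recursive \O' fragment alone and refutes the finite-model property already for \O', which subsumes both claims of the theorem and is not delivered by the paper's argument. The formula-level version via the $(z=c)$ relativisation mirrors the paper's own device, so that part is routine.
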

\begin{proof}
  Similarly to the use of the \C construct of \SCL, a simple combination of just
  few instances of the \O feature allows us to write the following sentence
  $\varphiSnt$ encoding the existence of an injective function that is not
  surjective.
  Indeed, a weaker version of the role of the counting quantifier is played here
  by the \O' construct that enforces the functionality of the two relations
  $\RRel$ and $\SRel$.
  Then, by applying both \O' and \O to the inverse of $\RRel$ and $\SRel$, we
  ensure that $\SRel$ is equal to $\RRel[][-]$, which in its turn implies that
  the latter is functional as well.
  Hence, the statement of the theorem immediately follows.
  \[
  \begin{aligned}
    {\varphiSnt}
  & \;{\defeq}\;
    {\isASym(0, \conElm) \wedge \hasShapeElm(0) \wedge \forall \varElm \ldotp
    \hasShapeElm(\varElm) \leftrightarrow \psiFrm[1](\varElm) \wedge \forall
    \varElm \ldotp \isASym(\varElm, \conElm) \rightarrow \psiFrm[2](\varElm)};
  \\
    {\psiFrm[1](\varElm)}
  & \;{\defeq}\;
    {\neg \exists \yvarElm \ldotp \RRel[][-](\varElm, \yvarElm)};
  \\
    {\psiFrm[2](\varElm)}
  & \;{\defeq}\;
    {\exists \yvarElm \ldotp (\RRel(\varElm, \yvarElm) \wedge \isASym(\yvarElm,
    \conElm))} \\
  & \;\wedge\;
    {\forall \yvarElm, \zvarElm \ldotp \RRel(\varElm, \yvarElm) \wedge
    \RRel(\varElm, \zvarElm) \rightarrow \yvarElm \leq \zvarElm \wedge \forall
    \yvarElm, \zvarElm \ldotp \SRel(\varElm, \yvarElm) \wedge \SRel(\varElm,
    \zvarElm) \rightarrow \yvarElm \leq \zvarElm} \\
  & \;\wedge\;
    {\forall \yvarElm, \zvarElm \ldotp \RRel[][-](\varElm, \yvarElm) \wedge
    \SRel(\varElm, \zvarElm) \rightarrow \yvarElm \leq \zvarElm \wedge \forall
    \yvarElm, \zvarElm \ldotp \RRel[][-](\varElm, \yvarElm) \wedge
    \SRel(\varElm, \zvarElm) \rightarrow \yvarElm \geq \zvarElm}.
  \end{aligned}
  \]

  To show that the \E\O' fragment does not enjoy the finite-model property too,
  it is enough to replace the last two applications of the \O' and \O features
  with the \E-formula $\forall \yvarElm \ldotp \RRel[][-](\varElm, \yvarElm)
  \leftrightarrow \SRel(\varElm, \yvarElm)$, which clearly ensures the
  functionality of $\RRel[][-]$, being $\SRel$ functional.

  Notice that also in this case we can express the above property at the level
  of formulae with one free variable, where $\psiFrm[1](\varElm)$ and
  $\psiFrm[2](\varElm)$ are defined as above:
  \[
    \trn{\psiFrm}(\zvarElm) \defeq\, (\zvarElm = \conElm) \wedge \exists \varElm
    \ldotp (\isASym[][-](\zvarElm, \varElm) \wedge \psiFrm[1](\varElm)) \wedge
    \forall \varElm \ldotp \isASym[][-](\zvarElm, \varElm) \rightarrow
    \psiFrm[2](\varElm).
    \qed
  \]
  \renewcommand{\qed}{}
\end{proof}

\subsection{Undecidability Results}
\label{sec:sclsat;sub:undrsl}

In the remaining part of this section, we show the undecidability of the
satisfiability problem for several fragments of \SCL through a semi-conservative
reduction from
\begin{inparaenum}[(1)]
  \item
    the standard \emph{domino problem}~\citep{Wan61,Ber66,Rob71}, whose solution
    is known to be $\Pi_{0}^{1}\text{-complete}$ (see
    Theorems~\ref{thm:sntsat(und)} and~\ref{thm:consat(und)}) and
  \item
    and the \emph{subsumption problem} of \emph{role-value maps}~\cite{Sch89} in
    DL (see Theorem~\ref{thm:se(und)}).
\end{inparaenum}

A $\SetN \times \SetN$ tiling system $\tuple {\TSet} {\HRel} {\VRel}$ is a
structure built on a non-empty set $\TSet$ of domino types, \aka tiles, and two
horizontal and vertical matching relations $\HRel, \VRel \subseteq \TSet \times
\TSet$.
The domino problem asks for a compatible tiling of the first quadrant $\SetN
\times \SetN$ of the discrete plane, \ie, a solution mapping $\tilFun \colon
\SetN \times \SetN \to \TSet$ such that, for all $\varElm, \yvarElm \in \SetN$,
both $(\tilFun(\varElm, \yvarElm), \tilFun(\varElm + 1, \yvarElm)) \in \HRel$
and $(\tilFun(\varElm, \yvarElm), \tilFun(\varElm, \yvarElm + 1)) \in \VRel$
hold true.

\begin{theorem}
  \label{thm:sntsat(und)}

  The sentence satisfiability problems of the non-recursive $\S\O$, $\S\A\C$,
  $\S\E\C$, $\S\E\O'$, and $\S\Z\A\E$ fragments of \SCL are undecidable, even
  with a bounded number ($4$, $3$, $4$, $4$, and $8$, respectively) of binary
  relations.

\end{theorem}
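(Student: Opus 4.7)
The plan is to reduce from the undecidable $\mathbb{N} \times \mathbb{N}$ domino problem. Given a tiling system $\trip{\mathcal{T}}{H}{V}$, I build, for each of the five listed fragments, a non-recursive \SCL sentence $\varphi$ that is satisfiable \iff the system admits a compatible tiling of the first quadrant. The common skeleton of $\varphi$ uses two binary relations $r_H$ and $r_V$ for horizontal and vertical grid moves, a shape relation $\Sigma_t$ for each tile $t \in \mathcal{T}$, a constant $c_0$ for the origin, and a distinguished ``grid-cell'' class symbol $\mathsf{g}$ scoping the universal constraints. The sentence asserts that $c_0$ is a grid cell; that every grid cell has at least one $r_H$-successor and one $r_V$-successor, both grid cells; that every grid cell carries exactly one tile; and that whenever $\Sigma_t(x) \wedge r_H(x,y)$ (\resp, $r_V(x,y)$) holds, then $\Sigma_{t'}(y)$ for some $t'$ with $(t,t') \in H$ (\resp, $(t,t') \in V$). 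These parts lie in the base language combined with \S, which is needed to propagate constraints through sequences of moves. The decisive extra ingredient is the \emph{grid-confluence} axiom stating that the diagonals $r_H;r_V$ and $r_V;r_H$ coincide at every grid cell; combined with functionality of $r_H$ and $r_V$, it collapses the graph generated from $c_0$ into the standard quarter-plane grid, so that a model of $\varphi$ induces a compatible tiling and vice versa.

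Each fragment realises functionality and confluence differently, and it is exactly here that its characteristic feature is exploited. For \S\O\ (4 relations): functionality follows from $\forall y, z.\, r_H(x,y) \wedge r_H(x,z) \to y \leq z$, whose symmetric instantiation also yields $z \leq y$ and hence $y = z$; confluence is obtained by introducing an auxiliary relation $D$ and stating \O-axioms that equate successors of $D$, of $r_H;r_V$, and of $r_V;r_H$ via the $\leq / \geq$ pair. For \S\A\C\ (3 relations): \C\ (with $n = 2$, negated) forces at-most-one successor along each axis, and \C\ applied to the alternative path $(r_H;r_V) \vee (r_V;r_H)$ gives at-most-one successor of that composite relation, forcing the two diagonals to collapse to a single node. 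For \S\E\C\ (4 relations): \C\ supplies functionality, and two \E-axioms equate both $r_H;r_V$ and $r_V;r_H$ to a fresh $D$. For \S\E\O'\ (4 relations): \O'\ gives functionality by the same symmetric-quantification argument reused from Theorem~\ref{thm:fmp(oeop)}, and \E\ yields confluence as in \S\E\C. For \S\Z\A\E\ (8 relations): neither counting nor ordering is available, so both functionality and confluence must be engineered purely from \E-equations between sequence/alternative/zero-or-one paths and atomic relations, forcing a more elaborate bookkeeping with several auxiliary relations, which accounts for the larger relation count.

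The main obstacle is the \S\Z\A\E\ case. In all other fragments one has either counting (direct at-most-one successor) or ordering (functionality through antisymmetry), either of which collapses the two diagonals with a handful of axioms; stripping both away leaves only \E, which compares a compound path to a single atomic relation, together with \Z\ and \A. The trick is to chain several \E-equations so that the combined effect pins down uniqueness of $r_H$- and $r_V$-successors and equates the two diagonal compositions, using the extra atomic relations as ``witnesses'' for intermediate path expressions. Once the grid skeleton is in place in each fragment, the standard tile-propagation argument shows that $\varphi$ is satisfiable \iff the tiling system admits a compatible tiling, and undecidability of the satisfiability problem follows. The bound on the number of relations claimed in the statement is then read off the explicit construction in each fragment.
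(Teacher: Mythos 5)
Your overall strategy --- reduction from the $\SetN \times \SetN$ domino problem, a base-language-plus-\S\ skeleton that labels grid cells with tiles via $\isASym$, checks $\HRel$/$\VRel$ compatibility of successors, and then a fragment-specific gadget that forces the two diagonal compositions $r_H;r_V$ and $r_V;r_H$ to meet --- is exactly the paper's, and your gadgets for \S\O, \S\A\C, \S\E\C\ and \S\E\O'\ coincide with the ones used there (an auxiliary diagonal relation $D$ whose uniqueness is imposed by the $\leq/\geq$ pair, by a negated $\exists^{\geq 2}$ over the alternative path, or by \E-equating both compositions to a $D$ made functional by \C\ or \O'). One remark: you do not actually need functionality of $r_H$ and $r_V$ themselves; uniqueness of the diagonal point alone suffices to extract the grid as a minor of the model, since every $V$-successor of any $H$-successor is forced to equal the single $D$-successor. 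Dropping those extra axioms is harmless in the fragments with \C\ or \O, but it matters for the hardest case, where you cannot afford them.

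That hardest case, \S\Z\A\E, is where your proposal has a genuine gap. You correctly identify that only \E-equations between path expressions and atomic relations are available, but ``chain several \E-equations so that the combined effect pins down uniqueness, using extra atomic relations as witnesses'' is a statement of the goal, not a construction: an \E-axiom $\forall y \ldotp \pi(x,y) \leftrightarrow D(x,y)$ equates the extension of a compound path with that of $D$ but places no bound on the out-degree of $D$, so no finite chain of such equations directly yields functionality. The paper's solution (following Gr\"adel's undecidability argument for the guarded fragment with transitive relations) is to split $D$ into two alternating relations $D_0, D_1$ such that each node has successors in exactly one of them and those successors only occur as sources of the other; to introduce $E_i$ as the reflexive symmetric closure of $D_i$ (built with \Z, \A\ and inverses, pinned down by an \E-equation) and force each $E_i$ to be transitive by the \E-equation $\forall y \ldotp (\exists z \ldotp E_i(x,z) \wedge E_i(z,y)) \leftrightarrow E_i(x,y)$; functionality of $D$ then follows by a contradiction argument (two distinct $D_i$-successors $b,c$ of $a$ would give $E_i(b,a)$ and $E_i(a,c)$ but not $E_i(b,c)$, violating transitivity). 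This alternation-plus-closure idea is the missing ingredient, and it is also what accounts for the count of $8$ relations in the statement; without it your proof covers only four of the five fragments.
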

\begin{proof}
  The main idea behind the proof is to embed a tiling system into a model of a
  particular \SCL sentence $\varphiSnt$ that is satisfiable \iff the tiling
  system allows for an admissible tiling.
  The hardest part in the reduction consists in the definition of a satisfiable
  sentence all of whose models homomorphically contain the infinite grid of the
  tiling problem. In other words, this sentence should admit an infinite square
  grid graph as a minor of the model unwinding. Given that, the remaining part
  of the reduction can be carried out in the base language \X.

  Independently of the fragment we choose to prove undecidable, consider the
  following definition:
  \[
    \varphiSnt \defeq \Big( \bigvee_{\tElm \in \TSet} \isASym(0, \tElm) \Big)
    \wedge \Big( \bigwedge_{\tElm \in \TSet} \forall \varElm \ldotp
    \isASym(\varElm, \tElm) \rightarrow (\psiFrm[T][\tElm](\varElm) \wedge
    \psiFrm[G](\varElm)) \Big).
  \]
  Intuitively, the first conjunct ensures the existence of the point $0$, \ie,
  the origin of the grid, labelled by some arbitrary tile in the set $\TSet$.
  Notice that $\TSet$ is lifted to a set of constants in \SCL.
  The second conjunct, then, states that all points $\varElm$, labelled by some
  tile $\tElm$, need to satisfy the properties expressed by the two monadic
  formulae $\psiFrm[T][\tElm](\varElm)$ and $\psiFrm[G](\varElm)$.
  The first one, called \emph{tiling formula}, is used to ensure the
  admissibility of the tiling, while the second one, called \emph{grid formula},
  forces all models of $\varphiSnt$ to necessarily embed a grid.

  \[
  \begin{aligned}
    {\psiFrm[T][\tElm](\varElm) \defeq\;}
      & {\bigwedge_{\tElm' \in \TSet}^{\tElm' \neq \tElm} \neg \isASym(\varElm,
        \tElm')} \\
    \wedge\;
      & {\left( \forall \yvarElm \ldotp \HSym(\varElm, \yvarElm) \rightarrow
        \bigvee_{(\tElm, \tElm') \in \HRel} \isASym(\yvarElm, \tElm') \right)
        \wedge \left( \forall \yvarElm \ldotp \VSym(\varElm, \yvarElm)
        \rightarrow \bigvee_{(\tElm, \tElm') \in \VRel} \isASym(\yvarElm,
        \tElm') \right)}.
  \end{aligned}
  \]
  The first conjunct of the tiling formula $\psiFrm[T][\tElm](\varElm)$ verifies
  that the point associated with the argument $\varElm$ is labelled by no other
  tile than $\tElm$ itself.
  The second part, instead, ensures that the points $\yvarElm$ on the right or
  above of $\varElm$ are labelled by some tile $\tElm'$ which is compatible with
  $\tElm$, \wrt the constraints imposed by the horizontal $\HRel$ and vertical
  $\VRel$ matching relations, respectively.
  Notice here that the relation symbols $\HSym$ and $\VSym$ are the syntactic
  counterpart of $\HRel$ and $\VRel$, respectively.

  At this point, we can focus on the grid formula $\psiFrm[G](\varElm)$ defined
  as follows:
  \[
    \psiFrm[G](\varElm) \defeq \left( \exists \yvarElm \ldotp \HSym(\varElm,
    \yvarElm) \right) \wedge \left( \exists \yvarElm \ldotp \VSym(\varElm,
    \yvarElm) \right) \wedge \gammaFrm(\varElm).
  \]
  The first two conjuncts guarantee the existence of an horizontal and vertical
  adjacent of the point $\varElm$, while the subformula $\gammaFrm(\varElm)$,
  whose definition depends on the considered fragment of \SCL, needs to enforce
  the fact that $\varElm$ is the origin of a square.
  In other words, this means that, going horizontally and then vertically or,
  \viceversa, vertically and then horizontally, the same point is reached.
  To do this, we make use of the two \S-path-formulae
  $\piFrm[\HSym\VSym](\varElm, \yvarElm) \defeq\exists \zvarElm \ldotp
  (\HSym(\varElm, \zvarElm) \wedge \VSym(\zvarElm, \yvarElm))$ and
  $\piFrm[\VSym\HSym](\varElm, \yvarElm) \defeq \exists \zvarElm \ldotp
  (\VSym(\varElm, \zvarElm) \wedge \HSym(\zvarElm, \yvarElm))$.
  In some cases, we also consider the \S\A-path-formula $\piFrm[\DSym](\varElm,
  \yvarElm) \defeq \piFrm[\HSym\VSym](\varElm, \yvarElm) \vee
  \piFrm[\VSym\HSym](\varElm, \yvarElm)$ combining the previous ones, which
  implicitly define a diagonal relation.
  We now proceed by a case analysis on the specific fragment.
  \begin{itemize}
    \item
      \textsf{\textbf{[$\S\O$]}}
      By assuming the existence of a non-empty relation $\DSym$ connecting a
      point with its opposite in the square, \ie, the diagonal point, we can
      express the fact that all points reachable through $\piFrm[\HSym\VSym]$ or
      $\piFrm[\VSym\HSym]$ are, actually, the same unique point:
      \[
      \begin{aligned}
        {\gammaFrm(\varElm) \defeq\;}
          & {\exists \yvarElm \ldotp \DSym(\varElm, \yvarElm)} \\
        \wedge\;
          & {\forall \yvarElm, \zvarElm \ldotp \piFrm[\HSym\VSym](\varElm,
            \yvarElm) \wedge \DSym(\varElm, \zvarElm) \rightarrow \yvarElm \leq
            \zvarElm} \\
        \wedge\;
          & {\forall \yvarElm, \zvarElm \ldotp \piFrm[\HSym\VSym](\varElm,
            \yvarElm) \wedge \DSym(\varElm, \zvarElm) \rightarrow \yvarElm \geq
            \zvarElm} \\
        \wedge\;
          & {\forall \yvarElm, \zvarElm \ldotp \piFrm[\VSym\HSym](\varElm,
            \yvarElm) \wedge \DSym(\varElm, \zvarElm) \rightarrow \yvarElm \leq
            \zvarElm} \\
        \wedge\;
          & {\forall \yvarElm, \zvarElm \ldotp \piFrm[\VSym\HSym](\varElm,
            \yvarElm) \wedge \DSym(\varElm, \zvarElm) \rightarrow \yvarElm \geq
            \zvarElm}.
      \end{aligned}
      \]
      The \S\O-formula $\gammaFrm(\varElm)$ ensures that the relation $\DSym$ is
      both non-empty and functional and that all points reachable via
      $\piFrm[\HSym\VSym]$ or $\piFrm[\VSym\HSym]$ are necessarily the single
      one reachable through $\DSym$.
    \item
      \textsf{\textbf{[$\S\A\C$]}}
      By applying a counting quantifier to the formula $\piFrm[\DSym]$, which
      encodes the union of the points reachable through $\piFrm[\HSym\VSym]$ or
      $\piFrm[\VSym\HSym]$, we can ensure the existence of a single diagonal
      point:
      \[
        \gammaFrm(\varElm) \defeq \neg \exists^{\geq 2} \yvarElm \ldotp
        \piFrm[\DSym](\varElm, \yvarElm).
      \]
    \item
      \textsf{\textbf{[$\S\E\C$]}}
      As for the \S\O fragment, here we use a diagonal relation $\DSym$, which
      needs to contain all and only the points reachable via
      $\piFrm[\HSym\VSym]$ or $\piFrm[\VSym\HSym]$.
      By means of the counting quantifier, we enforce its functionality:
      \[
      \begin{aligned}
        {\gammaFrm(\varElm) \defeq\;}
          & {\neg \exists^{\geq 2} \yvarElm \ldotp \DSym(\varElm, \yvarElm)} \\
        \wedge\;
          & {\forall \yvarElm \ldotp \piFrm[\HSym\VSym](\varElm, \yvarElm)
            \leftrightarrow \DSym(\varElm, \yvarElm)} \\
        \wedge\;
          & {\forall \yvarElm \ldotp \piFrm[\VSym\HSym](\varElm, \yvarElm)
            \leftrightarrow \DSym(\varElm, \yvarElm)}.
      \end{aligned}
      \]
    \item
      \textsf{\textbf{[$\S\E\O'$]}}
      This case is similar to the previous one, where the functionality of
      $\DSym$ is obtained by means of the \O' construct:
      \[
      \begin{aligned}
        {\gammaFrm(\varElm) \defeq\;}
          & {\forall \yvarElm, \zvarElm \ldotp \DSym(\varElm, \yvarElm) \wedge
            \DSym(\varElm, \zvarElm) \rightarrow \yvarElm \leq \zvarElm} \\
        \wedge\;
          & {\forall \yvarElm \ldotp \piFrm[\HSym\VSym](\varElm, \yvarElm)
            \leftrightarrow \DSym(\varElm, \yvarElm)} \\
        \wedge\;
          & {\forall \yvarElm \ldotp \piFrm[\VSym\HSym](\varElm, \yvarElm)
            \leftrightarrow \DSym(\varElm, \yvarElm)}.
      \end{aligned}
      \]
    \item
      \textsf{\textbf{[$\S\Z\A\E$]}}
      The proof for this final case is inspired by the one proposed for the
      undecidability of the guarded fragment extended with transitive closure of
      binary relations~\citep{Gra99a}.
      This time, the functionality of the diagonal relation $\DSym$ is
      indirectly ensured by the conjunction of the four formulae
      $\gammaFrm[1](\varElm)$, $\gammaFrm[2](\varElm)$, $\gammaFrm[3](\varElm)$,
      and $\gammaFrm[4](\varElm)$ that exploit all the features of the fragment:
      \[
      \begin{aligned}
        {\gammaFrm(\varElm) \defeq\;}
          & {\gammaFrm[1](\varElm) \wedge \gammaFrm[2](\varElm) \wedge
            \gammaFrm[3](\varElm) \wedge \gammaFrm[4](\varElm)} \\
        \wedge\;
          & {\forall \yvarElm \ldotp \piFrm[\DSym](\varElm, \yvarElm)
            \leftrightarrow \DSym(\varElm, \yvarElm)},
      \end{aligned}
      \]
      where
      \[
      \begin{aligned}
        {\gammaFrm[1](\varElm) \defeq\;}
          & {\forall \yvarElm \ldotp \left( \bigvee_{i \in \{ 0, 1 \}}
            \DSym[i](\varElm, \yvarElm) \right) \leftrightarrow \DSym(\varElm,
            \yvarElm)}, \\
        {\gammaFrm[2](\varElm) \defeq\;}
          & {\left( \bigvee_{i \in \{ 0, 1 \}} \neg \exists \yvarElm \ldotp
            \DSym[i](\varElm, \yvarElm) \right) \wedge \left( \bigwedge_{i \in
            \{ 0, 1 \}} \forall \yvarElm \ldotp \DSym[i](\varElm, \yvarElm)
            \rightarrow \exists \zvarElm \ldotp \DSym[1 - i](\yvarElm, \zvarElm)
            \right)}, \\
        {\gammaFrm[3](\varElm) \defeq\;}
          & {\bigwedge_{i \in \{ 0, 1 \}} \forall \yvarElm \ldotp \left( \varElm
            = \yvarElm \vee \DSym[i](\varElm, \yvarElm) \vee
            \DSym[i]^{-}(\varElm, \yvarElm) \right) \leftrightarrow
            \ESym[i](\varElm, \yvarElm)}, \text{ and} \\
        {\gammaFrm[4](\varElm) \defeq\;}
          & {\bigwedge_{i \in \{ 0, 1 \}} \forall \yvarElm . (\exists \zvarElm
            \ldotp (\ESym[i](\varElm, \zvarElm) \wedge \ESym[i](\zvarElm,
            \yvarElm))) \leftrightarrow \ESym[i](\varElm, \yvarElm)}.
      \end{aligned}
      \]
      Intuitively, $\gammaFrm[1]$ asserts that $\DSym$ is the union of the two
      accessory relations $\DSym[0]$ and $\DSym[1]$, while $\gammaFrm[2]$
      guarantees that a point can only have adjacents \wrt just one relation
      $\DSym[i]$ and that these adjacents can only appear as first argument of
      the opposite relation $\DSym[1 - i]$.
      In addition, $\gammaFrm[3]$ ensures that the additional relation
      $\ESym[i]$ is the reflexive symmetric closure of $\DSym[i]$ and
      $\gammaFrm[4]$ forces $\ESym[i]$ to be transitive too.

      We can now prove that the relation $\DSym$ is functional.
      Suppose by contradiction that this is not case, \ie, there exist values
      $a$, $b$, and $c$ in the domain of the model of the sentence $\varphiSnt$,
      with $b \neq c$ such that both $\DSym(a, b)$ and $\DSym(a, c)$ hold true.
      By the formula $\gammaFrm[1]$ and the first conjunct of $\gammaFrm[2]$, we
      have that $\DSym[i](a, b)$ and $\DSym[i](a, c)$ hold for exactly one index
      $i \in \{ 0, 1 \}$.
      Thanks to the full $\gammaFrm[2]$, we surely know that $a \neq b$, $a \neq
      c$, and neither $\DSym[i](b, c)$ nor $\DSym[i](c, b)$ can hold.
      Indeed, if $a = b$ then $\DSym[i](a, a)$. This in turn implies $\DSym[1 -
      i](a, d)$ for some value $d$ due to the second conjunct of $\gammaFrm[2]$.
      Hence, there would be pairs with the same first element in both relations,
      trivially violating the first conjunct of $\gammaFrm[2]$.
      Similarly, if $\DSym[i](b, c)$ holds, then $\DSym[1 - i](c, d)$ needs to
      hold as well, for some value $d$, leading again to a contradiction.
      Now, by the formula $\gammaFrm[3]$, both $\ESym[i](b, a)$ and $\ESym[i](a,
      c)$ hold, but $\ESym[i](b, c)$ does not.
      However, this clearly contradicts $\gammaFrm[4]$.
      As a consequence, $\DSym$ is necessarily functional.
  \end{itemize}
  Now, it is not hard to see that the above sentence $\varphiSnt$ (one for each
  fragment) is satisfiable \iff the domino instance on which the reduction is
  based on is solvable.
  Indeed, on the one hand, every compatible tiling $\tilFun \colon \SetN \times
  \SetN \to \TSet$ of a tiling system $\tuple {\TSet} {\HRel} {\VRel}$ induces a
  grid model that trivially satisfies $\varphiSnt$.
  On the other hand, a model of $\varphiSnt$ necessarily embed a grid whose
  points are labelled by tiles satisfying the horizontal and vertical relations.
\end{proof}

\begin{theorem}
  \label{thm:consat(und)}
  The formula satisfiability problems of the non-recursive $\S\T\O$, $\S\A\T\C$,
  $\S\T\E\C$, $\S\T\E\O'$, and $\S\Z\A\T\E$ fragments of \SCL are undecidable,
  even with a bounded (at least $4$, $3$, $4$, $4$, and $8$, respectively)
  number of binary relations.
\end{theorem}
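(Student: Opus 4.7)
The plan is to adapt the sentence-level reductions of Theorem~\ref{thm:sntsat(und)} to the formula setting by using the \T construct to simulate the top-level universal quantifier $\forall \varElm \ldotp \isASym(\varElm, \tElm) \rightarrow \ldots$ that is available in sentences but absent from the $\psiFrm$ grammar. The free variable of the formula plays the role of the grid origin, and a \T-based path expression reaches every other grid point, on which the tiling and grid constraints are then imposed via the standard $\forall \yvarElm \ldotp \piFrm(\varElm, \yvarElm) \rightarrow \ldots$ encoded as $\neg \exists \yvarElm \ldotp \piFrm(\varElm, \yvarElm) \wedge \neg \ldots$.

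The key ingredient is a path $\piFrm(\varElm, \yvarElm)$ that, starting from the origin, reaches every node homomorphically embedded as a grid point. For the two fragments containing alternation, namely $\S\A\T\C$ and $\S\Z\A\T\E$, I let
\[
  \piFrm(\varElm, \yvarElm) \defeq (\HSym(\varElm, \yvarElm) \vee \VSym(\varElm, \yvarElm))^{\star},
\]
i.e.\ the reflexive-transitive closure of the union of the two grid relations. For the remaining fragments $\S\T\O$, $\S\T\E\C$, and $\S\T\E\O'$, where \A is unavailable, I instead take the sequential decomposition
\[
  \piFrm(\varElm, \yvarElm) \defeq \exists \zvarElm \ldotp (\HSym(\varElm, \zvarElm))^{\star} \wedge (\VSym(\zvarElm, \yvarElm))^{\star},
\]
which reaches any point $(i,j)$ from the origin via $i$ horizontal and then $j$ vertical steps and uses only the \S and \T operators available in every fragment in question.

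Given such a $\piFrm$, I rewrite the sentence $\varphiSnt$ of Theorem~\ref{thm:sntsat(und)} as the \SCL formula
\[
  \trn{\psi}(\varElm) \defeq \Big(\bigvee_{\tElm \in \TSet} \isASym(\varElm, \tElm)\Big) \wedge \bigwedge_{\tElm \in \TSet} \neg \exists \yvarElm \ldotp \piFrm(\varElm, \yvarElm) \wedge \isASym(\yvarElm, \tElm) \wedge \neg\big(\psiFrm[T][\tElm](\yvarElm) \wedge \psiFrm[G](\yvarElm)\big),
\]
reusing the fragment-specific definitions of $\psiFrm[T][\tElm]$ and $\psiFrm[G]$ (with its subformula $\gammaFrm$) verbatim from the proof of Theorem~\ref{thm:sntsat(und)}. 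Since those already have a single free variable and belong to the relevant sub-grammar, $\trn{\psi}(\varElm)$ sits in the announced fragment. A valuation satisfying $\trn{\psi}$ pins $\varElm$ to the origin of a grid every reachable point of which is labelled by a tile compatible with its $\HSym$- and $\VSym$-neighbours, whence the tiling instance is solvable iff $\trn{\psi}$ is formula-satisfiable. No new binary relations are introduced, so the relation bounds from Theorem~\ref{thm:sntsat(und)} transfer unchanged.

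The main obstacle will be the case without \A, where $(\HSym \cup \VSym)^{\star}$ cannot be written directly: I must argue that the sequential closure $\HSym^{\star} \cdot \VSym^{\star}$ still witnesses every grid point, even when the model is only a homomorphic image of the integer grid and may carry spurious $\HSym$- or $\VSym$-edges to non-grid nodes -- these are harmless since imposing the tiling constraints on them only strengthens the formula. A secondary subtlety concerns whether $^{\star}$ is reflexive-transitive or strictly transitive closure: in the former case the construction is exactly as stated, while in the latter I would insert \Z-style identities in the fragments that admit \Z, and otherwise split $\trn{\psi}$ into the boundary cases $i = 0$, $j = 0$, and $i, j \geq 1$, asserting the tiling conditions for each separately.
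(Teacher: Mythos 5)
Your proposal is correct and follows essentially the same route as the paper: the paper's proof also reuses the sentence-level reduction of Theorem~\ref{thm:sntsat(und)} verbatim and replaces the top-level universal quantification by a guarded quantification over the \T-path $\exists \zvarElm \ldotp (\HSym(\varElm, \zvarElm))^{\star} \wedge (\VSym(\zvarElm, \yvarElm))^{\star}$ from the free variable, which it uses uniformly for all five fragments (so your $(\HSym \vee \VSym)^{\star}$ variant for the \A-fragments is an inessential variation), and the relation bounds carry over exactly as you observe.
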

\begin{proof}
  The proof of this theorem builds on top of the one of the previous result, by
  showing that, with the addition of the transitive closure operator, we can
  encode the solution of a domino problem as the existence of a
  constant satisfying the following \SCL formula $\psiFrm(\varElm)$, where the
  relation symbols $\HSym$ and $\VSym$ and the tiling and grid formulae
  $\psiFrm[T][\tElm]$ and $\psiFrm[G]$ are defined as in
  Theorem~\ref{thm:sntsat(und)}:
  \[
  \begin{aligned}
    {\psiFrm(\varElm) \defeq\;}
      & {\left( \bigvee_{\tElm \in \TSet} \isASym(\varElm, \tElm) \right)} \\
    \wedge\;
      & {\forall \yvarElm \ldotp \left( \left( \exists \zvarElm \ldotp
        (\HSym(\varElm, \zvarElm))^{*} \wedge (\VSym(\zvarElm, \yvarElm))^{*}
        \right) \rightarrow \left( \bigwedge_{\tElm \in \TSet} \isASym(\yvarElm,
        \tElm) \rightarrow (\psiFrm[T][\tElm](\yvarElm) \wedge
        \psiFrm[G](\yvarElm)) \right) \right)}.
  \end{aligned}
  \]
  Intuitively, the formula $\psiFrm(\varElm)$ is satisfied by a constant
  $\conElm$ if this element is labelled by a tile in $\TSet$ and every other
  element $\yvarElm$, reachable from $\conElm$ via an arbitrary numbers of
  horizontal steps followed by another arbitrary number of vertical steps,
  satisfies both the tiling and grid formulae.
  Obviously, $\psiFrm(\varElm)$ is satisfied at the root of a grid model induced
  by a compatible tiling $\tilFun \colon \SetN \times \SetN \to \TSet$ of a
  tiling system $\tuple {\TSet} {\HRel} {\VRel}$.
  Indeed, every node in the grid is reachable from the root by following a
  \emph{first-horizontal then-vertical path}.
  Moreover, its labelling is coherent with what is prescribed by the two
  matching relations $\HRel$ and $\VRel$, so, $\psiFrm[T][\tElm](\yvarElm)$
  necessarily holds at every node of the grid.
  \Viceversa, every structure satisfying $\psiFrm(\conElm)$ induces a compatible
  tiling, as the set of elements reachable from $\conElm$ form a grid, due to
  the formula $\psiFrm[G]$, and are suitably labelled thanks to the formula
  $\psiFrm[T][\tElm]$.
\end{proof}

We now prove the undecidability of the non-recursive $\S\E$ fragment of \SCL.
Observe that, even if this statement directly subsumes some of the results
reported in Theorems~\ref{thm:sntsat(und)} and~\ref{thm:consat(und)}, it does so
in a weak way, as the family of sentences defined in the reduction below does
require an unbounded number of binary relations.

\begin{theorem}
  \label{thm:se(und)}
  The satisfiability problem for both sentences and formulae of the
  non-recursive $\S\E$ fragment of \SCL is undecidable.
\end{theorem}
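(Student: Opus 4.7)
The plan is to reduce from the subsumption problem for \emph{role-value maps} in description logic, which is known to be undecidable even in very restricted forms~\cite{Sch89,HS04}. A role-value map is an axiom asserting the equality of two compositions of binary roles, \ie, of the form $R_1 \circ \cdots \circ R_n \equiv S_1 \circ \cdots \circ S_m$, and deciding whether a finite set of such axioms entails a further subsumption $R_1 \circ \cdots \circ R_n \sqsubseteq S_1 \circ \cdots \circ S_m$ between role chains is $\Pi^0_1$-hard.

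The key observation is that combining a sequence path (feature \S) with property-pair equality (feature \E) yields, at the level of a single \SCL\ constraint axiom, an expression of the form $\forall \varElm \ldotp \hasShapeElm(\varElm) \leftrightarrow \forall \yvarElm \ldotp \piFrm(\varElm, \yvarElm) \leftrightarrow R(\varElm, \yvarElm)$, in which $\piFrm$ is an iterated \S-composition of binary relations and $R$ is atomic. Whenever $\hasShapeElm$ is forced to hold on every relevant element, this axiom collapses to the global equation $\forall \varElm, \yvarElm \ldotp \piFrm(\varElm, \yvarElm) \leftrightarrow R(\varElm, \yvarElm)$, \ie, exactly one role-value-map equation between a composition and a single role.

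Concretely, for each prefix of length $i$ of a role chain in the instance I would introduce a fresh binary relation $T_i$ (and analogously $U_j$ for the other side) and write the single \S\E\ axiom $\forall \yvarElm \ldotp (\exists \zvarElm \ldotp T_{i-1}(\varElm, \zvarElm) \wedge R_i(\zvarElm, \yvarElm)) \leftrightarrow T_i(\varElm, \yvarElm)$, which defines $T_i$ as the composition $T_{i-1} \circ R_i$; the base cases $T_0$ and $U_0$ are identified with a distinguished diagonal relation. The original chain equality then reduces to the single \E\ axiom $\forall \yvarElm \ldotp T_n(\varElm, \yvarElm) \leftrightarrow U_m(\varElm, \yvarElm)$. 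Non-entailment of a queried subsumption is converted into satisfiability by adding two fresh constants $a, b$ together with node-target shapes forcing $T_n(a, b)$ to hold and $U_m(a, b)$ to fail, both expressible as base-language constraints over the auxiliary relations. Satisfiability of the resulting non-recursive \S\E\ sentence then coincides with non-entailment of the queried subsumption, giving undecidability of sentence satisfiability; notice that this construction inherently requires a number of binary relations growing with the instance.

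The main obstacle is propagating the \E-equations universally across the model, since \SCL\ offers no explicit universal target axiom. The plan is to install the shape $\hasShapeElm$ associated with each \E-axiom simultaneously under subjects-of and objects-of targets on all auxiliary relations $T_i$, $U_j$, $R_i$, $S_j$ appearing in the encoding, so that every element reached by any of the composed relations inherits the relevant equations and the role-value-map algebra is faithfully simulated on the whole reachable portion of the model. The identical reduction, recast into a single non-recursive \S\E-constraint with one free variable rooted at the distinguished constant $a$ so that the chain of definitions is triggered from within the formula itself, yields the corresponding undecidability result at the level of formulae.
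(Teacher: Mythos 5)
Your reduction starts from the same source problem as the paper (the subsumption problem for role-value maps of \cite{Sch89}) and uses essentially the same core encoding: each composition of base roles is equated with an atomic relation through an \E-axiom whose left-hand side is an \S-path, so that the role-value-map equations become \SCL constraints. For the \emph{sentence} case your route genuinely differs in how the equations are made to hold everywhere: you propagate the defining shape via subjects-of/objects-of target axioms over all relations occurring in the encoding, whereas the paper relativises everything to the successors of a single auxiliary relation $R$ and adds closure formulae $\delta_i(x) = \forall y . (\exists z . R(x,z) \wedge R_i(z,y)) \leftrightarrow R(x,y)$ so that the set of $R$-successors of the root is closed under every base role. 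Your target-based propagation is sound for sentences (any element carrying a $T_i$- or $R_i$-edge is itself targeted, so spurious tuples are excluded and genuine compositions are forced), and the witness $T_n(a,b) \wedge \neg U_m(a,b)$ is expressible in the base language, so the sentence half is workable; the only wrinkle is your base case: the \S\E fragment cannot axiomatise a ``diagonal'' relation (that would require \Z), so you should instead start the induction at $T_1$ by equating it with the first atomic role, and treat empty chains separately.

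The genuine gap is the \emph{formula} half of the statement. Target axioms belong to the sentence grammar, not to the constraint subgrammar $\psi(x)$, so your propagation mechanism is simply unavailable inside a single one-free-variable formula; and since the paper derives sentence undecidability \emph{from} formula undecidability (by wrapping $\psi$ as $\Sigma(c) \wedge \forall x \ldotp \Sigma(x) \leftrightarrow \psi(x)$), your implication runs in the unhelpful direction. Saying the reduction can be ``recast into a single constraint rooted at $a$ so that the chain of definitions is triggered from within the formula itself'' does not explain how the defining equations reach every element of an unboundedly long role chain without targets and without transitive closure (\T is excluded from the fragment). This is precisely what the paper's $\delta_i$ device supplies: a single rooted formula that forces the equations at all $R$-successors of the root while forcing that successor set to be closed under all base roles, which is the content of Lemma 3.1 of \cite{Sch89}. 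Without that, or an equivalent closure trick expressible in the constraint grammar, the formula case remains unproved.
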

\begin{proof}
  It has been proved that the \emph{subsumption problem} of role-value maps in
  description logic is undecidable~\cite{Sch89}, via a reduction from the
  \emph{word problem} of groups.
  This specific problem can be formalised by means of a
  \emph{constraint-satisfaction problem} as follows, where we consider the $n$
  binary relations $\set{ \RSym[i] }{ i \!\in\! \numcc{1}{n} }$ and the $2(m +
  1)$ binary relations $\set{ \PSym[i], \QSym[i] }{ i \!\in\! \numcc{0}{m} }$ as
  vocabulary.

  \begin{problem}
    Decide whether the set of constraints $\{ \PSym[0] \neq \QSym[0] \} \cup
    \set{ \PSym[i] = \QSym[i] }{ i \in \numcc{1}{m} }$ is satisfiable, under the
    proviso that, for all $i \in \numcc{0}{m}$, it holds that $\PSym[i] =
    \RSym[j_{\PSym[i]}^{1}] \cmp \ldots \cmp \RSym[j_{\PSym[i]}^{k_{\PSym[i]}}]$
    and $\QSym[i] \!=\! \RSym[j_{\QSym[i]}^{1}] \cmp \ldots \cmp
    \RSym[j_{\QSym[i]}^{k_{\QSym[i]}}]$, for some $j_{\PSym[i]}^{1}, \ldots,
    j_{\PSym[i]}^{k_{\PSym[i]}} \!\in\! \numcc{1}{n}$ and $j_{\QSym[i]}^{1},
    \ldots, j_{\QSym[i]}^{k_{\QSym[i]}} \!\in\! \numcc{1}{n}$, with
    $k_{\PSym[i]}, k_{\QSym[i]} \!\in\! \SetN$.
  \end{problem}

  The author of~\cite{Sch89} has shown that the above problem can be encoded in
  \ALC extended with \emph{role composition}, by considering an additional
  binary relation $\RSym$ as technical device.
  By unravelling the \FOL semantics of this encoding, we obtain the reduction of
  the problem to the (un)satisfiability of the formula $\psiFrm(\varElm) \defeq
  \left( \bigwedge_{i = 1}^{n} \deltaFrm[i](\varElm) \right) \wedge \left( \neg
  \psiFrm[0](\varElm) \wedge \bigwedge_{i  = 1}^{m} \psiFrm[i](\varElm) \right)
  \wedge \bigwedge_{i = 0}^{m} \left( \gammaFrm[{\PSym[i]}](\varElm) \!\wedge\!
  \gammaFrm[{\QSym[i]}](\varElm) \right)$, whose components are defined as
  follows, with ${\ZElm \!\in\! \set{ \PSym[i], \QSym[i] \!}{\!i \!\in\!
  \numcc{0}{m} }}$:
  \[
  \begin{aligned}
    {\deltaFrm[i](\varElm) \defeq\;}
      & {\forall \yvarElm \ldotp \left( \left( \exists \zvarElm \ldotp
        \RSym(\varElm, \zvarElm) \wedge \RSym[i](\zvarElm, \yvarElm) \right)
        \leftrightarrow \RSym(\varElm, \yvarElm) \right)}; \\
    {\psiFrm[i](\varElm) \defeq\;}
      & {\forall \yvarElm \ldotp \RSym(\varElm, \yvarElm) \rightarrow \left(
        \forall \zvarElm \ldotp \PSym[i](\yvarElm, \zvarElm) \leftrightarrow
        \QSym[i](\yvarElm, \zvarElm) \right)}; \\
    {\gammaFrm[\ZElm](\varElm) \defeq\;}
      & {\forall \yvarElm \ldotp \RSym(\varElm, \yvarElm) \rightarrow \left(
        \forall \zvarElm \ldotp \gammaFrm[\ZElm](\yvarElm, \zvarElm)
        \leftrightarrow \ZElm(\yvarElm, \zvarElm) \right)}; \\
    {\gammaFrm[\ZElm](\yvarElm, \zvarElm) \defeq\;}
      & {\exists \wvarElm[k_{\ZElm} - 1] \ldotp \left( \ldots \left( \exists
      \wvarElm[1] \ldotp \RSym[j_{\ZElm}^{1}](\yvarElm, \wvarElm[1]) \wedge
      \RSym[j_{\ZElm}^{2}](\wvarElm[1], \wvarElm[2]) \right) \ldots \right)
      \wedge \RSym[j_{\ZElm}^{k_{\ZElm}}](\wvarElm[k_{\ZElm} - 1], \zvarElm)}.
  \end{aligned}
  \]
  Now, it is evident that $\psiFrm[i](\varElm)$ just uses the $\E$ construct,
  while $\deltaFrm[i](\varElm)$ and $\gammaFrm[\ZElm](\varElm)$ exploit both the
  $\S$ and $\E$ constructs.
  No other special construct is applied, thus, the formula $\psiFrm(\varElm)$
  belongs to the non-recursive $\S\E$ fragment of \SCL.

  Intuitively, the conjunct $\left( \neg \psiFrm[0](\varElm) \wedge \bigwedge_{i
  = 1}^{m} \psiFrm[i](\varElm) \right)$ models the set of constants $\{
  \PSym[0] \neq \QSym[0] \} \cup \set{ \PSym[i] = \QSym[i] }{ i \in
  \numcc{1}{m} }$, while $ \bigwedge_{i = 0}^{m} \left(
  \gammaFrm[{\PSym[i]}](\varElm) \!\wedge\! \gammaFrm[{\QSym[i]}](\varElm)
  \right)$ ensures the side conditions $\PSym[i] = \RSym[j_{\PSym[i]}^{1}] \cmp
  \ldots \cmp \RSym[j_{\PSym[i]}^{k_{\PSym[i]}}]$ and $\QSym[i] \!=\!
  \RSym[j_{\QSym[i]}^{1}] \cmp \ldots \cmp \RSym[j_{\QSym[i]}^{k_{\QSym[i]}}]$.
  Finally, the conjunct $\left( \bigwedge_{i = 1}^{n} \deltaFrm[i](\varElm)
  \right)$ is a technical expedient to guarantee the correctness of the
  reduction.

  At this point, Theorem 3.5 and, in particular, Lemma 3.1 of~\cite{Sch89}
  guarantee the undecidability of the class of formulae just described.
  As a consequence, the formula satisfiability problem for the non-recursive
  $\S\E$ fragment is necessarily undecidable.
  The same holds for the sentence satisfiability problem, by considering the
  non-recursive $\S\E$ sentence $\hasShapeElm(\conElm) \wedge \forall \varElm
  \ldotp \hasShapeElm(\varElm) \leftrightarrow \psiFrm(\varElm)$.
\end{proof}




\section{Conclusion}

In this article we have studied the satisfiability and containment problems for
\SHACL documents and shape constraints.
In order to do so, we examined several recursive semantics proposed in the
literature and proved that they all coincide for non-recursive documents.
We also proved that partial assignments semantics reduces to total assignments,
and focused on the latter. 
We then provided a complete translation between:
\begin{inparaenum}[(1)]
  \item
    non-recursive \SHACL and \SCL, a new fragment of first-order logic extended
    with counting quantifiers and transitive closure,
  \item
    recursive \SHACL and \MSCL, an extension of \SCL into a monadic second-order
    logic, where shape names become monadic second-order variables.
\end{inparaenum}
These translations into mathematical logic are effective since, firstly, they
offer a standard framework to model the language, contrary to previous \adhoc
modellings, and, secondly, they allow us to study several formal properties:
from capturing the semantics of filters (that have not been addressed in
literature before), to laying out a detailed map of \SHACL fragments for which
we are able to prove (un)decidability along with complexity results, for our
decision problems.
We also expose semantic properties and asymmetries within \SHACL which might
inform a future update of the W3C language specification.
Although the satisfiability and containment problems are both undecidable for
the full \SHACL, decidability can be achieved by restricting the usage of
certain \SHACL components, such as cardinality restrictions over shape or path
properties.

Nevertheless, the status of some weak fragments of \SHACL, such as \O and \S\C,
as well as the finite-satisfiability problem for the $\S\E$, $\S\O$, $\S\A\C$,
$\S\E\C$, $\S\E\O'$, and $\S\Z\A\E$ fragments remains an open question worthy of
further investigation.




\begin{section}*{Acknowledgements}

G. Konstantinidis was partially supported by The Alan Turing Institute under the EPSRC grant EP/N510129/1, and a
Turing Enhancement Project.
F. Mogavero has been partially supported by the GNCS 2020 project ``Ragionamento
Strategico e Sintesi Automatica di Sistemi Multi-Agente''.
The authors thank the anonymous reviewers for insightful comments that helped
improve the quality of the article, \eg, 
for proposing a proof
approach that directly led to the undecidability result for the \S\E\ fragment.

\end{section}


  \appendix



\section{Translation from SHACL to \SCL}
\label{app:a}

In this section we present our translation $\tau(\vShapeDocument)$ from a SHACL document \vShapeDocument\
(a set of SHACL shape definitions) into sentences of our \SCL\ grammar.
For the sake of completeness, we define our translation $\tau$, for any SHACL document.
However, it should be noted that within SHACL, the same constraint can sometimes be expressed with syntactically different,
but semantically equivalent expressions.
This syntactic detail is not relevant to our analysis of \SHACL, which is focused on semantics. Nevertheless, our formulation of
Theorem \ref{centralSCL2SHACLCorrespondenceTheorem} requires us to work with a ``standardised'' syntactic representation of \SHACL documents.
Thus, we restrict ourselves to \emph{standardised} SHACL documents, that we will define next.
In essence, a standardised SHACL document restricts the usage of these syntactic variations without affecting generality. Given any SHACL document, it is always possible to transform it into a standardised one in linear size and time.
\begin{definition}
A \emph{standardised} \SHACL document is a SHACL document that has the following properties:
\begin{enumerate}
    \item all shape names are identified by IRIs (instead of blank nodes);
    \item it does not contain the following terms: \sh{qualifiedValueShapesDisjoint}, \sh{in}, \sh{class}, \sh{minCount}, \sh{maxCount}, \sh{qualifiedMaxCount}, \sh{and}, \sh{or}, and \sh{xone};
    \item it does not contain triples with a subject that is a property shape, and a predicate that is one of the following: \sh{hasValue}, \sh{datatype}, \sh{nodeKind}, \sh{pattern}, \sh{node}, \sh{property},  \sh{minExclusive}, \sh{minInclusive}, \sh{maxExclusive}, \\ \sh{maxInclusive}, \sh{maxLength}, \sh{minLength} and \sh{not};
    \item triples with \sh{languageIn} as the property contain a list with a single element as the object;
\end{enumerate}
\end{definition}

The translation into \SCL\ grammar of a document \vShapeDocument\ is $\bigwedge_{\vshape \in \vShapeDocument} \tau(\vshape)$, where $\tau(\vshape)$ is the translation of a single SHACL shape $\vshape$ in \vShapeDocument .
Given a shape $\trip{\vshape}{\vshapet}{\vshapec}$, its translation  $\tau(\trip{\vshape}{\vshapet}{\vshapec})$ the following, where $\tau_{\vshapet, \vshape}$ and $\tau_{\vshapec, \vshape}$ are, respectively, the target and constraint axioms of the shape.
\[
\tau(\trip{\vshape}{\vshapet}{\vshapec}) = \tau_{\vshapet, \vshape} \wedge \tau_{\vshapec, \vshape}
\]

The  translation of axiom $\tau_{\vshapet, \vshape}$ is defined in Table \ref{tab:targets} if $\vshapet$ is not empty, or else it is $\top$.
We do not discuss implicit class-based targets, as they just represent a syntactic variant of class targets.
The translation $\tau_{\vshapec, \vshape}$ is the following, where $\tau(\vx, \vshape, \vshapec)$ is the unary formula that models the constraints $\vshapec$ of shape $\vshape$.

\[
\tau_{\vshapec, \vshape}  = \hasshape{\vx}{\sconst}   \leftrightarrow \tau(\vx, \vshape, \vshapec)
\]

In the reminder of this section we define how to compute $\tau(\vx, \vshape, \vshapec)$. The constraints of $\vshapec$ of a shape $\vshape$ in a SHACL document \vShapeDocument , is the set of triples that (1) have $\vshape$ as the subject in the RDF graph representing \vShapeDocument , or (2) define property paths or lists of elements.
As convention, we use \conc\ as an arbitrary constant and \constantList\ as an arbitrary list of constants.
We use \vshape ,  \vshape$^{\prime}$\ and \vshape$^{''}$ as shape names, and \vShapes\  as a list of shape names. Variables are defined as $\vx$, $\vy$ and $\vz$. Arbitrary paths are identified with $\tr$.

The translation of the constraints of a shape $\tau(\vx, \vshape, \vshapec)$ is defined in two cases as follows. The first case deals with the property shapes, which must have exactly one value for the $\sh{path}$ property. The second case deals with node shapes, which cannot have any value for the $\sh{path}$ property. Recall that we use notation \tripsquare{s}{p}{o} to represent an RDF triple with subject $S$, predicate $p$ and object $o$.

\[
\tau(\vx, \vshape, \vshapec) = \top \wedge \begin{cases}
      \bigwedge_{\forall \tripsquare{\vshape}{\vy}{\vz} \in \vshapec} \tau_2(\vx, \tr, \tripsquare{\vshape}{\vy}{\vz}) & \\
      \quad \qquad \text{if} \; \exists r .    \tripsquare{\vshape}{ \sh{path}}{ \tr}) \in \vshapec & \\
      \bigwedge_{\forall \tripsquare{\vshape}{\vy}{\vz} \in \vshapec} \tau_1(\vx, \tripsquare{\vshape}{\vy}{\vz})&\\
      \quad  \qquad \text{otherwise} &
    \end{cases}
\]

Next, we define the translations $\tau_1$ of node shapes triples, $\tau_2$ of property shape triples and $\tau_3$ of property paths.

\subsection{Translation of Node Shape Triples}

The translation of $\tau_1(\vx, \tripsquare{\vshape}{\vy}{\vz})$ is split in the following cases, depending on the predicate of the triple. In case none of those cases are matched $\tau_1(\vx, \tripsquare{\vshape}{\vy}{\vz}) \transeq \top$. The latter ensures that any triple not directly described in the cases below does not alter the truth value of the conjunction in the definition of $\tau(\vx, \vshape)$.

\begin{itemize}
    \item
        $\tau_1(\vx, \tripsquare{\vshape}{ \sh{hasValue}}{ \conc}) \transeq
        \vx = \conc $ .
    \item
        $\tau_1(\vx, \tripsquare{\vshape}{ \sh{in}}{\constantList}) \transeq \bigvee_{\conc \in \constantList}
        \vx = \conc $ .
    \item
        $\tau_1(\vx, \tripsquare{\vshape}{ \sh{class}}{ \conc}) \transeq
        \exists \vy . \isA(\vx, \vy) \wedge \vy = \conc $ .
    \item
        $\tau_1(\vx,\tripsquare{\vshape}{ \sh{datatype}}{ \conc})) \transeq
        \text{F}^{\, \hasdatatype = \conc}(\vx) $ .
    \item
        $\tau_1(\vx, \tripsquare{\vshape}{ \sh{nodeKind}}{ \conc}) \transeq
        \text{F}^{\, \isIRI}(\vx) $ if $c = $\sh{IRI};
        $\text{F}^{\, \isLiteral}(\vx) $ if \\$c = $\sh{Literal};
        $\text{F}^{\, \isBlank}(\vx) $ if $c = $\sh{BlankNode}. The translations for a $\conc$ that equals\\ \sh{BlankNodeOrIRI}, \sh{BlankNodeOrLiteral} or \sh{IRIOrLiteral} are trivially constructed by a conjunction of two of these three filters.
    \item
        $\tau_1(\vx, \tripsquare{\vshape}{ \sh{minExclusive}}{ \conc}) \transeq \vx > \conc
        $ if order is an interpreted relation, else $\text{F}^{\, > \conc}(\vx)$.
    \item
        $\tau_1(\vx, \tripsquare{\vshape}{ \sh{minInclusive}}{ \conc}) \transeq \vx \geq \conc
         $ if order is an interpreted relation, else $\text{F}^{\, \geq \conc}(\vx)$.
    \item
        $\tau_1(\vx, \tripsquare{\vshape}{ \sh{maxExclusive}}{ \conc}) \transeq  \vx < \conc
         $ if order is an interpreted relation, else $\text{F}^{\, < \conc}(\vx)$.
    \item
        $\tau_1(\vx, \tripsquare{\vshape}{ \sh{maxInclusive}}{ \conc}) \transeq \vx \leq \conc
         $ if order is an interpreted relation, else $\text{F}^{\, \leq \conc}(\vx)$.
    \item
        $\tau_1(\vx, \tripsquare{\vshape}{ \sh{maxLength}}{ \conc}) \transeq $  $ \text{F}^{\, \text{maxLength}=\conc}(\vx) $ .
    \item
        $\tau_1(\vx, \tripsquare{\vshape}{ \sh{minLength}}{ \conc}) \transeq $  $ \text{F}^{\, \text{minLength}=\conc}(\vx) $ .
    \item
        $\tau_1(\vx, \tripsquare{\vshape}{ \sh{pattern}}{ \conc}) \transeq $  $
        \text{F}^{\, \text{pattern}=\conc}(\vx) $ .
    \item
        $\tau_1(\vx, \tripsquare{\vshape}{ \sh{languageIn}}{ \constantList}) \transeq $  $  \bigvee_{\conc \in \constantList} F^{\text{languageTag = \conc}}(\vx)$ .
    \item
        $\tau_1(\vx, \tripsquare{\vshape}{ \sh{not}}{ \vshape^{\prime}}) \transeq $  $ \neg \hasshape{\vx}{\vshape^{\prime}}$ .
    \item
        $\tau_1(\vx, \tripsquare{\vshape}{ \sh{and}}{ \vShapes}) \transeq  $  $
        \bigwedge_{\vshape^{\prime} \in \vShapes} \hasshape{\vx}{\vshape^{\prime}}$ .
    \item
        $\tau_1(\vx, \tripsquare{\vshape}{ \sh{or}}{ \vShapes}) \transeq  $  $
        \bigvee_{\vshape^{\prime} \in \vShapes} \hasshape{\vx}{\vshape^{\prime}}$ .
    \item
        $\tau_1(\vx, \tripsquare{\vshape}{ \sh{node}}{ \vshape^{\prime}}) \transeq  $  $
        \hasshape{\vx}{\vshape^{\prime}}$ .
    \item
        $\tau_1(\vx, \tripsquare{\vshape}{ \sh{property}}{ \vshape^{\prime}}) \transeq  $  $
        \hasshape{\vx}{\vshape^{\prime}}$ .
\end{itemize}

\subsection{Translation of Property Shapes}

The translation of $\tau_2(\vx, \tr, \tripsquare{\vshape}{\vy}{\vz})$ is split in the following cases, depending on the predicate of the triple. In case none of those cases are matched $\tau_2(\vx, \tr, \tripsquare{\vshape}{\vy}{\vz}) \transeq \top$.

\begin{itemize}
    \item
         $\tau_2(\vx, \tr, \tripsquare{\vshape}{ \sh{hasValue}}{ \conc}) \transeq $  $
         \exists \vy . r(\vx, \vy) \wedge \tau_1(\vy, \tripsquare{\vshape}{ \sh{hasValue}}{ \conc}) $
    \item
        $\tau_2(\vx, \tr, \tripsquare{\vshape}{ p}{ \conc}) \transeq
        \forall \vy . \tau_3(\vx, \tr,\vy)) \rightarrow \tau_1(\vy, \tripsquare{\vshape}{ p}{ \conc}) $, if $p$ equal to one of the following: \sh{class}, \sh{datatype}, \sh{nodeKind}, \sh{minExclusive},  \sh{minInclusive}, \sh{maxExclusive}, \sh{maxInclusive}, \sh{maxLength}, \\ \sh{minLength}, \sh{pattern}, \sh{not}, \sh{and}, \sh{or}, \sh{xone},  \sh{node}, \\ \sh{property}, \sh{in} .
    \item
        $\tau_2(\vx, \tr, \tripsquare{\vshape}{ \sh{languageIn}}{ \constantList}) \transeq $ \\ $\null\qquad
        \forall \vy . \tau_3(\vx, \tr,\vy)) \rightarrow  \tau_1(\vy, \tripsquare{\vshape}{ \sh{languageIn}}{ \constantList}) $ .
    \item
        $\tau_2(\vx, \tr, \tripsquare{\vshape}{ \sh{uniqueLang}}{ \text{true}}) \transeq  \bigwedge_{\conc \in L} \neg \exists^{\geq 2} \vy . r(\vx, \vy) \wedge F^{\text{lang}=c}(\vy) $ \\  where $L = \{\conc | \conc \in \constantList \wedge \exists \vshape^{'} .  \tripsquare{\vshape^{'}}{\sh{languageIn}}{\constantList} \in \vShapeDocument\} \cup \{c^{\texttt{uniqueL}}\}$ and $c^{\texttt{uniqueL}}$ is a fresh unique constant. This translation is possible because \sh{languageIn} is the only constraint that can force language tags constraints on literals.%
    \item
        $\tau_2(\vx, \tr, \tripsquare{\vshape}{ \sh{minCount}}{\conc}) \transeq
        \exists^{\geq \conc} \vy . \tau_3(\vx, \tr,\vy)) $ .
    \item
        $\tau_2(\vx, \tr, \tripsquare{\vshape}{ \sh{maxCount}}{ \conc}) \transeq \neg
        \exists^{\leq \conc} \vy . \tau_3(\vx, \tr,\vy)) $ .
    \item
        $\tau_2(\vx, \tr, \tripsquare{\vshape}{ \sh{equals}}{ \conc}) \transeq  \forall \vy .  \tau_3(\vx, \tr,\vy) \leftrightarrow \tau_3(\vx, \conc,\vy)$ .
    \item
        $\tau_2(\vx, \tr, \tripsquare{\vshape}{ \sh{disjoint}}{ \conc}) \transeq   \neg \exists \vy .  \tau_3(\vx, \tr,\vy) \wedge \tau_3(\vx, \conc,\vy)$ .
    \item
        $\tau_2(\vx, \tr, \tripsquare{\vshape}{ \sh{lessThan}}{ \conc}) \transeq
        \forall \vy, \vz . \tau_3(\vx, \tr,\vy) \wedge \tau_3(\vx, \conc,\vz) \rightarrow \vy < \vz $ .
    \item
        $\tau_2(\vx, \tr, \tripsquare{\vshape}{ \sh{lessThanOrEquals}}{ \conc}) \transeq
        \forall \vy, \vz \,.\, \tau_3(\vx, \tr,\vy) \wedge \tau_3(\vx, \conc,\vz) \rightarrow \vy \leq \vz $ .
    \item
        $\tau_2(\vx, \tr, \tripsquare{\vshape}{ \sh{qualifiedValueShape}}{ \vshape^{\prime}}) \transeq \alpha \wedge \beta
        $ , where $\alpha$ and $\beta$ are defined as follows.
        Let $S^{'}$ be the set of \emph{sibling shapes} of $\vshape$ if $\vShapeDocument$ contains
         $\tripsquare{\vshape}{  \sh{qualifiedValueShapesDisjoint}}{ \text{true}}$, or the empty set otherwise. \\ Let $\nu(\vx) = \hasshape{\vx}{\vshape^{\prime}} \bigwedge_{\forall \vshape^{''} \in S^{'}} \neg \hasshape{\vx}{\vshape^{''}}$. If triple  $\tripsquare{\vshape}{ \sh{qualifiedMinCount}}{ \conc}$ is contained in $\vShapeDocument$, then $\alpha$ is equal to $\exists^{\geq \conc} \vy . \tau_3(\vx, \tr,\vy) \wedge \nu(\vx)$, otherwise $\alpha$ is equal to $\top$. If $\vShapeDocument$ contains the triple  $\tripsquare{\vshape}{ \sh{qualifiedMaxCount}}{ \conc}$, then $\beta$ is equal to $\neg \exists^{\leq \conc} \vy . \tau_3(\vx, \tr,\vy) \wedge \nu(\vx)$, otherwise $\beta$ is equal to $\top$.
    \item
        $\tau_2(\vx, \tr, \tripsquare{\vshape}{ \sh{closed}}{ \text{true}}) \transeq  \bigwedge_{\forall R \in \Theta} \neg \exists_{} \vy . R(\vx, \vy)$ if $\Theta$ is not empty, or else $\top$, where $\Theta$ is defined as follows. Let $\Theta^{\text{all}}$ be the set of all relation names in $\vShapeDocument$, namely $\Theta^{\text{all}} = \{R | \tripsquare{\vx}{R}{\vy} \in \vShapeDocument \}$. If this \FOL translation is used to compare multiple SHACL documents, such in the case of deciding containment, then $\Theta^{\text{all}}$ must be extended to contain all the relation names in all these SHACL documents. Let $\Theta^{\text{declared}}$ be the set of all the binary property names $\Theta^{\text{declared}} = \{R | \{\tripsquare{\vshape}{ \sh{property}}{ \vx} \wedge \tripsquare{\vx}{\sh{path}}{R)}\} \subseteq \vShapeDocument \}$. Let $R^{\texttt{closed}}$ be a unique fresh relation name, $\Theta^{\text{ignored}}$ be the set of all the binary property names declared as ``ignored'' properties, namely $\Theta^{\text{ignored}} = \{R | R \in \bar{R} \wedge \allowbreak \tripsquare{\vshape}{  \sh{ignoredProperties}}{ \bar{R}} \in \vShapeDocument\}$, where $\bar{R}$ is a list of IRIs. The set $\Theta$ can now be defined as $\Theta = (\Theta^{\text{all}}\cup \{R^{\texttt{closed}}\}) \setminus (\Theta^{\text{declared}} \cup \Theta^{\text{ignored}})$.

\end{itemize}

\subsection{Translation of Property Paths}

The translation $\tau_3(\vx, \tr,\vy))$ of any SHACL path $\tr$ is given by the following cases. For simplicity, we will assume that all property paths have been translated into an equivalent form having only simple IRIs within the scope of the inverse operator. Using SPARQL syntax for brevity, where the inverse operator is identified by the hat symbol $\hat \ $, the sequence path $\hat \ (r_1 / r_2)$ can be simplified into $\hat \ r_2 / \hat r_1$; an alternate path $\hat \ (r_1 \mid r_2)$ can be simplified into $\hat \ r_2 \mid \hat \ r_1$. We can simplify in a similar way zero-or-more, one-or-more and zero-or-one paths $\hat \ (r^{* / + / ?})$ into $(\hat \ r)^{* / + / ?}$.

\begin{itemize}
    \item
         If $\tr$ is an IRI $R$, then $\tau_3(\vx, \tr,\vy)) \transeq R(\vx, \vy)$
    \item
        If $\tr$ is an inverse path, with $\tr = $ ``\texttt{[ sh:inversePath $R$ ]}'', then \\ $\tau_3(\vx, \tr,\vy)) \transeq
        R^{-}(\vx, \vy) $
    \item
        If $\tr$ is a conjunction of paths, with $\tr = $ ``\texttt{( $\tr_1$, $\tr_2$, ..., $\tr_n$ )}'', then $\tau_3(\vx, \tr,\vy)) \transeq
        \exists \vz_1, \vz_2, ..., \vz_{n-1} . \; \tau_3(\vx, \tr_1,\vz_1)) \; \wedge \; \tau_3(\vz_1, \tr_2,\vz_2)) \; \wedge \; ... \\ \wedge \; \tau_3(\vz_{n-1}, \tr_2,\vy)) $
    \item
        If $\tr$ is a disjunction of paths, with $\tr = $ ``\texttt{[ sh:alternativePath ( $\tr_1$, $\tr_2$, ..., $\tr_n$ ) ]}'', then $\tau_3(\vx, \tr,\vy)) \transeq
        \tau_3(\vx, \tr_1,\vy)) \vee \tau_3(\vx, \tr_2,\vy)) \vee ... \vee \tau_3(\vx, \tr_n,\vy)) $
    \item
        If $\tr$ is a zero-or-more path, with $\tr = $ ``\texttt{[ \sh{zeroOrMorePath} $\tr_1$]}'', then $\tau_3(\vx, \tr,\vy)) \\ \transeq
        (\tau_3(\vx, \tr_1,\vy)))^{*}$
    \item
        If $\tr$ is a one-or-more path, with $\tr = $ ``\texttt{[ \sh{oneOrMorePath} $\tr_1$]}'', then $\tau_3(\vx, \tr,\vy)) \transeq \exists \vz . \tau_3(\vx, \tr_1,\vz)) \wedge (\tau_3(\vz, \tr_1,\vy)))^{*}$
    \item
        If $\tr$ is a zero-or-one path, with $\tr = $ ``\texttt{[ \sh{zeroOrOnePath} $\tr_1$]}'', then $\tau_3(\vx, \tr,\vy)) \transeq \vx = \vy \vee \tau_3(\vx, \tr_1,\vy))$
\end{itemize}




\section{Translation from \SCL\ to SHACL}
\label{app:b}

In this section we present the translation $\tau^{-}$, inverse of $\tau$, from sentences in the \SCL\ grammar into SHACL documents. We begin by defining the translation of the property path subgrammar $r(\vx, \vy)$ into SHACL property paths:
\begin{itemize}
    \item $\tau^{-}(R) \transeq R$
    
    \item $\tau^{-}(R^{-}) \transeq$\texttt{ [ \sh{inversePath} $R$ ] }
    
    \item $\tau^{-}(r^{\star}(\vx, \vy)) \transeq$\texttt{ [ \sh{zeroOrMorePath} $\tau^{-}(r(\vx, \vy))$ ] } 
    
    \item $\tau^{-}(\vx = \vy \vee r(\vx, \vy)) \transeq$\texttt{ [ \sh{zeroOrOnePath} $\tau^{-}(r(\vx, \vy))$ ] } %
    
    \item $\tau^{-}(r_1(\vx, \vy) \vee r_2(\vx, \vy)) \transeq$\texttt{ [ \sh{alternativePath} ( $\tau^{-}(r_1(\vx, \vy))$, \\ $\tau^{-}(r_2(\vx, \vy))$ ] ) } 
    
    \item $\tau^{-}(r_1(\vx, \vy) \wedge r_2(\vx, \vy)) \transeq$\texttt{ ( $\tau^{-}(r_1(\vx, \vy))$, $\tau^{-}(r_2(\vx, \vy))$ ) }
\end{itemize}

The translation of the constraint  subgrammar $\psi(\vx)$ is the following. we will use $\tau^{-}(\psi(\vx))$ to denote the SHACL translation of shape $\psi(\vx)$, and $\iota(\tau^{-}(\psi(\vx)))$ to denote the IRI corresponding to its shape name. To improve legibility, we omit set brackets around sets of RDF triples, and we represent them in Turtle syntax. For example, a set of RDF triples such as ``\texttt{$s$ a \sh{NodeShape} ; \sh{hasValue} \conc\ .} '' is to be interpreted as the set $\{\trip{s}{\rdf{type}}{\sh{NodeShape}}, \trip{s}{\sh{hasValue}}{c}\}$. When alternative translations are possible, the one listed first takes precedence. In other words, a translation in the following list is applied to a formula only if no translation listed before it are applicable.

\begin{itemize}
    \item $\tau^{-}(\top) \transeq  $ \\ \texttt{$s$ a \sh{NodeShape} . } 
    
    \item $\tau^{-}(\vx = \conc) \transeq  $ \\ \texttt{$s$ a \sh{NodeShape} ; \\
    \phantom{x} \sh{hasValue} \conc\ . 
    } 
    
    \item $ \tau^{-} \left( \bigwedge_{\conc \in L} \neg \exists^{\geq 2} \vy . r(\vx, \vy) \wedge F^{\text{lang}=c}(\vy) \right)$,  where $c^{\texttt{uniqueL}} \in L$ \\
    \texttt{$s$ a \sh{PropertyShape} ; \\
    \phantom{x} \sh{path} r ; \\
    \phantom{x} \sh{uniqueLang} true .} 
    
    \item $\tau^{-}(F(\vx)) \transeq $ \\ \texttt{$s$ a \sh{NodeShape} ; \\
    \phantom{x} $f$ $C$ . 
    } \\ Predicate $f$ and the RDF term $C$ is the filter function identified by $F$, namely one of the following: \sh{datatype}, \sh{nodeKind}, \sh{minExclusive}, \sh{minInclusive},  \sh{maxExclusive}, \sh{maxInclusive}, \sh{maxLength}, \\ \sh{minLength}, \sh{pattern}, \sh{languageIn}. Depending on the type of the filter, $C$ could be a literal, an IRI, or an RDF list with a single element.
    
    \item $\tau^{-}(\hasshape{\vx}{\vshape^{\prime}}) \transeq  $ \\ \texttt{$s$ a \sh{NodeShape} ; \\
    \phantom{x} \sh{node} $\vshape^{\prime}$ . 
    } \\ if  $\vshape^{\prime}$ is a node shape, else:\\
    \texttt{$s$ a \sh{NodeShape} ; \\
    \phantom{x} \sh{property} $\vshape^{\prime}$ . 
    }
    
    \item $\tau^{-}\left( \bigwedge_{\forall R \in \Theta} \neg \exists_{} \vy . R(\vx, \vy) \right)$ where $\Theta$ is a set of property relation names that includes $R^{\texttt{closed}}$, and $\Theta^{\texttt{list}}$ is the RDF list representation of all the property relation names in the \SCL\ formula that are not in $\Theta$  \\
    \texttt{$s$ a \sh{PropertyShape} ; \\ 
    \phantom{x} \sh{close} true ; \\
    \phantom{x} \sh{ignored} $\Theta^{\texttt{list}}$. } 
    
    \item $\tau^{-}(\neg \psi(\vx)) \transeq  $ \\ \texttt{$s$ a \sh{NodeShape} ; \\
    \phantom{x} \sh{not} $\iota(\tau^{-}(\psi(\vx)))$ . } 
    
    \item $\tau^{-}(\psi_1(\vx) \wedge \psi_2(\vx)) \transeq  $ \\ \texttt{$s$ a \sh{NodeShape} ; \\
    \phantom{x} \sh{and} ($\iota(\tau^{-}(\psi_1(\vx)))$, $\iota(\tau^{-}(\psi_2(\vx)))$) . }
    
    \item $\tau^{-}(\exists^{\geq n} \vy . r(\vx, \vy) \wedge \psi(\vy)) \transeq  $ \\ \texttt{$s$ a \sh{PropertyShape} ; \\
    \phantom{x} \sh{path} $\tau^{-}(r(\vx, \vy))$ ; \\ 
    \phantom{x} \sh{qualifiedValueShape} $\iota(\tau^{-}(\psi(\vy)))$ ; \\
    \phantom{x} \sh{qualifiedMinCount} $n$  .   
    }
    
    \item $\tau^{-}( \forall \vy . r(\vx, \vy) \leftrightarrow R(\vx, \vy) ) \transeq  $ \\ \texttt{$s$ a \sh{PropertyShape} ; \\
    \phantom{x} \sh{path} $\tau^{-}(r(\vx, \vy))$ ; \\ 
    \phantom{x} \sh{equals} $R$  .   
    }
    
    \item $\tau^{-}( \neg \exists \vy . r(\vx, \vy) \wedge R(\vx, \vy) ) \transeq  $ \\ \texttt{$s$ a \sh{PropertyShape} ; \\
    \phantom{x} \sh{path} $\tau^{-}(r(\vx, \vy))$ ; \\ 
    \phantom{x} \sh{disjoint} $R$ .   
    }    
    
    \item $\tau^{-}( \forall \vy, \vz . r(\vx, \vy) \wedge R(\vx, \vz) \rightarrow \vy < \vz ) \transeq  $ \\ \texttt{$s$ a \sh{PropertyShape} ; \\
    \phantom{x} \sh{path} $\tau^{-}(r(\vx, \vy))$ ; \\ 
    \phantom{x} \sh{lessThan} $R$ .   
    }   

    \item $\tau^{-}( \forall \vy, \vz . r(\vx, \vy) \wedge R(\vx, \vz) \rightarrow \vy \leq \vz ) \transeq  $ \\ \texttt{$s$ a \sh{PropertyShape} ; \\
    \phantom{x} \sh{path} $\tau^{-}(r(\vx, \vy))$ ; \\ 
    \phantom{x} \sh{lessThanOrEquals} $R$ .   
    }   
    
\end{itemize}

We now define the translation $\tau^{-}(\varphi)$ of a complete sentence of the $\varphi$-grammar into a SHACL document \vShapeDocument\ as follows.

\begin{itemize}
    \item $\tau^{-}(\varphi_1 \wedge \varphi_2) \transeq  \tau^{-}(\varphi_1) \cup \tau^{-}(\varphi_2)) $ 
    
    \item $\tau^{-}(\hasshape{\conc}{\sconst}) \transeq   $  \texttt{$s$  \sh{targetNode} \conc  . 
    }
    
    \item $\tau^{-}(\forall \vx . \; \isA(\vx, \conc) \rightarrow \hasshape{\vx}{\sconst}) \transeq   $  \texttt{$s$ \sh{targetClass} \conc .
    } 

    \item $\tau^{-}(\forall \vx, \vy . \; R(\vx, \vy) \rightarrow \hasshape{\vx}{\sconst}) \transeq   $  \texttt{$s$  \sh{targetSubjectsOf} $R$  .
    }     

    \item $\tau^{-}(\forall \vx, \vy . \; R^{-}(\vx, \vy) \rightarrow \hasshape{\vx}{\sconst}) \transeq   $  \texttt{$s$ \sh{targetObjectsOf} $R$  .
    }     
    
    \item $\tau^{-} \big(  \hasshape{\vx}{\sconst}  \leftrightarrow \psi(\vx)  \big) \transeq  \tau^{-}(\psi_1(\vx)) \;  $  
    such that $\sconst = \iota(\tau^{-}(\psi_1(\vx)))$ .

\end{itemize}


  \bibliographystyle{elsarticle-harv}
  \bibliography{References}

\end{document}